\newcommand*\diff{\mathop{}\!\mathrm{d}}
\newcommand{\diag}{\text{diag}}
\newcommand{\ev}{\mathbf{e}}
\newcommand{\xv}{\mathbf{x}}
\newcommand{\uv}{\mathbf{u}}
\newcommand{\yv}{\mathbf{y}}
\newcommand{\zv}{\phi}
\newcommand{\rv}{\gamma}
\newcommand{\rrv}{\mathbf{r}}
\newcommand{\Rv}{\Gamma}
\newcommand{\ov}{\mathbf{0}}
\newcommand{\sv}{\mathbf{s}}
\newcommand{\Xv}{\mathbf{X}}
\newcommand{\Zv}{\Phi}
\newcommand{\Nv}{\mathbf{N}}
\newcommand{\Vv}{\mathbf{V}}
\newcommand{\id}{\mathbf{I}}
\newcommand{\Rb}{\mathbb{R}}
\newcommand{\Eb}{\mathbb{E}}
\newcommand{\Pb}{\mathbb{P}}
\newcommand{\lv}{\mathbf{1}}
\newcommand{\betav}{\boldsymbol{\beta}}
\newcommand{\lambdav}{\boldsymbol{\lambda}}
\newcommand{\thetav}{\boldsymbol{\theta}}
\newcommand{\tcr}{\textcolor{black}}
\newcommand{\Dc}{\mathcal{D}}
\newcommand{\Ac}{\mathcal{A}}
\newcommand{\Tc}{\mathcal{T}}
\newcommand{\Nc}{\mathcal{N}}
\newcommand{\Cc}{\mathcal{C}}
\newcommand{\Xc}{\mathcal{X}}
\newcommand{\Zc}{\mathcal{Z}}
\newcommand{\Fc}{\mathcal{F}}
\newcommand{\Bc}{\mathcal{B}}
\newcommand{\Ec}{\mathcal{E}}
\newcommand{\Lc}{\mathcal{L}}
\newcommand{\E}{{\mathbb E}}
\newcommand{\Rd}{{\mathbb{R}^d}}
\newcommand{\TT}{\intercal}
\newtheorem{defn}{Definition}
\newtheorem{theorem}{Theorem}
\newtheorem{Lemma}{Lemma}
\newtheorem{Proposition}{Proposition}
\newtheorem{assump}{Assumption}
\DeclareMathOperator*{\minimize}{minimize\,}
\DeclareMathOperator*{\subjectto}{subject\,to\,}
\newcommand{\congr}[1]{{\color{blue}#1}}
\newcommand{\congr}[1]{#}
\newcommand{\congc}[1]{{\color{red}(Cong: #1)}}
\newcommand{\congc}[1]{}
\newenvironment{proof}[1][Proof]{{\it #1. } }{\ \rule{0.5em}{0.5em}}
\long\def\appaddress#1{
     \ifnum\statePaper=0
    {
        {\def\and{\unskip\enspace{\rm and}\enspace}%
        \def\And{\end{tabular}\hss \egroup \hskip 1in plus 2fil
                \hbox to 0pt\bgroup\hss \begin{tabular}[t]{c} }%
        \def\AND{\end{tabular}\hss\egroup \hfil\hfil\egroup
                \vskip 0.25in plus 1fil minus 0.125in
                \hbox to \linewidth\bgroup \hfil\hfil
                    \hbox to 0pt  \bgroup \hss \begin{tabular}[t]{c}}
        \def\ANDD{\end{tabular}\hss\egroup \hfil\hfil\egroup
                \vskip 0.25in plus 1fil minus 0.125in
                \hbox to \linewidth \bgroup \hfil\hfil
                    \hbox to 0pt  \bgroup \hss\begin{tabular}[t]{c}\bfseries}
            \hbox to \linewidth\bgroup \hfil\hfil
            \hbox to 0pt\bgroup\hss \begin{tabular}[t]{c} 
            Anonymous Institution
            \end{tabular}
            \hss\egroup
            \hfil\hfil\egroup}
    }
    \else
    {
        {\def\and{\unskip\enspace{\rm and}\enspace}%
        \def\And{\end{tabular}\hss \egroup \hskip 1in plus 2fil
                \hbox to 0pt\bgroup\hss \begin{tabular}[t]{c} }%
        \def\AND{\end{tabular}\hss\egroup \hfil\hfil\egroup
                \vskip 0.25in plus 1fil minus 0.125in
                \hbox to \linewidth\bgroup \hfil\hfil
                    \hbox to 0pt  \bgroup \hss \begin{tabular}[t]{c}}
        \def\ANDD{\end{tabular}\hss\egroup \hfil\hfil\egroup
                \vskip 0.25in plus 1fil minus 0.125in
                \hbox to \linewidth \bgroup \hfil\hfil
                    \hbox to 0pt  \bgroup \hss\begin{tabular}[t]{c}\bfseries}
            \hbox to \linewidth\bgroup \hfil\hfil
            \hbox to 0pt\bgroup\hss \begin{tabular}[t]{c} #1
                                \end{tabular}
        \hss\egroup
        \hfil\hfil\egroup}
    }
   \fi
}
\begin{document}

%

%

\twocolumn[

\aistatstitle{Stochastic Linear Contextual Bandits with Diverse Contexts}

\aistatsauthor{ Weiqiang Wu \And Jing Yang \And  Cong Shen }

\aistatsaddress{ London Stock Exchange \And  The Pennsylvania State University \And University of Virginia } ]



\begin{abstract}
 In this paper, we investigate the impact of context diversity on stochastic linear contextual bandits. As opposed to the previous view that contexts lead to more difficult bandit learning, we show that when the contexts are sufficiently diverse, the learner is able to utilize the information obtained during exploitation to shorten the exploration process, thus achieving reduced regret. We design the LinUCB-d algorithm, and propose a novel approach to analyze its regret performance. The main theoretical result is that under the diverse context assumption, the cumulative expected regret of LinUCB-d is bounded by a constant. As a by-product, our results improve the previous understanding of LinUCB and strengthen its performance guarantee. 
\end{abstract}

\section{Introduction}
In many applications, such as resource allocation in cloud computing platforms, or treatment selection for patients in clinical trials, the diverse user preferences and characteristics impose urgent need of personalized decision-making. In order to make optimal decisions for different individuals, the decision maker must learn a model to predict the reward when a decision is taken under different contexts. This problem is often formulated as a contextual bandit problem \citep{Auer:2003:UCB,Langford:2008}, which generalizes the classical multi-armed bandit (MAB) framework \citep{lai1985asymptotically,auer2002finite,bubeck2012regret,agrawal2012analysis,agrawal2013further}.

The inclusion of contextual information in the decision-making process introduces more uncertainty into the MAB framework and creates significant challenges for the learning problem. Part of the difficulty in contextual MAB comes from the increased problem dimension, as context is added as part of the unknown environment. Existing literature mostly focuses on developing bandit algorithms and providing theoretical analysis by treating the context as \emph{structureless} side information. The resulting models and algorithms are generic, but represent a ``worst case'' scenario since very little, if any, structure of the context is exploited. 


In many real-world applications, however, context often exhibits sufficient level of \emph{diversity}, which has been largely overlooked in the existing studies. For example, user profile is considered as the context in recommendation systems \citep{Li:2010:LinUCB}. When the system serves a large number of users, the group of user profiles is likely to be very diverse.  As another example, contextual MAB has been adopted in service placement of mobile edge computing, which utilizes the time of day and mobile user types as the context \citep{Chen2019}. 

It is not difficult to see that in these applications, the context arrival exhibits sufficient diversity that may be beneficial to the bandit algorithm design. Intuitively, the diverse contexts create opportunities for the learner to reduce the learning regret: when an arm is pulled frequently as the optimal arm for certain contexts, its parameters can be estimated accurately with the rewards obtained during exploitation. Therefore, the learner does not have to spend much time exploring it when the instantaneous context is not in favor for it, thus shortening the exploration stage and speeding up the convergence. 


In this paper, we demonstrate this optimistic view of context diversity by investigating the impact of diverse contextual information on the learning regret under the stochastic linear contextual bandits framework~\citep{Bastani2017ExploitingTN}. We show that, instead of considering the context as part of the ``uncontrollable'' environment and passively ``reacting'' to the incoming context, proactively interacting between context and arm exploration allows learning to transfer between different contexts and leads to much better overall performance. Specifically, we consider a set of $K$ arms, where
the parameter of each arm is represented by a $d$-dimensional vector unknown to the learner. When an arm $a$ is pulled under a context $c$, the obtained reward is the inner product of the parameter of the arm and a feature vector determined by arm $a$ and context $c$, corrupted by noise. The objective of the learner is to use the information contained in the observed rewards to decide an arm to pull in response to the instantaneous context. Assuming independent and identically distributed (i.i.d.) contexts, we aim to show that when the contexts are sufficiently diverse, the cumulative learning regret  in expectation can be bounded by a constant. 

The main contributions {of this paper} are three-fold.
	First,	 we formally introduce the concept of \emph{context diversity} into the stochastic linear contextual bandit framework and present a novel geometric interpretation. Such geometric interpretation provides an intuitive viewpoint to understand and analyze the impact of context diversity on the learning performance of stochastic linear contextual bandits. 
		
		Second, we propose an Upper Confidence Bound (UCB) type algorithm, termed as LinUCB-d, for the contextual bandit model. The new formulation of {LinUCB-d} {enables a unique approach} to characterize the impact of context diversity and achieve finite cumulative regret. The results also extend {the} existing understanding {of} LinUCB and strengthen its performance guarantee.  
		
	Third, we design a novel approach to analyze the performance of LinUCB-d. There are two distinct features of our approach: First, we relate the uncertainty in the estimated rewards with the solution to a constrained optimization problem, and leverage the optimality of the estimator to bound the corresponding frequency of bad decisions during the learning process. Second, we propose a frame-based approach to isolate the error events on a frame basis and make the regret tractable. 
		{These} techniques are novel and may find useful in other related settings. 

\section{Problem Formulation}\label{Problem Setting}
We consider a set of $K$ items (arms) denoted as $[K] = \{1,2,\ldots,K\}$. Assume for each $a \in [K]$ there is a fixed but unknown {parameter} vector $\thetav(a) \in \Rd$. At each time $t$, the learner observes a random context $c_t$, which is generated according to an unknown distribution.
Next, the learner decides to pull an arm $a_t \in [K]$ based on the information available. The incurred reward $y_t$ is given by
$ y_t =r(a_t,c_t)+\eta_t$,
where $\eta_t$ is a random noise, and $r(a_t,c_t)$ is a linear function of $\thetav({a_t})$ and the feature vector $\xv ({a_t},{c_t})\in \Rb^d$, i.e.,
\begin{align}\label{eqn:reward}
r(a_t,c_t):=\thetav^\TT( {a_t}) \xv ({a_t},{c_t}).  
\end{align}
Here we use $\xv^\TT$ to denote the transpose of vector $\xv$.

Let $\Cc$ be the set of all contexts. For any $a \in [K]$, define
\begin{align}
\Cc_a &:= \left\{ c \in \Cc \mid r (a, c)> r(b,c), \forall b \in [K]\backslash\{a\}\right\}, \\ 
\Xc_a &:= \left\{ \xv (a,c) \in \Rd \mid c \in \Cc_a \right\},
\end{align}
 i.e., $\Cc_a$ is the subset of contexts under which arm $a$ is the best arm rendering the maximum expected reward, and $\Xc_a$ is the set of feature vectors when arm $a$ is pulled under contexts in $\Cc_a$.
Let $\Fc_t = \sigma(c_1,a_1,y_1,\dots,c_{t-1},a_{t-1},y_{t-1},c_t,a_t)$
be the $\sigma$-field summarizing the information available just before $y_t$ is observed. 
We make the following assumptions throughout the paper. 

\begin{assump}\label{assump:bounded}
\begin{enumerate}[leftmargin=12pt,topsep=0pt, itemsep=0pt,parsep=0pt]
\item[1)] \textbf{{Bounded} parameters:} For any $a\in[K]$, $c\in \Cc$, we have $\|\thetav(a)\|_2\leq s$, $\|\xv(a,c)\|_2\leq l$.
\item[2)]  \textbf{Minimum reward gap:} For any $a,b\in[K]$, $b \neq a$, $c\in\Cc_a$, $ r(a,c) - r(b,c) \geq \Delta>0$.
\item[3)]  \textbf{Conditionally 1-subgaussian noise:} Given $\Fc_t$, $\eta_t$ is conditionally 1-subgaussian with $\Eb[\eta_t|\Fc_t]=0$, $\Eb[ \exp( \lambda \eta_t ) | \Fc_t ] \le \exp(\frac{\lambda^2}{2})$ for any $\lambda>0$.
\item[4)]  \textbf{Stochastic context arrivals:} In each time $t$, $c_t$ is drawn from the context set $\Cc$ in an {i.i.d.} fashion according to a distribution $\nu$.
\item[5)]  \textbf{Diverse contexts:} For any arm $a\in[K]$, $\lambda_{\min}( \Eb_{c\sim\nu}[\xv(a,c)\xv(a,c)^\TT|c\in \Cc_a])>0$,
where $\lambda_{\min}(A)$ is the smallest eigenvalue of $A$.  
\end{enumerate}
\end{assump} 
Assumption~1.1 ensures that the maximum regret at any step is bounded. Assumption~1.2 indicates that under a given context $c$, the optimal arm is strictly better than any other sub-optimal arms. 
Such a reward gap affects the convergence rate of the proposed algorithm (similar to the stochastic MAB setting).
Assumption~1.3 allows us to utilize the induced super-martingale to derive exponentially decaying tail bound of the estimation error. We note that all three assumptions on the bandit model are standard in the bandit literature or in the study of linear bandits.

Assumption~1.4 is a non-critical assumption made for ease of exposition. Essentially, what is required for the main results to hold is the ergodicity of the context arrival process, i.e., contexts lying in certain favorable subsets recur frequently in time. 

Assumption~1.5 is however critical for our main results to hold. It is equivalent to the condition that 
\begin{align}
\Pb[\lambda_{\min} (\Phi_a^\TT \Phi_a)>0]>0,\quad\forall a\in[K],\label{assump1.5}
\end{align}
where {$\Phi_a$} is a random matrix whose columns are $d$ feature vectors associated with arm $a$ and $d$ i.i.d. contexts drawn according to the \tcr{conditional} distribution of $c$ given $c\in\Cc_a$. 
It implies two conditions: First, all arms in $[K]$ {could be} optimal under certain contexts, i.e., $\Cc_a\neq \emptyset$. Second, for all contexts in favor of the same arm (i.e., contexts in $\Cc_a$), they are sufficiently diverse so that the corresponding feature vectors span $\Rb^d$. If the first condition does not hold, the arms that are strictly sub-optimal have to be explored sufficiently frequently in order to be distinguished from the optimal arms, thus an $O(\log T)$ regret is unavoidable in this situation. For the second condition, although it seems strict at first sight, it is actually quite reasonable in practice. This is because if a feature vector falls in $\Xc_a$, we would expect that feature vectors drawn from a small neighborhood of it fall in $\Xc_a$ as well. Since small perturbations of a vector can form a full-rank matrix, it is thus reasonable to assume $\mbox{span}(\Xc_a)=\Rb^d$.

\if{0}
Assumption~1.5 motivates the following definition to measure the diversity in the contexts.
\begin{defn}[Diversity of context]\label{defn:diversity}
Denote \tcr{$\Xc_a^{\otimes d}$} as the set of matrices whose columns are $d$ distinct feature vectors drawn from $\Xc_a$. Denote $\lambda_{\min}(\Xv)$ as the smallest eigenvalue of $\Xv$. Then, the diversity of context for arm $a$ is defined as  
$\lambda_0(\Xc_a):=\sup_{\Zv\in \Xc_a^d} \lambda_{\min}(\Zv^T\Zv)$,
and the diversity of context for all arms is defined as 
$\lambda_0:=\min_{a\in[K]}\lambda_0(\Xc_a)$.
\end{defn}

With the definition of context diversity in Definition~\ref{defn:diversity}, Assumption~1.5 is equivalent to the condition that $\lambda_0>0$. In the following, we denote the matrix $\Zv$ that achieves $\lambda_0(\Xc_a)$ as $\Zv_a$, and the context associated with the columns of $\Zv_a$ as $\bar{\Cc}_a$. 
\fi


Assume $\{\xv(a,c)\}_{a,c}$ is given and $\{\thetav(a)\}_a$ is unknown a priori. The cumulative regret of an online learning algorithm is defined as
\begin{align}\label{eqn:reg}
 R_T :=\sum_{t=1}^T r(a_t^*,c_t) - \sum_{t=1}^T y_t, 
\end{align}
where $a_t^*:=\arg\max_{a\in[K]} r(a,c_t)$.
While sublinear learning regret has been established for such linear contextual bandits~\citep{Abbasi:2011:IAL,Chu:SuperLin}, our objective is to investigate the fundamental impact of context diversity on the expected regret $\Eb[R_T]$. 


\section{Algorithm}

The existing linear contextual bandit algorithms such as the celebrated LinUCB~\citep{Li:2010:LinUCB} can be directly applied to the considered bandit problem. However, such approaches ignore the diversity in context arrivals and offer little insight to the understanding of diversity. In this section, we propose a Linear Upper Confidence Bound algorithm to manifest the impact of the diversity of context on the scaling of the learning regret. To distinguish it from LinUCB, we term it LinUCB-d. 

%



We label all contexts that have appeared in the order of their first appearances. We assume there are $n_t$ different contexts that have appeared before time $t$. With a slight abuse of notation, we denote the subset of those contexts as $\Cc_t$. Besides, we add $d$ dummy contexts, and denote the subset as $\Cc_0$. In the following, we use $c\in \{1,2,\ldots,n_t+d\}$ to index the contexts, while the first $n_t$ contexts are in $\Cc_t$ and the last $d$ are the added dummy ones. 
For the added {dummy} contexts, we assume the corresponding feature vector $\xv(a, {n_t+j})=l \ev_j$, $j=1,\ldots, d$, where $\ev_j\in\Rb^d$ is the unit vector whose $j$th entry is 1, and $l$ is the upper bound on $\|\xv(a,c)\|_2$.


Let $\lv\{\Ec\}$ be an indicator function that takes value one when $\Ec$ is true and zero otherwise. Define $N_t(a,c):=\sum_{\tau=1}^{t-1} \lv \{a_{\tau}=a, c_\tau=c\}$, i.e., the number of times that arm $a$ is pulled under context $c$ up to time $t$. 
Denote $S_t(a,c)$ as the cumulative reward of pulling arm $a$ under context $c$ right before time $t$, i.e. $S_t(a,c)=\sum_{\tau=1}^{t-1}  y_{\tau} \cdot\lv\{a_{\tau}=a, c_\tau=c\}$ for any $c\in\Cc$. We point out that for the dummy contexts, i.e., $c=n_t+1,\ldots,n_t+d$, $S_t(i,c) = 0$ at any time $t$ {since the dummy contexts never appear}. 


To simplify the notation, we let $\lv_d$ be the row vector with $d$ $1$'s, and $\ov_d$ be the row vector with $d$ $0$'s. We also introduce the following matrix(vector)-form notations:
\begin{align*}
\Xv_t(a) & := \left[ \xv(a,1), \ldots, \xv({a},n_t), l\ev_1,\ldots,l \ev_d \right], \\
\Nv_t(a)  &:= \diag[N_t(a,1), \ldots,N_t(a,n_t),\lv_d ], \\
\sv_t (a)  &:= [ S_t(a,1), \ldots, S_t(a,n_t), \ov_d ]^\TT, \\
\Vv_t(a) &:=\Xv_t(a)\Nv_t(a)\Xv_t^\TT(a). 
\end{align*}
Besides, we use $\Nv_t^{-1}(a)$ to denote the pseudo-inverse of $\Nv_t(a)$ obtained by flipping its {\it non-zero} entries, i.e.,
\begin{align*}
&\Nv_t^{-1}(a)\\
&=\diag \left[ \frac{\lv \{N_t(a,1)\hspace{-0.02in}>\hspace{-0.02in}0\}}{ N_t(a,1)}, \ldots, \frac{\lv \{N_t(a,n_t)\hspace{-0.02in}>\hspace{-0.02in}0\}}{ N_t(a,n_t)},\lv_d\right].
\end{align*}

The proposed LinUCB-d algorithm is presented in Algorithm~\ref{alg:linUCB-d}, where we set $f(t) : = 1 + t \log^2t $ in the expression of $\alpha_t$. It adopts the Optimism in Face of Uncertainty (OFU) principle where the learner always chooses the arm with the highest potential reward after padding a UCB term. 



\begin{algorithm}[t]
\caption{LinUCB-d}\label{alg:linUCB-d}
\begin{algorithmic}[1]
 \STATE \textbf{Initialization:} Set $\Nv_1(a)=\diag[\lv_d]$, $\sv_1(a)=\ov_d^\TT$ for all $a \in[K]$. 
\FOR{ $t = 1\ldots, T$ }
	\STATE Observe the incoming context $c_t$ and set 
	\begin{align*}
	\alpha_t =l s +\sqrt{(2+d)\log f(t)}.
	\end{align*}
	\vspace{-0.2in}
		\FOR { $a = 1,2,\ldots, K$ }
			\STATE Compute $\betav_t(a) = \Nv_t(a) \Xv_t^\TT(a) \Vv^{-1}_t(a)  \xv(a,{c_t}) $.
			\vspace{-0.1in}
			\STATE Set 
				\vspace{-0.15in}
\begin{align*}
\hat{r}_t(a) &= \sv_t^\TT(a) \Nv_t^{-1}(a) \betav_t(a), \\
\hat{\sigma}_t(a) &= \sqrt{\betav_t^\TT(a) \Nv_t^{-1}(a) \betav_t(a)}.
\end{align*}		
	\vspace{-0.25in}	
		\ENDFOR
		\STATE Pull arm $a_t = \arg\max_{a\in[K]}  \hat{r}_t(a) + \alpha_t\hat{\sigma}_t(a)$, and observe the reward $y_t$.	
	\STATE Update $\Xv_{t+1}(a_t)$, $\Nv_{t+1}(a_t)$, $\sv_{t+1}(a_t)$.
\ENDFOR
\end{algorithmic}
\end{algorithm}

We have a critical observation about LinUCB-d, as summarized in Proposition~\ref{prop:beta}, whose proof can be found in Appendix~\ref{appx:prop:beta}.

\begin{Proposition}\label{prop:beta}
$\betav_t(a)$ in Algorithm~\ref{alg:linUCB-d} is the solution to the following optimization problem: 
\begin{eqnarray}
&\minimize\limits_{\betav \in \mathbb{R}^{n_t+d}}  & \betav^\TT  \Nv^{-1}_t(a) \betav \nonumber \\
&\subjectto & \xv(a,{c_t}) =\Xv_t(a) \betav.
\label{eqn:beta}
\end{eqnarray}
\end{Proposition}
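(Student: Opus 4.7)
The optimization in~\eqref{eqn:beta} is a convex quadratic program with linear equality constraints, so the plan is to solve it in closed form via Lagrangian / KKT: (i) form the Lagrangian in a dual variable $\lambdav \in \Rb^d$, (ii) use stationarity to express $\betav$ in terms of $\lambdav$, and (iii) use primal feasibility to solve for $\lambdav$ via the Gram-type matrix $\Vv_t(a)$.

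Concretely, I would introduce $\lambdav \in \Rb^d$ and consider
\begin{equation*}
L(\betav, \lambdav) = \betav^\TT \Nv_t^{-1}(a) \betav - 2 \lambdav^\TT \left( \Xv_t(a) \betav - \xv(a, c_t) \right).
\end{equation*}
Stationarity $\nabla_\betav L = 0$ gives $\Nv_t^{-1}(a)\betav = \Xv_t^\TT(a)\lambdav$, which (left-multiplying by $\Nv_t(a)$ under the convention that $\Nv_t(a)\Nv_t^{-1}(a)$ acts as the identity on the support of $\Nv_t(a)$, i.e., components of $\betav$ on the null coordinates of $\Nv_t(a)$ are pinned to $0$) produces the ansatz $\betav = \Nv_t(a)\Xv_t^\TT(a)\lambdav$. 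Substituting this ansatz into primal feasibility $\Xv_t(a)\betav = \xv(a, c_t)$ yields $\Vv_t(a)\lambdav = \xv(a, c_t)$.

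The step that makes this clean is that $\Vv_t(a)$ is always positive definite. Expanding columnwise,
\begin{equation*}
\Vv_t(a) \;=\; \sum_{c=1}^{n_t} N_t(a,c)\,\xv(a,c)\xv(a,c)^\TT \;+\; l^2 \id \;\succeq\; l^2 \id,
\end{equation*}
since the $d$ dummy columns $l\ev_1,\ldots,l\ev_d$ each carry unit weight in $\Nv_t(a)$ and contribute exactly $l^2\id$. Hence $\lambdav = \Vv_t^{-1}(a)\xv(a, c_t)$, and back-substitution delivers $\betav = \Nv_t(a)\Xv_t^\TT(a)\Vv_t^{-1}(a)\xv(a, c_t)$, which matches the $\betav_t(a)$ defined in Algorithm~\ref{alg:linUCB-d}. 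Convexity of the objective ($\Nv_t^{-1}(a)\succeq 0$) then certifies this KKT point as a global minimizer.

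The main obstacle I anticipate is the careful handling of the pseudo-inverse $\Nv_t^{-1}(a)$ on coordinates where $N_t(a,c)=0$: there, $\Nv_t(a)\Nv_t^{-1}(a)=\id$ fails and the na\"ive Lagrangian manipulation is formally ill-defined. I expect the cleanest fix is to first reduce~\eqref{eqn:beta} to its support (augmented by the always-active dummy block, which alone guarantees feasibility), run the strictly convex KKT argument above on that reduced subspace, and then verify that extending the solution by zeros on the null coordinates reproduces the formula $\Nv_t(a)\Xv_t^\TT(a)\Vv_t^{-1}(a)\xv(a, c_t)$ verbatim. Once this reduction is in place, the remainder is routine convex duality.
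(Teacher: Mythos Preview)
Your proposal is correct and follows essentially the same Lagrangian/KKT route as the paper's own proof: form the Lagrangian, use stationarity to get $\betav = \Nv_t(a)\Xv_t^\TT(a)\lambdav$, substitute into the constraint, and invert $\Vv_t(a)$. Your treatment is in fact more careful than the paper's, which writes the step $\Nv_t^{-1}(a)\betav = \Xv_t^\TT(a)\lambdav \Rightarrow \betav = \Nv_t(a)\Xv_t^\TT(a)\lambdav$ without comment; your reduction-to-support fix is exactly the right way to make that rigorous, and your explicit verification that $\Vv_t(a)\succeq l^2\id$ (hence invertible) is also a detail the paper states but does not derive.
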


\textbf{Remark:} The rationale behind Algorithm~\ref{alg:linUCB-d} can be intuitively explained as follows: For each incoming $c_t$, the learner needs to estimate the expected reward for each of the arms before it decides which one to pull. Due to the linear reward structure in (\ref{eqn:reward}), if we are able to express $\xv(a,c_t)$ as a linear combination of the feature vectors in $\{\xv(a,c)\}_{c\in\Cc_t\cup\Cc_0}$ in the form of $\Xv_t(a)\betav$, then, the expected reward $r(a,c_t)$ can be expressed as $\rrv(a)\betav$, where $\rrv(a):=\thetav(a)^\TT\Xv(a)$. Since $\rrv(a)$ can be estimated based on observed rewards generated by pulling arm $a$ in the past, we can then estimate $r(a,c_t)$ directly without trying to estimate $\thetav(a)$ first.

Thus, the problem boils down to obtaining a valid representation of $\xv(a,c_t)$ in the form of $\Xv_t(a)\betav$. The existence of such a representation can be guaranteed by including the $d$ unit vectors associated with the dummy contexts in $\Xv_t(a)$. On the other hand, such a representation may not be unique when arm $a$ is pulled and more feature vectors are added to $\Xv_t(a)$. That is when Proposition~\ref{prop:beta} comes into play: by minimizing the objective function in (\ref{eqn:beta}) subject to the linear constraint, we pick the representation that minimizes the \tcr{uncertainty in} the estimated $r(a,c_t)$.

We point out that inclusion of the dummy contexts introduces bias to the estimation. However, as $t$ increases and $\Xv_t(a)$ gets expanded by including more feature vectors, the bias caused by the dummy contexts will vanish gradually. This is because under Assumptions~\ref{assump:bounded}.4 and \ref{assump:bounded}.5, the optimal solution to (\ref{eqn:beta}) will put more and more weights on feature vectors associated with the observed contexts instead of the dummy ones.

 
\if{0}
Proposition~\ref{prop:beta} indicates that the $\betav_t(a)$ involved in the estimated reward $\hat{r}_t(a)$ and its UCB is obtained by solving a constrained optimization problem.
The constraint ensures that the feature vector $\xv(a,c_t)$ is represented by a linear combination of $\{\xv(a,c)\}_{c\in\Cc_t\cup\Cc_0}$. The inclusion of the feature vectors associated with the dummy contexts in $\Xv_t(a)$ guarantees the existence of at least one of such linear combinations. 


Since $\xv(a,c_t)=\Xv_t(a) \betav$, we thus have $r(a,c_t)$ equal to $\thetav(a)^\TT \Xv_t(a) \betav_t(a)$. Even though $\thetav(a)$ is unknown, $\thetav^\TT(a)\xv(a,c)$ can be estimated based on the sample averages of the corresponding obtained rewards. The expected reward of pulling arm $a$ under $c_t$ can thus be estimated as a linear combination of those empirical averages, although the existence of the dummy contexts make such estimation biased. The corresponding estimation uncertainty is captured by $\hat{\sigma}_t(a)$. Thus, the solution to the optimization problem in (\ref{eqn:beta}) can be thought of as the linear combination that minimizes the estimation error.
\fi

Proposition~\ref{prop:beta} provides a brand new angle to view the linear contextual bandit problem. Leveraging this new viewpoint and the additional diversity assumption on the contexts, we will show that a constant regret can be achieved under LinUCB-d.

We note that LinUCB-d turns out to have deep connections with LinUCB. In order to avoid diversion from the main focus of this work, which is to elucidate the fundamental impact of context diversity on learning regret, we leave the comparison with LinUCB to Appendix~\ref{appx:prop:equivalence}.


\section{Analysis: Finite Contexts}\label{sec:uniform} 
In order to obtain some insights on how the diversity of context could help reducing the learning regret, in this section, we focus on a scenario where the context $c_t$ is drawn in an i.i.d. fashion from a finite set $\Cc$ according to a uniform distribution. With insight obtained for this scenario, we will extend the result and analysis to a general context distribution setting in Section~\ref{sec:general}.


According to Assumption~\ref{assump:bounded}.5, there must exist at least one subset of $d$ distinct contexts in $\Cc_a$, such that the corresponding feature vectors span $\Xc_a$.  Denote 
\begin{align*}
 \bar{\Phi}_a&:=\arg \max_{\Zv_a} \lambda_{\min}(\Zv^\TT_a\Zv_a), \\
 \lambda_0&:=\min_{a\in[K]}\lambda_{\min}(\bar{\Zv}^\TT_a\bar{\Zv}_a),
\end{align*}
and $\bar{\Cc}_a$ as the $d$ contexts associated with the feature vectors in $\bar{\Phi}_a$. Then, under Assumption~\ref{assump:bounded}.5, $\lambda_0>0$. Intuitively, $\lambda_0$ can be used as a metric for the diversity of context under this setting. 
We present our main theoretical result for the finite contexts setting as follows. 
\begin{theorem}\label{thm:main}
Under Assumption~\ref{assump:bounded}, if the context arrival $c_t$ is uniformly distributed over a finite set $\Cc$ with $|\Cc|=n$, the expected regret under Algorithm~\ref{alg:linUCB-d} can be bounded by $O\left(Kdn^2 +  \frac{dn(K+\delta^2)}{\Delta^2} \log\left(\frac{dn(K+\delta^2)}{\Delta^2}\right)\right)$, where $\delta=l\sqrt{d/\lambda_0}$.
\end{theorem}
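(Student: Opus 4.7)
The plan is to combine three ingredients: (i) a uniform concentration bound of the form $|\hat r_t(a) - r(a,c_t)| \le \alpha_t \hat\sigma_t(a)$; (ii) the constrained-optimization characterization of $\betav_t(a)$ in Proposition~\ref{prop:beta} to convert $\hat\sigma_t(a)$ into a statement about the counts $N_t(a,c)$ for $c \in \bar\Cc_a$; (iii) a frame-based Chernoff argument leveraging the uniform i.i.d.\ context assumption to lower-bound those counts. The main difficulty, discussed at the end, is (iii), which must untangle a circular coupling between estimator quality and algorithmic choices.

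\textbf{Concentration.} Define $G_t := \bigcap_{\tau \le t,\, a \in [K]} \{|\hat r_\tau(a) - r(a, c_\tau)| \le \alpha_\tau \hat\sigma_\tau(a)\}$. The error $\hat r_t(a) - r(a,c_t)$ decomposes into a martingale noise term (from the non-dummy entries of $\betav_t(a)$) and a bias term from the $d$ dummy contexts whose $\sv$-entries are zero while their true mean rewards are $l\theta_j(a)$. A self-normalized subgaussian inequality (Assumption~\ref{assump:bounded}.3) bounds the noise by $\sqrt{(2+d)\log f(t)}\,\hat\sigma_t(a)$; since the dummy entries of $\Nv_t^{-1}(a)$ equal one by initialization, Cauchy--Schwarz bounds the bias by $ls\,\hat\sigma_t(a)$. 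The choice $\alpha_t = ls + \sqrt{(2+d)\log f(t)}$ therefore dominates both, and $f(t) = 1 + t\log^2 t$ gives $\sum_t \Pb(G_t^c) = O(1)$, so the contribution of $G_t^c$ to $\Eb[R_T]$ is $O(1)$ using the uniform per-step regret bound $2ls$.

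\textbf{From counts to regret.} On $G_t$ the OFU rule of Algorithm~\ref{alg:linUCB-d}, combined with Assumption~\ref{assump:bounded}.2, yields $r(a_t^*, c_t) - r(a_t, c_t) \le 2\alpha_t \hat\sigma_t(a_t)$, so any regret at step $t$ forces $\hat\sigma_t(a_t) \ge \Delta/(2\alpha_t)$. I would then exploit Proposition~\ref{prop:beta} to build a particular feasible solution $\tilde\betav$ to (\ref{eqn:beta}) supported only on the coordinates corresponding to $\bar\Cc_a$ (available after those $d$ contexts have appeared): since $\bar\Phi_a$ spans $\Rb^d$ with $\lambda_{\min}(\bar\Phi_a^\TT\bar\Phi_a) \ge \lambda_0$, there exists $\alpha \in \Rb^d$ with $\bar\Phi_a \alpha = \xv(a, c_t)$ and $\|\alpha\|_2^2 \le l^2/\lambda_0$. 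Optimality of $\betav_t(a)$ gives
\begin{align*}
\hat\sigma_t(a)^2 \le \tilde\betav^\TT \Nv_t^{-1}(a)\tilde\betav \le \frac{l^2/\lambda_0}{\min_{c \in \bar\Cc_a} N_t(a, c)}.
\end{align*}
Consequently, once $\min_{c \in \bar\Cc_a} N_t(a,c) \ge M_t := 4\alpha_t^2 l^2/(\lambda_0 \Delta^2) = O\!\left((\delta^2 + d\log f(t))/\Delta^2\right)$, arm $a$ can no longer contribute regret at step $t$ on $G_t$.

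\textbf{Frame-based sampling (main obstacle).} What remains is to bound the stopping time $T^\star$ at which every arm $a$ achieves $\min_{c \in \bar\Cc_a} N_{T^\star}(a,c) \ge M_{T^\star}$. The difficulty is a circular coupling: these counts grow only when $a$ is actually played at $c \in \bar\Cc_a$, which on $G_t$ requires every other arm $b$ to already have $\hat\sigma_t(b) < \Delta/(2\alpha_t)$. I would partition $[1,T]$ into frames $F_k = [t_k, t_{k+1})$ with geometrically growing lengths $L_k$, calibrated so that a Chernoff bound guarantees each context appears at least $L_k/(2n)$ times in $F_k$ with probability $\ge 1 - n e^{-\Theta(L_k/n)}$ (summable over $k$). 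The key structural observation that closes the loop is that on $G_\tau$, any bad pull at time $\tau$ selects an arm $a_\tau$ with $\hat\sigma_\tau(a_\tau) > \Delta/(2\alpha_\tau)$, hence with $\min_{c' \in \bar\Cc_{a_\tau}} N_\tau(a_\tau, c') < M_\tau$; each bad pull can therefore be charged to a specific under-sampled pair, whose count will grow by $\Theta(L_k/n)$ during any subsequent good-sampling frame. An induction over frames, combined with an initial warm-up of $O(Kdn^2)$ steps to guarantee the first visit to each of the $Kd$ pairs $(a, c)$ with $c \in \bar\Cc_a$ (using coupon-collector-style waits of order $n$ per context and a constant number of retries per arm-context pair), yields $T^\star = O(Kdn^2 + n M_T)$. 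Summing the three contributions---$O(1)$ from $\Pb(G_t^c)$ and frame-sampling failures; $O(Kdn^2)$ from the warm-up; and $O(K\cdot 2ls \cdot n M_T) = O(dn(K+\delta^2)\Delta^{-2} \log f(T))$ from the structured exploration---gives the claimed bound. The hardest part is executing the frame-based induction, since the coupling between algorithmic choices and counts is what the frame isolation of error events is designed to control.
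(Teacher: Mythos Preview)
Your high-level decomposition---concentration, Proposition~\ref{prop:beta} to control $\hat\sigma_t$, and a frame-based argument---matches the paper's, and steps (i) and (ii) are essentially correct. The gap is in (iii), and it is not just bookkeeping. You correctly identify the circularity, but your proposed charging argument does not break it: you infer from a bad pull that $\min_{c'\in\bar\Cc_{a_\tau}} N_\tau(a_\tau,c') < M_\tau$ and then assert this under-sampled pair's count will grow by $\Theta(L_k/n)$ in the next good frame. But $N_t(a_\tau,c')$ increments only when the algorithm actually plays $a_\tau$ at $c'$; since $c'\in\Cc_{a_\tau}$, this requires every \emph{other} arm $b$ to have small UCB at $c'$, which in turn requires \emph{their} basis counts $\{N_t(b,c''):c''\in\bar\Cc_b\}$ to already be large---exactly the circular dependence. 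Your warm-up does not resolve this either, since visiting context $c'$ does not force the algorithm to play $a_\tau$ there.

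The paper breaks the loop by a second, much simpler application of Proposition~\ref{prop:beta} that you do not use. At any $\tau$ with good estimates and $a_\tau\neq a_\tau^*$, plug in the trivial feasible vector $\tilde\betav=\ev_{c_\tau}$ (the indicator of the \emph{current} context, not the basis contexts) to get $\hat\sigma_\tau(a_\tau)^2\le 1/N_\tau(a_\tau,c_\tau)$, hence $N_\tau(a_\tau,c_\tau)\le 4\alpha_\tau^2/\Delta^2$. This bounds the number of suboptimal pulls of any arm at any fixed context by $O(\alpha_t^2/\Delta^2)$ with no reference to $\bar\Cc_a$ at all (Lemma~\ref{lemma_nu}). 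Once you have this, in any frame where each $c\in\bar\Cc_a$ appears $\Theta(|F_k|/n)$ times, all but $O(K\alpha_t^2/\Delta^2)$ of those appearances go to the optimal arm $a$, yielding $N_t(a,c)\ge t/(16n)-O(K\alpha_t^2/\Delta^2)$ (Lemma~\ref{lemma_tau}); your $\bar\betav$-based bound on $\hat\sigma_t$ then finishes. As a side remark, the $O(Kdn^2)$ term in the paper is not a coupon-collector warm-up but the Hoeffding tail sum $\sum_k |F_{k+1}|\exp(-2^{k-1}/(2n^2))$ over frames with irregular context arrivals.
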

Theorem~\ref{thm:main} indicates that the expected regret is bounded by a constant, which is in stark contrast to the state-of-the-art results on linear contextual bandits. It indicates that diverse contexts can indeed help to accelerate the learning process and make it converge to the optimal solution within finite steps on average. Besides, the constant bound monotonically decreases as $\lambda_0$ increases, which is consistent with our intuition that larger diversity of context is more advantageous in learning. 

We point out that the dependence on the number of contexts $n$ in the upper bound can be further reduced to a constant that does not scale in the total number of contexts, as we will show in the general context distribution setting in Section~\ref{sec:general}.




\subsection{Sketch of the Proof of Theorem~\ref{thm:main}} 
The complete proof of Theorem~\ref{thm:main} can be found in Appendix~\ref{appx:finite}. In this section, we provide a sketch of the proof to highlight the key ideas and shed light on the profound impact of context diversity to the learning performance.

The bounded regret in Theorem~\ref{thm:main} can be intuitively explained in this way: thanks to context diversity under Assumption~1.5, arms that are suboptimal for a given context are optimal for some other contexts. Since contexts are drawn in an i.i.d. fashion, then, with high probability, each arm will be played as an optimal arm for a linear fraction of time. Context diversity then ensures that for any arm $a$, the feature vector $\xv(a,c_t)$ for any incoming context $c_t$ can be expressed as a linear combination (denote the coefficient vector as $\bar{\betav}(a,c_t)$) of the columns of $\bar{\Zv}_a$. We note that $\{r(a,c)\}_{c\in\bar{\Cc}_a}$ can be estimated accurately based on the rewards collected when $a$ is pulled as an optimal arm. Hence, if $\bar{\Cc}_a$ were given a priori, the error of using the linear combination of $\{r(a,c)\}_{c\in\bar{\Cc}_a}$ to predict $r(a,c_t)$ would decrease in the order of {$O(1/\sqrt{t})$}. To overcome the difficulty that $\bar{\Cc}_a$ is unknown beforehand, LinUCB-d greedily selects the linear combination (with coefficient vector $\betav_t(a)$) to minimize the estimation uncertainty. Then, according to Proposition~\ref{prop:beta}, the corresponding estimation uncertainty must be lower than that if $\bar{\betav}(a,c_t)$ were used, leading to a faster decay of the prediction error. 

As explained above, the key to the result in Theorem~\ref{thm:main} is to show that each arm will be played as an optimal arm for a linear fraction of time. In order to show this, we propose a novel frame-based approach.

Specifically, we divide the time axis into frames with lengths $2^k$, $k=1,2,\ldots$, starting at $t=1$. Denote $F_k$ as the time slots lying in the $k$-th frame, i.e.,
$F_k:=\left\{t\mid 2^{k-1}\leq t\leq  \min(2^{k}-1, T)\right \}.$
Denote $N_t(c)$ as the number of times that context $c$ appears up to time $t$, and $N_{F_k}(c)$ as the number of times context $c$ appears in $F_k$, i.e., $N_{F_k}(c):=N_{2^k}(c) - N_{2^{k-1}}(c)$. Similarly, we define $N_{F_k}(a,c)$ as the number of times arm $a$ is pulled under context $c$ in $F_k$.
We consider the following error events:

\textbf{Irregular context arrivals.} For each arm $a\in[K]$, we focus on the $d$ contexts in $\bar{\Cc}_a$. Within a frame, if the total number of arrivals of any context $c\in\bar{\Cc}_a$ is smaller than half of its {\it expected} number of arrivals in that frame, we term it irregular context arrivals. If irregular context arrivals happen in frame $k$, we will put all time indices in the $(k+1)$th frame in $\Ac_T$, i.e.,
$\Ac_T :=\cup_{k} \left\{F_{k+1} \middle| \exists a,c\in\bar{\Cc}_a, \mbox{s.t. } N_{F_k}(c) \leq \frac{1}{2n}\cdot 2^{k-1} \right\}. $

Intuitively, due to the i.i.d. context arrival assumption, the probability of having irregular context arrivals in the $k$th frame decays {exponentially} in the length of frame $k$. Thus, the corresponding regret over $\Ac_T$ can be bounded by a constant. The detailed analysis can be found in Appendix~\ref{appx:At}.

\textbf{Bad estimates.} At time $t$, if the estimated reward $\hat{r}_t(a)$ deviates from its expected value $r(a,c_t)$ by more than $ \alpha_t\hat{\sigma}_t(a) $, we term it a bad estimate. We group the time slots with bad estimates over $(0,T]$ in $\Bc_T$, i.e.,
$\Bc_T := \left\{ t \mid \exists a\in[K], s.t. \left|\hat{r}_t(a) -r(a,c_t) \right| > \alpha_t\hat{\sigma}_t(a) \right\}. $
The regret over $\Bc_T$ can be bounded by a constant by adapting the Laplace method~\citep{LS19bandit-book} to our setting. The detailed analysis is deferred to Appendix~\ref{appx:Bt}.

\textbf{Bad presence of good estimates.} Within a frame, if the total number of time slots with bad estimates exceeds $\frac{1}{4n}$ of the frame length, we term the event bad presence of good estimates. If such an event happens in frame $k$, we put all time indices in the $(k+1)$th frame in $\Cc_T$, i.e.,
$\Cc_T := \cup_{k} \left\{F_{k+1} \middle|   | \Bc_T \cap F_k|  \geq \frac{1}{4n} \cdot 2^{k-1}\right\}, $
where $\left| \Bc_T \cap F_k \right|:=B_k$ is the number of bad estimates in frame $F_k$. As shown in Appendix~\ref{appx:Ct}, $|\Cc_T|$ can be upper bounded by a linear function of $|\Bc_T|$. The regret over $\Cc_T$ can thus be bounded as a linear function of the regret over $\Bc_T$.

\textbf{Pulling sub-optimal arms in good time slots.} For any time slot $t$ not included in $\Ac_T$, $\Bc_T$ or $\Cc_T$, we call it a good time slot. The learner may still pull a sub-optimal arm in a good time slot, due to the overlap of the confidence intervals of ${r}(a,c_t)$. We group the time slots when such event happens in $\Dc_T$. Specifically, 
$\Dc_T := \{t\mid t\notin \Ac_T \cup \Bc_T \cup \Cc_T, a_t\neq a^*_t \}.$

While the regrets over $\Ac_T$, $\Bc_T$ or $\Cc_T$ can be bounded in a relatively straightforward way, characterizing the regret over $\Dc_T$ relies on the context diversity, and is the most critical step towards the constant regret in Theorem~\ref{thm:main}. The detailed analysis is provided in Appendix~\ref{appx:Dt}. It involves the following major steps: 
\begin{enumerate}[leftmargin=12pt,topsep=0pt, itemsep=0pt,parsep=0pt]
\item[1)] Up to time $t$, the number of times that an arm $a\in[K]$ is chosen as a sub-optimal arm scales as $O(\log t)$ (Lemma~\ref{lemma_nu}). 
\item[2)] Based on the definition of $\Dc_T$, for any $t\in\Dc_T$, the number of times $a$ is pulled as an optimal arm before $t$ scales linearly in $t$ (Lemma~\ref{lemma_tau}).
\item[3)] Leveraging Proposition~\ref{prop:beta}, the prediction error thus decreases in $O( 1/\sqrt{t})$ (Lemma~\ref{lemma_beta}), which implies that $\Dc_T$ can only happen before a fixed time (Theorem~\ref{thm:Dt}).
\end{enumerate}

After assembling the regrets over $\Ac_T$, $\Bc_T$, $\Cc_T$ and $\Dc_T$ together, the result in Theorem~\ref{thm:main} can be obtained. 

\textbf{Remark:} We point out that the operation of LinUCB-d itself does not depend on frames. We introduce them for the purpose of analysis only. Besides, LinUCB-d does not require the knowledge of $\bar{\Cc}_a$, $\bar{\Phi}_a$ or the distribution of $c_t$. It can operate under general context arrival processes, even if Assumption~\ref{assump:bounded} does not hold.



\if{0}
Let $t^*$ be the last slot in $\Dc_T$ before $n$ and $\hat{\alpha} \leq \frac{\delta^2}{\frac{t}{16\lC} -  \frac{4 K \alpha^2}{\Delta^2}}$.
\begin{align}
LS+\sqrt{ 2\log f(t)+d\log (1+t)}\leq LS+\sqrt{ (2+d)\log f(t)} :=\alpha
\end{align}
\fi

\section{Analysis: General Context Arrivals}\label{sec:general} 
In this section, we extend the analysis for the finite uniform context distribution setting to the general context distribution setting. Compared with the finite contexts case, the major difference for the general setting is that the context set $\Cc$ could be infinite and even uncountable. Although LinUCB-d still works in the same way, the corresponding performance analysis becomes much more challenging. For the finite contexts case, we choose a set of feature vectors (columns in $\bar{\Zv}_a$) as the basis for $\Xc_a$, and show that a linear combination of the corresponding empirical average rewards leads to a fast decaying estimation error, as the number of times $a$ is pulled under contexts in $\bar{\Cc}_a$ scales linearly in time. However, for general context arrivals, the recurrence of any finite subset of contexts may have probability zero. Thus, the previous analysis cannot be extended straightfowardly to handle such case. 

To overcome such challenges, we make the following modifications: First, we extend the definition of $\bar{\Cc}_a$ from $d$ distinct contexts to $d$ non-overlapping {\it meta-contexts}, where each meta-context consists of a subset of contexts with a non-zero probability mass. Thus, the meta-contexts recur frequently, similar to the finite contexts setting. One subsequent challenge associated with the meta-contexts is that feature vectors associated with the contexts in a meta-context are different and occur randomly. Thus, we cannot fix a basis (such as the columns in $\bar{\Zv}_a$ as in the finite contexts case) beforehand for $\Xc_a$, as the corresponding contexts may not appear frequently in time. Rather, it needs to be adaptively selected based on up-to-date observations. How to ensure the existence of such a valid basis at each time is thus challenging. 



We construct the meta-contexts and a basis for each arm $a$ as follows. First, we select a matrix $\Phi_a$ with $\lambda_{\min}(\Phi_a^\TT\Phi_a)>0$, and denote its columns as $\{\zv_{a}^{(i)}\}_{i=1}^d$. Assumption~\ref{assump:bounded}.5 ensures the existence of such $\Phi_a$ for each $a\in[K]$ according to (\ref{assump1.5}). Let
\begin{align}\label{eqn:lambda0}
\lambda_0(\{\Phi_a\}):=\min_{a\in[K]} \lambda_{\min}(\Phi_a^\TT\Phi_a).
\end{align}
Then, we have $\lambda_0(\{\Phi_a\})>0$ with the selected $\Phi_a$s.

We then divide $\Xc_a$ into $d$ disjoint groups $\{\Xc_{a}^{(i)}\}_{i=1}^d$ based on their closeness to $\{\zv_{a}^{(i)}\}_{i=1}^d$, and break the tie arbitrarily, e.g.,
\begin{align}
\Xc_{a}^{(i)} =\Bigg\{ \xv\in \Xc_a \Bigg|  &  \frac{\xv^\TT \zv_{a}^{(i)}}{\|\zv_{a}^{(i)}\|_2} <  \frac{\xv^\TT \zv_{a}^{(j)}}{\|\zv_{a}^{(j)}\|_2} \mbox{ for } j<i,\nonumber \\  
& \frac{\xv^\TT \zv_{a}^{(i)}}{\|\zv_{a}^{(i)}\|_2} \leq  \frac{\xv^\TT \zv_{a}^{(j)}}{\|\zv_{a}^{(j)}\|_2} \mbox{ for } j>i \Bigg\}.
\end{align}

Let $r=\frac{1}{2}\sqrt{\frac{\lambda_0(\{\Phi_a\})}{d}}$,  and $B(\zv_{a}^{(i)},r)$ be an $\ell_2$ ball centered at $\zv_{a}^{(i)}$ with radius $r$. Let $\bar{\Xc}_{a}^{(i)}:=\Xc_{a}^{(i)}\cap B(\zv_{a}^{(i)},r)$. Then, as shown in Lemma~\ref{lemma:span3} in Appendix~\ref{appx:thm:continuous}, a valid basis for $\Xc_a$ can be formed if an arbitrary vector is picked from each of the subsets $\{\bar{\Xc}_{a}^{(i)}\}_{i=1}^d$. We then take the sample average of the previously observed feature vectors in $\bar{\Xc}_{a}^{(i)}$ (denoted as $\hat{\zv}_{a}^{(i)}$) as the corresponding basis vector.  

The definition of $\bar{\Xc}_{a}^{(i)}$ induces the definition of meta-contexts $\bar{\Cc}_{a}^{(i)}$ as follows: $$\bar{\Cc}_{a}^{(i)}:= \{c\in\Cc_a\mid \xv(a,c)\in\bar{\Xc}_{a}^{(i)} \}.$$
Let 
\begin{align} \label{eqn:defp}
p(\{\Phi_a\}):=\min_{a,i} \Pb[c_t\in \bar{\Cc}_{a}^{(i)}].
\end{align} 
Then, Assumption~\ref{assump:bounded}.5 ensures that there exists $\{\Phi_a\}$ such that $p(\{\Phi_a\})$ is bounded away from zero. For ease of exposition, in the following, we simply use $p$ to denote $p(\{\Phi_a\})$ without causing ambiguity.

Denote $N_{F_k}(\bar{\Cc}_{a}^{(i)})$ as the total number of times that the contexts in meta-context $\bar{\Cc}_{a}^{(i)}$ appear up to time $t$. We then keep the definitions of $\Bc_T$ and $\Dc_T$ the same as in the finite context set setting and modify the definition of $\Ac_T$ and $\Cc_T$ as follows:
\begin{align*}
\Ac_T&:=\cup_k \left\{F_{k+1}\mid \exists i,a, \mbox{ s.t. } N_{F_k}(\bar{\Cc}_{a}^{(i)})\leq \left(\frac{p}{2}\right)2^{k-1} \right\}, \\
\Cc_T&:=\cup_k  \left\{ |\Bc_T\cap F_k|\geq \left(\frac{p}{4}\right)2^{k-1} \right\}.
\end{align*}
 

Intuitively, the regret over $\Bc_T$ remains unchanged, while the regrets over $\Ac_T$ and $\Cc_T$ can be obtained through a straightforward extension of the previous results in the finite contexts case. The challenge of the analysis thus lies in the analysis of the regret over $\Dc_T$, whose major steps are listed as follows.
\begin{enumerate}[leftmargin=12pt,topsep=0pt, itemsep=0pt,parsep=0pt]
\item[1)] We first show that over $\Dc_T$, the number of times that arm $a$ is pulled as a sub-optimal arm under contexts in $\bar{\Cc}_{a}^{(i)}$, $b\neq a$, grows sublinearly in $t$ (Lemma~\ref{lemma:cont_freq}). Compared with Lemma~\ref{lemma_nu}, the random occurrences of the multiple contexts included in each meta-context incur an extra factor of $2d\log \frac{d+t}{d}$ in the upper bound. 
\item[2)] We then show that the total number of times that arm $a$ is pulled as the optimal arm under contexts in $\bar{\Cc}_{a}^{(i)}$ scales linearly in $t$ (Lemma~\ref{lemma:cont_fraction}). 
\item[3)] Since $\{\hat{\zv}_{a}^{(i)}\}$ is a valid basis, by leveraging Proposition~\ref{prop:beta}, we show that the estimation uncertainty under LinUCB-d decays in $O(1/\sqrt{t})$ (Lemma~\ref{lemma:cont_sigma}).
\end{enumerate}
Putting everything together, we have the following bounded regret for the general case. The detailed proof is provided in Appendix~\ref{appx:thm:continuous}.
\begin{theorem}\label{thm:continuous}
Under Assumption~\ref{assump:bounded}, the regret under Algorithm~\ref{alg:linUCB-d} is upper bounded by
$O\left(\frac{Kd}{p^2} +\frac{d(2\delta^2+Kd)}{\Delta^2p}\log^2 \left(\frac{d(2\delta^2+Kd)}{\Delta^2p}\right)\right)$ for any valid choice of $\{\Phi_a\}$, where $\delta:=l\sqrt{d/\lambda_0(\{\Phi_a\})}$, and $\lambda_0$ and $p$ are defined in Eqn.~(\ref{eqn:lambda0}) and (\ref{eqn:defp}), respectively. 
\end{theorem}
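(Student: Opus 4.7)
My plan is to follow the same four-event decomposition used for Theorem~\ref{thm:main}, writing $\E[R_T] \leq 2ls\cdot\E\bigl[|\Ac_T|+|\Bc_T|+|\Cc_T|+|\Dc_T|\bigr]$, and to bound each term under the generalized definitions that use meta-contexts $\bar{\Cc}_a^{(i)}$ and an adaptive empirical basis $\hat{\zv}_a^{(i)}$. The $\Bc_T$ analysis is essentially identical to the finite case because it only uses subgaussian concentration of the reward estimator (Laplace method) and does not depend on the structure of the context distribution; this yields $\E|\Bc_T|=O(d\log^2 T)$-style terms that, after combining with the $\Cc_T$ bound, give an $O(1/\Delta^2)$-order constant.

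For $\Ac_T$, since each context $c_t$ lies in $\bar{\Cc}_a^{(i)}$ independently with probability at least $p$, a Chernoff/Hoeffding bound gives $\Pb\bigl[N_{F_k}(\bar{\Cc}_a^{(i)})\leq (p/2)\,2^{k-1}\bigr]\leq \exp(-c\,p\,2^{k-1})$; summing the resulting frame lengths over $k$, $a$, $i$ collapses to a geometric series whose expectation is $O(Kd/p^2)$. For $\Cc_T$, whenever frame $k+1$ is added to $\Cc_T$, frame $k$ must contain at least $(p/4)\,2^{k-1}$ bad estimates, so $|\Cc_T|$ is a deterministic linear functional of $|\Bc_T|$ with slope $O(1/p)$. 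Together these steps reproduce the first term $Kd/p^2$ and the $d/\Delta^2 p$ factor in front of the logarithmic factor.

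The main obstacle is controlling $\Dc_T$. The analysis has three steps, executed via the lemmas indicated in the sketch. First, invoke Lemma~\ref{lemma:span3} (which only needs that $\hat{\zv}_a^{(i)}$ is the sample mean of vectors lying in the ball $B(\zv_a^{(i)},r)$ with $r=\tfrac{1}{2}\sqrt{\lambda_0/d}$) to conclude that $\{\hat{\zv}_a^{(i)}\}_{i=1}^d$ is a valid basis of $\Rb^d$ with smallest singular value bounded away from zero by a constant depending only on $\lambda_0(\{\Phi_a\})$. Second, apply Lemma~\ref{lemma:cont_freq} to show that on $\Dc_T$ the number of times an arm is pulled as a \emph{sub-optimal} arm up to $t$ is $O\bigl(d\log t\cdot \log((d+t)/d)\bigr)$, the extra logarithmic factor being the price of having multiple feature vectors per meta-context. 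Third, apply Lemma~\ref{lemma:cont_fraction} to argue that on time slots outside $\Ac_T\cup\Cc_T$, the number of times arm $a$ is pulled as the \emph{optimal} arm under contexts in each $\bar{\Cc}_a^{(i)}$ grows linearly in $t$.

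At this point Proposition~\ref{prop:beta} becomes the key bridge: $\betav_t(a)$ is an optimal solution to (\ref{eqn:beta}), so its weighted norm $\betav_t^\TT(a)\Nv_t^{-1}(a)\betav_t(a)$ is no larger than that of the surrogate representation using the basis $\{\hat{\zv}_a^{(i)}\}_{i=1}^d$; combined with the eigenvalue lower bound from step one and the linear growth from step two, this yields (Lemma~\ref{lemma:cont_sigma}) that $\hat{\sigma}_t(a)=O\bigl(\delta/\sqrt{t}\bigr)$ with $\delta=l\sqrt{d/\lambda_0(\{\Phi_a\})}$. Together with $\alpha_t=O(\sqrt{d\log t})$ and Assumption~\ref{assump:bounded}.2, this implies $\alpha_t\hat{\sigma}_t(a)<\Delta/2$ for all $t$ beyond a finite threshold $t_0$ of order $\tfrac{d(2\delta^2+Kd)}{\Delta^2 p}\log^2(\cdot)$; hence $\Dc_T\subseteq[1,t_0]$ and the contribution is a constant. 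The genuinely delicate step is the comparison of $\betav_t(a)$ with the adaptive basis: one must argue that randomness in $\hat{\zv}_a^{(i)}$ does not enlarge the weighted norm by more than a constant factor, which is where the ball restriction $B(\zv_a^{(i)},r)$ and the linear lower bound on the diagonal entries of $\Nv_t(a)$ are used together. Assembling the four bounds gives the stated $O\bigl(Kd/p^2 + d(2\delta^2+Kd)/(\Delta^2 p)\log^2(d(2\delta^2+Kd)/(\Delta^2 p))\bigr)$ constant.
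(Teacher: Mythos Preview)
Your plan is essentially the paper's proof: the same four-event frame decomposition, Hoeffding for $\Ac_T$, Markov linking $|\Cc_T|$ to $|\Bc_T|$, and the three-lemma chain (Lemmas~\ref{lemma:cont_freq}--\ref{lemma:cont_sigma}) culminating in the comparison of $\betav_t(a)$ against the empirical-basis surrogate via Proposition~\ref{prop:beta}. One correction worth making: the Laplace/self-normalized argument for $\Bc_T$ yields $\Eb[|\Bc_T|]\leq 2+2.5K$, a constant independent of $T$, $d$, and $\Delta$---not ``$O(d\log^2 T)$-style terms.'' This is not cosmetic: if $\Eb[|\Bc_T|]$ grew with $T$ then so would $\Eb[|\Cc_T|]=O(\Eb[|\Bc_T|]/p)$, and the constant-regret conclusion would collapse. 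The $\Delta$-dependence in the final bound comes entirely from the $\Dc_T$ threshold $t_0$, not from $\Bc_T$ or $\Cc_T$.
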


Theorem~\ref{thm:continuous} indicates that even for the general context distribution setting where the contexts are drawn from a continuous set, we are still able to obtain a constant regret bound. Compared with the result in Theorem~\ref{thm:main}, the scaling in terms of $d$ and $\delta$ is larger, due to the inclusion of multiple contexts in the meta-contexts.

\textbf{Remark:} Similar to the finite contexts case, ${\Phi}_a$, $\hat{\phi}_a^{(i)}$, $\bar{\Xc}_a^{(i)}$, $\bar{\Cc}_a^{(i)}$, and $p$ are introduced for the purpose of analysis only, and are not required for LinUCB-d.

\section{Experimental Evaluation}\label{sec:simulation}

\subsection{Uniform Context Arrivals} 
First, we consider a simplified scenario with 2 arms and 4 contexts for a proof of concept. We assume the arm parameters are
$\thetav(1) = (0.8, 0.4)$, $\thetav(2)  = (0.5, 0.7)$. The arm-context feature vectors are as follows: $\xv(1,1)=(0.9, 0.1)^\TT$, $\xv(1,2)=(0.75, 0.25)^\TT$, $\xv(1,3)=(0.25, 0.75)^\TT$, $\xv(1,4)=(0.1, 0.9)^\TT$, $\xv(2,1)=(0.8, 0.2)^\TT$, $\xv(2,2)=(0.7, 0.3)^\TT$,  $\xv(2,3)=(0.3, 0.7)^\TT$, $ \xv(2,4)=(0.2, 0.8) ^\TT$. The expected rewards for pulling the arms under the four different contexts can be calculated accordingly. Therefore, arm $1$ is the optimal arm under contexts $1$ and $2$ and arm $2$ is the optimal arm under contexts $3$ and $4$. We can
verify that $\{\xv(1,1), \xv(1,2)\}$ and $\{\xv(2,3), \xv(2,4)\}$ both span $\mathbb{R}^2$, thus they are valid basis for $\Xc_1$ and $\Xc_2$, respectively.

\if{0}
presented in Table~\ref{table:feature}.
\begin{table}[h]
\caption{Feature vectors $\xv(a,c)$}\label{table:feature}
\centering
\begin{tabular}{l|cc}
  \toprule[1.5pt]
$\xv(a,c)$  &  $a=1$  & $a=2$   \\
  \midrule
$c=1$  & $(0.9, 0.1)^\TT$ & $ (0.2, 0.8)^\TT$ \\
$c=2$ & $(0.75, 0.25)^\TT$   &  $(0.7, 0.3)^\TT$    \\
$c=3$& $(0.25, 0.75)^\TT$  & $(0.3, 0.7)^\TT$\\
$c=4$ &$(0.1, 0.9)  ^\TT$& $ (0.2, 0.8) ^\TT$\\
  \bottomrule[1.5pt]
\end{tabular}
\end{table}

The expected rewards for pulling the arms under the four different contexts are listed in Table~\ref{table:reward}.

\begin{table}[h]
\caption{Mean rewards $r(a,c)$}\label{table:reward}
\centering
\begin{tabular}{l|cccc}
  \toprule[1.5pt]
$r(a,c)$  &  $c=1$  & $c=2$ & $c=3$ & $c=4$   \\
  \midrule
$a=1$  & $0.76$ & $0.70$ & $0.50$ & $0.44$ \\
$a=2$ & $0.66$ & $ 0.56$ & $ 0.64$ & $0.66$    \\
  \bottomrule[1.5pt]
\end{tabular}
\end{table}
\fi

With the selected parameters, we 
compare LinUCB-d with the following baseline algorithms through simulation: 1) UCB with $\alpha_t=\sqrt{2\log f(t)}$ for individual contexts. We treat the arms under each context as a standard MAB and perform UCB for each context.
2) LinUCB with the same choice of $\alpha_t$ as in LinUCB-d.
3) A greedy LinUCB with $\alpha_t = 0$. This is the pure exploitation algorithm considered in \cite{Bastani2017ExploitingTN} essentially.

\begin{figure*}
	\centering  
	\subfigure[Uniform context arrivals.]{\includegraphics[width=0.32\textwidth]{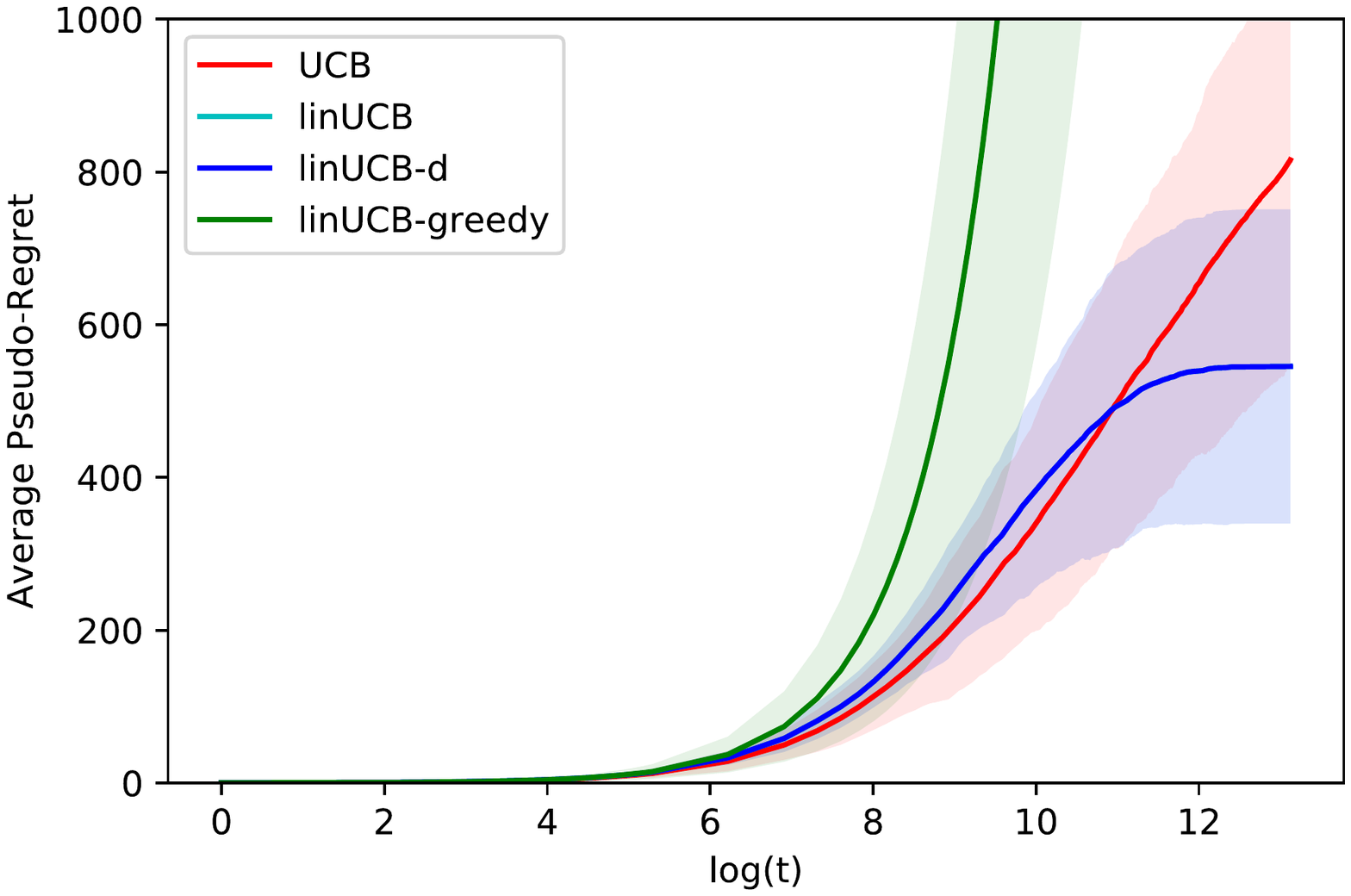}}
	\subfigure[Uniform context arrivals with different context diversity.]{\includegraphics[width=0.32\textwidth]{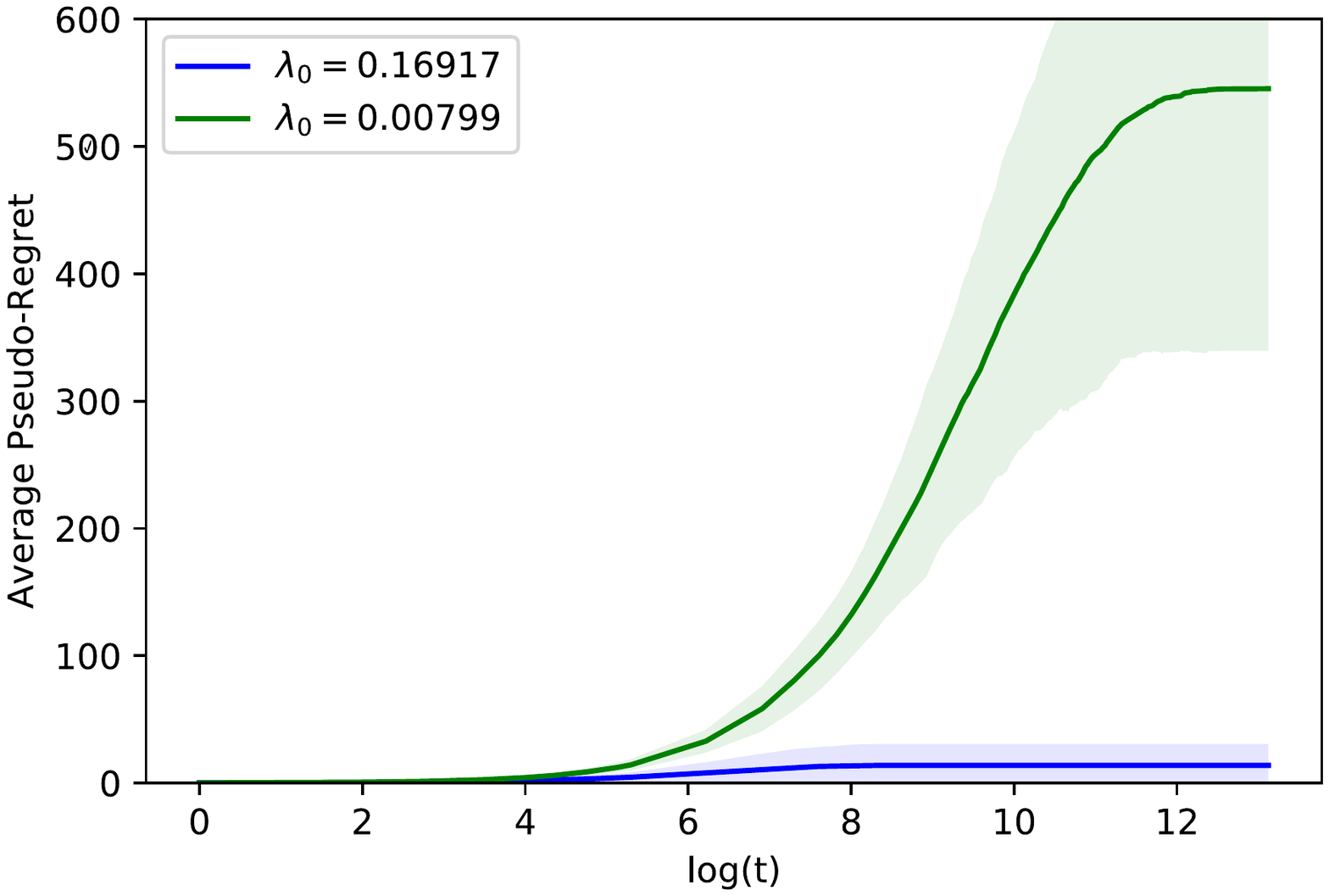}}
	\subfigure[General context arrivals.]{\includegraphics[width=0.32\textwidth]{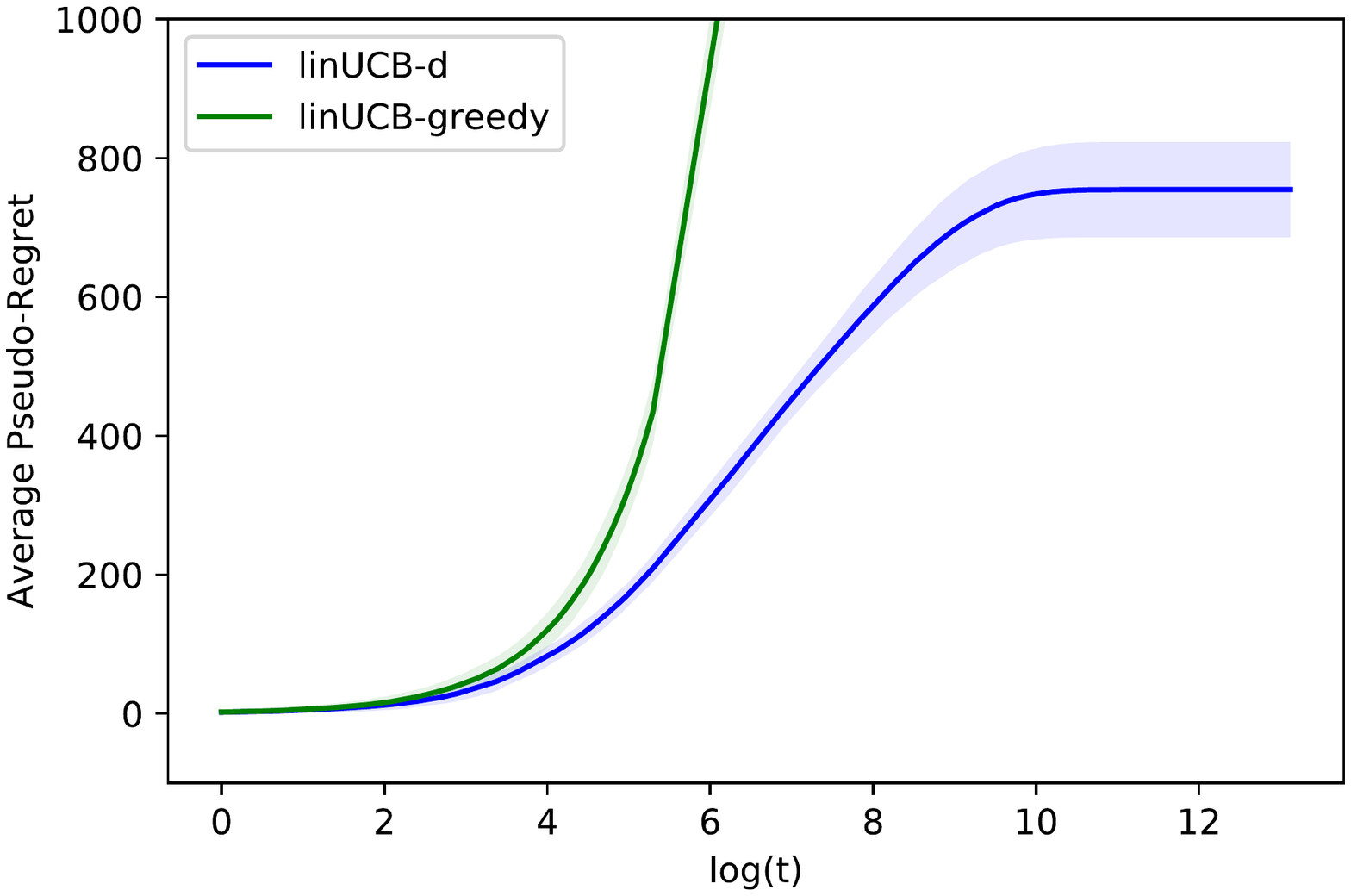}}
	\caption{Pseudo-regret over $\log T$. Shaded area indicates twice the standard deviation.}\label{fig:syn1}
\end{figure*}
\if{0}

\begin{figure*}[t]	\vspace{-0.1in}
	\centering  
	\subfigure[Pseudo-regret over $\log T$.]{\includegraphics[width=0.32\textwidth]{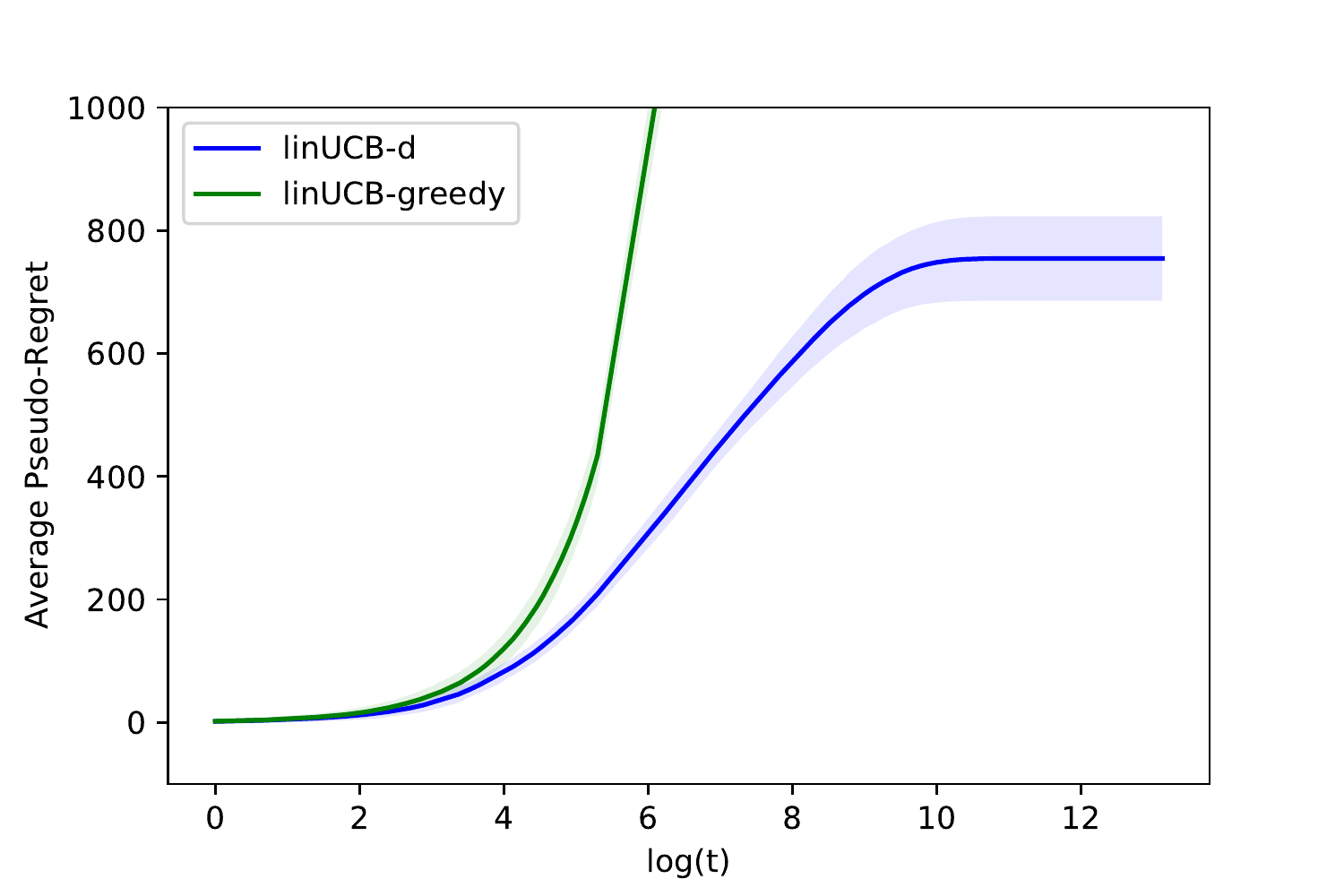}}
	\subfigure[Pseudo-regret over $T$.]{\includegraphics[width=0.32\textwidth]{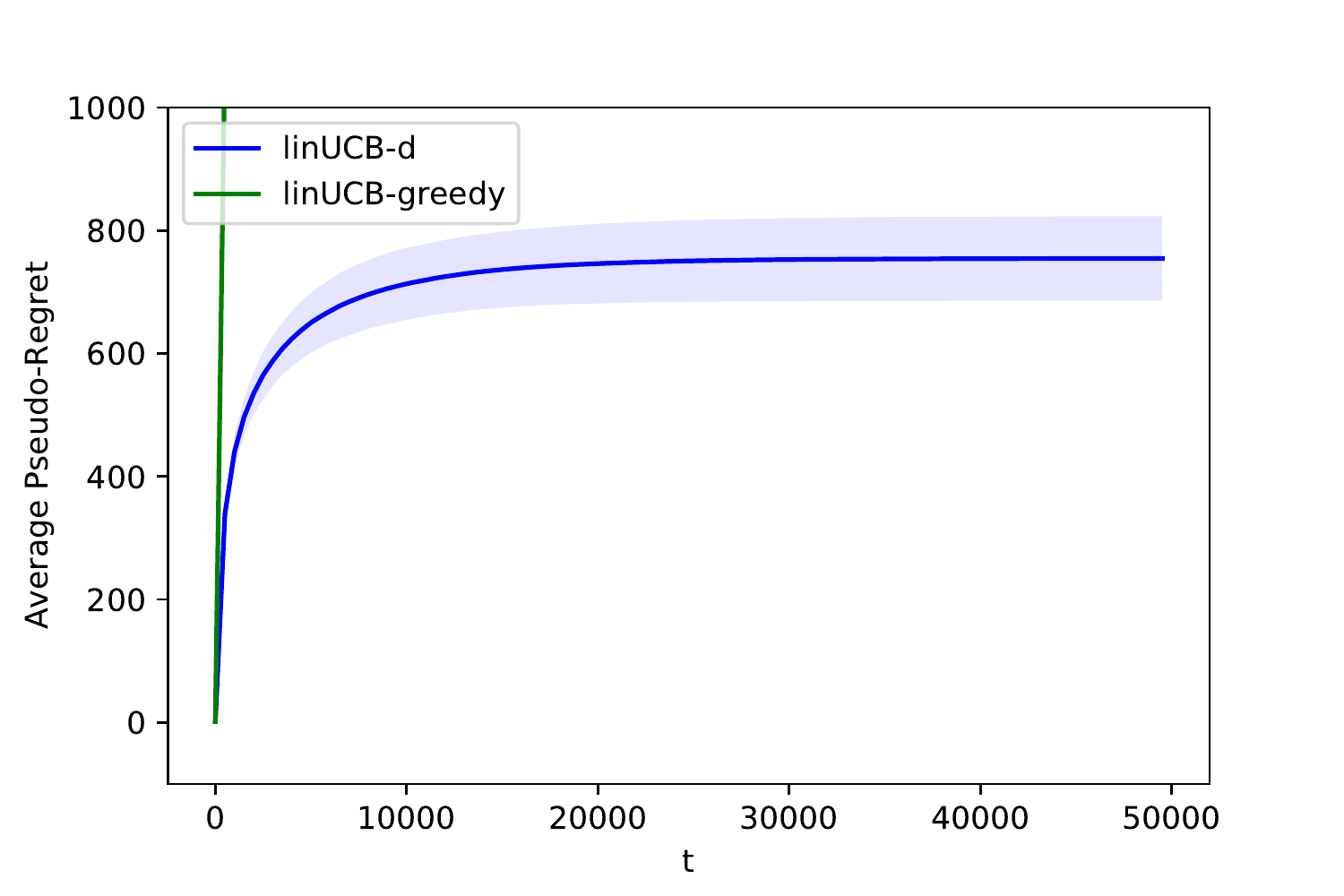}}
	\subfigure[Regret over $\log T$.]{\includegraphics[width=0.32\textwidth]{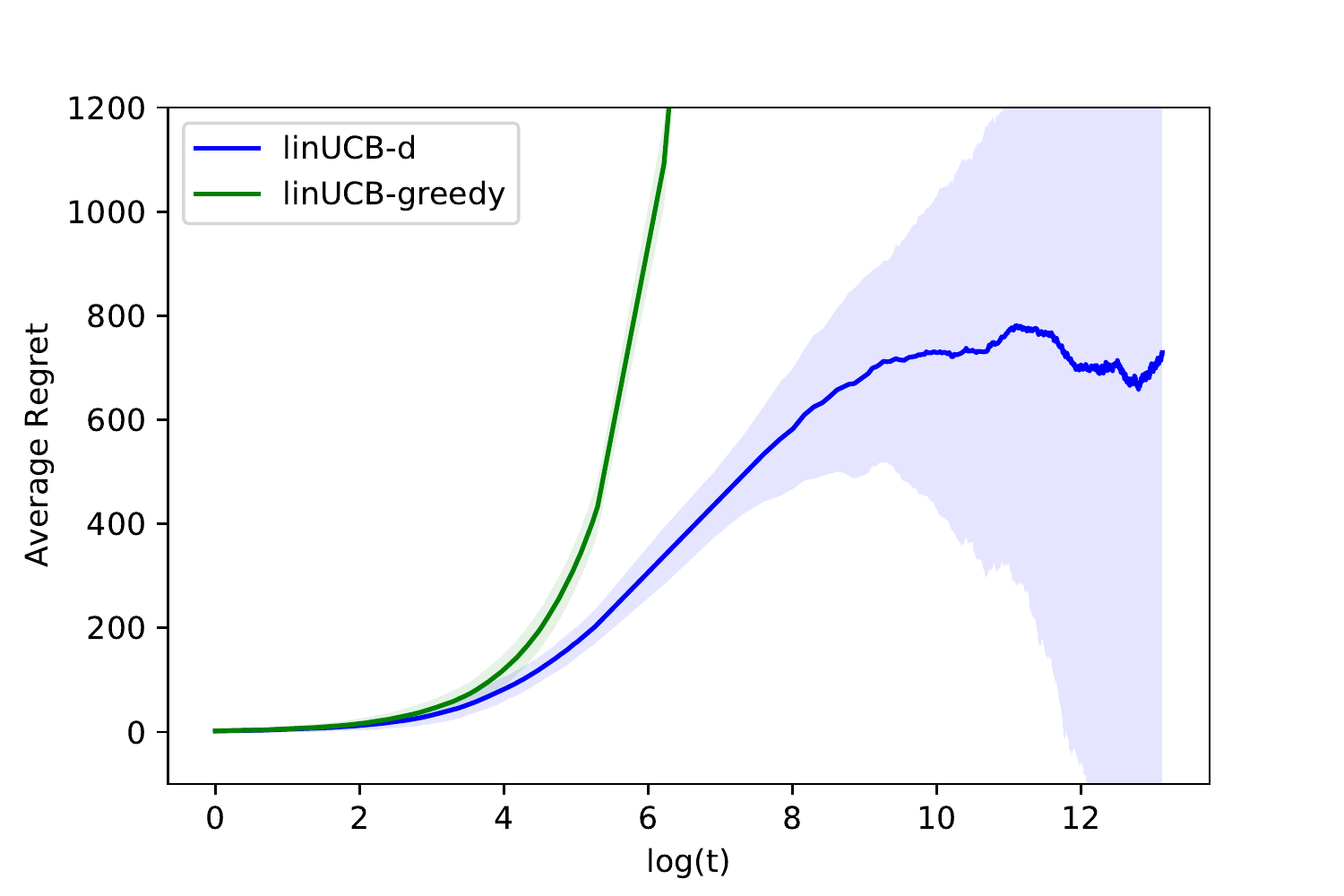}}
	\vspace{-0.1in}
	\caption{Regret versus time with ergodic context arrivals. Shaded area indicates twice the standard deviation.}\label{fig:syn2}
	\vspace{-0.15in}
\end{figure*}
\fi

For each algorithm, we randomly pick one out of those four contexts with probability $1/4$ each time, and add i.i.d. noise according to a standard Gaussian distribution $\Nc(0,1)$ to generate the reward. We run the simulation 100 times for each algorithm over 500,000 time slots. 
The sample average pseudo regrets are plotted in Fig.~\ref{fig:syn1}(a), where the pseudo regret is obtained by replacing $y_t$ in the definition of regret by $r(a_t,c_t)$, and the shaded area corresponds to twice of the standard deviation. As we expect, LinUCB-d with the same choice of $\alpha_t$ behaves exactly the same as LinUCB, and shows bounded regret. However, the greedy algorithm and UCB do not achieve constant regret. This indicates the following: First, the pure exploitation strategy does not work well in this case. This is because the selected parameters do not satisfy the {\it covariate diversity} defined in \cite{Bastani2017ExploitingTN}.
The covariate diversity in \cite{Bastani2017ExploitingTN} requires that the correlation matrix of the feature vectors lying in any half space is positive definite. It requires that there are feature vectors at least in any half space. Since the feature vectors in our example only lie in the first orthant, the covariate diversity condition is not satisfied and hence the greedy approach does not work well. Second, treating each context individually does not utilize the information obtained under other contexts about the same arm, thus cannot leverage the diversity of context to reduce the regret.

Next, we evaluate how $\lambda_0$ affects the regrets. We modify the feature vectors associated with contexts 2 and 3 while keeping the rest parameters the same. Specifically, we let $\xv(1,2)=(0.45, 0.65)^\TT$, $\xv(1,3)=(0.55, 0.35)^\TT$, $\xv(2,2)=(0.3, 0.5)^\TT$,  $\xv(2,3)=(0.7, 0.5)^\TT$. Compared with the previous setting, $\lambda_0$ increases from $0.00799$ to $0.16917$, while the reward gap $\Delta$ stays approximately the same. Intuitively, the basis vectors for each arm now point to more perpendicular directions and are more diverse in this sense. As indicated in Fig.~\ref{fig:syn1}(b), the increased diversity leads to much faster convergence and lower regret.

\subsection{General Context Arrivals} 
In this part, we investigate the performance of LinUCB-d with a more general context distribution. 
We first randomly generate parameter vectors in $\mathbb{R}^4$ for 5 arms under the constraint that $\|\thetav(a)\|_2= 10$. Thus, the arms are randomly located on a sphere in $\mathbb{R}^4$ with radius 10, which ensures that each of them can be optimal under certain contexts.   
For the feature vectors, we randomly draw $\xv(a, c_t) \in [0,1]^4$ for $a\in \{1, 2, 3, 4, 5\}$ at each time $t$ and make sure the reward gap condition in Assumption~1.2 is satisfied. We set $\Delta=0.5$ throughout the simulation. The contexts are drawn from a continuous set which includes infinite many contexts.

We only compare LinUCB-d with greedy LinUCB under this setup. This is because UCB for individual contexts cannot be run without recurring contexts, and LinUCB with the same $\alpha_t$ behaves the same as linUCB-d.
The sample average pseudo regrets are plotted in Fig.~\ref{fig:syn1}(c). 
As we observe, LinUCB-d still achieves constant regret, while the greedy algorithm does not converge.





	\section{Related Work}
	\label{sec:related}
	
	The model considered in this paper falls in the contextual bandits framework.
In the contextual MAB setting, the learner repeatedly takes one of $K$ actions in response to the observed context~\citep{Auer:2003:UCB}. Efficient exploration based on instantaneous context is of critical importance for contextual bandit algorithms to achieve small learning regret. The strongest known results~\citep{Auer:2003:UCB,Langford:2008,McMahan2009,beygelzimer11a,Dudk:2011,Agarwal2014} achieve an optimal regret after $T$ rounds of $O(\sqrt {KT})$ with high probability.

More specifically, our reward model is similar to that of linear contextual bandits in the literature. This setting is first introduced in~\cite{Auer:2003:UCB} through the LinRel algorithm and is subsequently improved through the OFUL algorithm in \cite{DaniHK08} and the LinUCB algorithm in \cite{Li:2010:LinUCB}. \cite{Tsitsiklis:2010:LinearBandits} extend the work of \cite{DaniHK08} by considering both optimistic and explore-then-commit strategies. It is shown in \cite{Abbasi:2011:IAL} that the regret can be upper bounded by $O(d\sqrt{T})$, where $d$ is the dimension of the context. A modified version of LinUCB, named SupLinUCB, is considered in \cite{Chu:SuperLin}, and shown to achieve $O(\sqrt{dT})$ regret.
Later, \cite{Valko:2013:Kernel} mix LinUCB and SupLinUCB with kernel functions and propose an algorithm to further reduce the regret to $O(\sqrt{\tilde{d}T})$, where $\tilde{d}$ is the effective dimension of the kernel feature space. This line of literature typically allows for arbitrary (adversarial) context sequences, and the $O(\sqrt {T})$ regret persists.

Recently, a few works start to take the diversity in contexts into consideration. \cite{goldenshluger2013} introduce a notion of diversity similar to Assumption~\ref{assump:bounded}.5 to a two-armed linear bandits setting. They show that the regret scales in $O(\log T)$ when a margin condition is satisfied, where the contribution from the ``large-margin'' covariates scales in $O(\log T)$. \cite{Bastani:2015} generalize the notation to a so called ``compatibility condition'' in a contextual linear bandits model with high-dimensional covariates, and investigate a LASSO based approach. They show that the regret can be bounded by a polynomial of $\log T$ under the margin condition. The $O(\log T)$ regret persists for error events associated with large-margin covariate vectors. In contrast, we show that a {\it bounded} regret can be achieved, by leveraging the geometric interpretation of the diversity condition and the reward gap condition. 

\cite{Bastani2017ExploitingTN} propose a concept called {\it covariate diversity}, which requires that the correlation matrix of the covariate vectors lying in any half space is positive definite. Under this condition, it shows that the exploration-free greedy algorithm is near-optimal for a two-armed bandit under the stochastic setting and achieves regret in $O(\log T)$. A perturbed adversarial setting with a similar notion of diversity is studied in \cite{kannan2018smoothed}. It shows that greedy algorithms can achieve regrets in $O(\sqrt{dT})$. We note that such condition is stronger than Assumption~\ref{assump:bounded}.5. As illustrated through {simulations} in Section~\ref{sec:simulation}, a greedy strategy may not work well under our setting, due to the difference between {the} diversity definitions.

\section{Conclusions}
The main purpose of this paper was to study the impact of \emph{context diversity} on the learning performance in stochastic linear contextual bandits. We have shown that, by adding an assumption that the context arrivals satisfy some diversity conditions, it is possible to significantly reduce the learning regret of contextual bandits. We proposed an algorithm called LinUCB-d and showed that when the diversity assumption is satisfied, the expected regret can in fact be upper bounded by a constant. This study illustrates the power of incorporating structure in the contexts to the bandit problem. It is of interest to evaluate whether other structures of the context can be similarly considered, and what their impacts would be. Another interesting problem is to study the impact of context diversity in other settings, such as the perturbed adversarial setting~\citep{kannan2018smoothed}. 



\subsubsection*{Acknowledgements}
    JY acknowledges the support from U.S. National Science Foundation under Grant ECCS-1650299.
    
	\bibliographystyle{apalike}
\bibliography{BanditYang}

\begin{thebibliography}{}

\bibitem[Abbasi-Yadkori et~al., 2011]{Abbasi:2011:IAL}
Abbasi-Yadkori, Y., P\'{a}l, D., and Szepesv\'{a}ri, C. (2011).
\newblock Improved algorithms for linear stochastic bandits.
\newblock In {\em Proceedings of the 24th International Conference on Neural
  Information Processing Systems}, pages 2312--2320.

\bibitem[Agarwal et~al., 2014]{Agarwal2014}
Agarwal, A., Hsu, D., Kale, S., Langford, J., Li, L., and Schapire, R.~E.
  (2014).
\newblock Taming the monster: A fast and simple algorithm for contextual
  bandits.
\newblock In {\em In Proceedings of the 31st International Conference on
  Machine Learning}, pages 1638--1646.

\bibitem[Agrawal and Goyal, 2012]{agrawal2012analysis}
Agrawal, S. and Goyal, N. (2012).
\newblock Analysis of {Thompson} sampling for the multi-armed bandit problem.
\newblock In {\em Conference on Learning Theory}, pages 39--1.

\bibitem[Agrawal and Goyal, 2013]{agrawal2013further}
Agrawal, S. and Goyal, N. (2013).
\newblock Further optimal regret bounds for {Thompson} sampling.
\newblock In {\em Artificial Intelligence and Statistics}, pages 99--107.

\bibitem[Auer, 2003]{Auer:2003:UCB}
Auer, P. (2003).
\newblock Using confidence bounds for exploitation-exploration trade-offs.
\newblock {\em J. Mach. Learn. Res.}, 3:397--422.

\bibitem[Auer et~al., 2002]{auer2002finite}
Auer, P., Cesa-Bianchi, N., and Fischer, P. (2002).
\newblock Finite-time analysis of the multiarmed bandit problem.
\newblock {\em Machine learning}, 47(2-3):235--256.

\bibitem[Bastani and Bayati, 2015]{Bastani:2015}
Bastani, H. and Bayati, M. (2015).
\newblock Online decision-making with high-dimensional covariates.
\newblock {\em SSRN Electronic Journal}.

\bibitem[Bastani et~al., 2017]{Bastani2017ExploitingTN}
Bastani, H., Bayati, M., and Khosravi, K. (2017).
\newblock Mostly exploration-free algorithms for contextual bandits.
\newblock {\em CoRR}, abs/1704.09011.

\bibitem[Beygelzimer et~al., 2011]{beygelzimer11a}
Beygelzimer, A., Langford, J., Li, L., Reyzin, L., and Schapire, R. (2011).
\newblock Contextual bandit algorithms with supervised learning guarantees.
\newblock In {\em Proceedings of the 14th International Conference on
  Artificial Intelligence and Statistics}, pages 19--26, Fort Lauderdale, FL,
  USA.

\bibitem[Bubeck and Cesa-Bianchi, 2012]{bubeck2012regret}
Bubeck, S. and Cesa-Bianchi, N. (2012).
\newblock Regret analysis of stochastic and nonstochastic multi-armed bandit
  problems.
\newblock {\em Foundations and Trends{\textregistered} in Machine Learning},
  5(1):1--122.

\bibitem[{Chen} and {Xu}, 2019]{Chen2019}
{Chen}, L. and {Xu}, J. (2019).
\newblock Budget-constrained edge service provisioning with demand estimation
  via bandit learning.
\newblock {\em IEEE Journal on Selected Areas in Communications},
  37(10):2364--2376.

\bibitem[Chu et~al., 2011]{Chu:SuperLin}
Chu, W., Li, L., Reyzin, L., and Schapire, R.~E. (2011).
\newblock Contextual bandits with linear payoff functions.
\newblock In {\em AISTATS}, volume~15, pages 208--214.

\bibitem[Dani et~al., 2008]{DaniHK08}
Dani, V., Hayes, T.~P., and Kakade, S.~M. (2008).
\newblock Stochastic linear optimization under bandit feedback.
\newblock In {\em 21st Annual Conference on Learning Theory - {COLT}}, pages
  355--366.

\bibitem[Dud{\'{\i}}k et~al., 2011]{Dudk:2011}
Dud{\'{\i}}k, M., Hsu, D.~J., Kale, S., Karampatziakis, N., Langford, J.,
  Reyzin, L., and Zhang, T. (2011).
\newblock Efficient optimal learning for contextual bandits.
\newblock {\em CoRR}, abs/1106.2369.

\bibitem[Goldenshluger and Zeevi, 2013]{goldenshluger2013}
Goldenshluger, A. and Zeevi, A. (2013).
\newblock A linear response bandit problem.
\newblock {\em Stoch. Syst.}, 3(1):230--261.

\bibitem[Kannan et~al., 2018]{kannan2018smoothed}
Kannan, S., Morgenstern, J.~H., Roth, A., Waggoner, B., and Wu, Z.~S. (2018).
\newblock A smoothed analysis of the greedy algorithm for the linear contextual
  bandit problem.
\newblock In {\em Advances in Neural Information Processing Systems 31}, pages
  2227--2236.

\bibitem[Lai and Robbins, 1985]{lai1985asymptotically}
Lai, T.~L. and Robbins, H. (1985).
\newblock Asymptotically efficient adaptive allocation rules.
\newblock {\em Advances in applied mathematics}, 6(1):4--22.

\bibitem[Langford and Zhang, 2008]{Langford:2008}
Langford, J. and Zhang, T. (2008).
\newblock The epoch-greedy algorithm for multi-armed bandits with side
  information.
\newblock In {\em Advances in Neural Information Processing Systems}, pages
  817--824.

\bibitem[Lattimore and Szepesv\'{a}ri, 2019]{LS19bandit-book}
Lattimore, T. and Szepesv\'{a}ri, C. (2019).
\newblock {\em Bandit Algorithms}.
\newblock Cambridge University Press (preprint).

\bibitem[Li et~al., 2010]{Li:2010:LinUCB}
Li, L., Chu, W., Langford, J., and Schapire, R.~E. (2010).
\newblock A contextual-bandit approach to personalized news article
  recommendation.
\newblock In {\em Proceedings of the 19th International Conference on World
  Wide Web}, pages 661--670.

\bibitem[McMahan and Streeter, 2009]{McMahan2009}
McMahan, H.~B. and Streeter, M.~J. (2009).
\newblock Tighter bounds for multi-armed bandits with expert advice.
\newblock In {\em COLT}.

\bibitem[Rusmevichientong and Tsitsiklis, 2010]{Tsitsiklis:2010:LinearBandits}
Rusmevichientong, P. and Tsitsiklis, J.~N. (2010).
\newblock Linearly parameterized bandits.
\newblock {\em Math. Oper. Res.}, 35:395--411.

\bibitem[Valko et~al., 2013]{Valko:2013:Kernel}
Valko, M., Korda, N., Munos, R., Flaounas, I., and Cristianini, N. (2013).
\newblock Finite-time analysis of kernelised contextual bandits.
\newblock In {\em Proceedings of the 29th Conference on Uncertainty in
  Artificial Intelligence}, pages 654--663.

\end{thebibliography}

\newpage
\onecolumn
\appendix

\thispagestyle{empty}
	
	\hsize\textwidth
    \linewidth\hsize \toptitlebar {\centering
    {\Large\bfseries Supplementary Material: Stochastic Linear Contextual Bandits with Diverse Contexts \par}}
    \bottomtitlebar

\aistatsauthor{ Weiqiang Wu \And Jing Yang \And  Cong Shen }

\appaddress{ London Stock Exchange \And  The Pennsylvania State University \And University of Virginia } 

    \vspace{0.2in}
\section{Proof of Proposition~\ref{prop:beta}}\label{appx:prop:beta}
For the constrained convex optimization problem in (\ref{eqn:beta}), the corresponding Lagrangian can be formulated as
\begin{align}
\Lc(\betav,\lambda)&=  \betav^\TT  \Nv^{-1}_t(a) \betav +\left(\xv(a,{c_t})-\Xv_t(a) \betav\right)^\TT \lambdav,
\end{align}
where $\lambdav \in \Rd$ is the Lagrangian multiplier vector.

Taking derivative with respect to $\betav$, we have
\begin{align}\label{eqn:kkt}
\Nv^{-1}_t(a) \betav -\Xv_t^\TT(a) \lambdav=0,
\end{align}
i.e.,
\begin{align}\label{beta}
\betav = \Nv_t(a) \Xv_t^\TT(a)  \lambdav.
\end{align}

Then, to satisfy the first constraint in (\ref{eqn:beta}), we have
\begin{align}\label{C}
\xv(a,{c_t}) = \Xv_t(a) \betav = \Xv_t(a) \Nv_t(a) {\Xv_t^\TT(a)} \lambdav,
\end{align}
which implies that
\begin{align}\label{lambda}
\lambdav = \left[ \Xv_t(a) \Nv_t(a) {\Xv_t^\TT(a)} \right]^{-1} \xv(a,{c_t}):= \Vv^{-1}_t(a)\xv(a,{c_t}).
\end{align}
Note that the definitions of $ \Xv_t(a)$ and $\Nv_t(a) $ ensure that $\Vv_t(a)$ is positive definite and invertible for every $t$. 
 
Plugging (\ref{lambda}) into (\ref{beta}), we have $$\betav = \Nv_t(a) \Xv_t^\TT(a) \Vv^{-1}_t(a)\xv(a,{c_t}),$$ which is the unique optimal solution to the optimization problem in (\ref{eqn:beta}).

%
%
%


\section{One the Relationship between LinUCB-d and LinUCB}\label{appx:prop:equivalence}



\textbf{Major difference.} One major difference between LinUCB-d and LinUCB~\citep{Li:2010:LinUCB} is as follows: Under LinUCB, at each time $t$, it will first estimate the true parameter of arm $a$ (i.e., $\thetav_a$) by solving a ridge regression and then use it to derive the UCB for the expected reward. The criterion to select the estimate is to minimize the penalized mean squared error in fitting the past observations; On the other hand, under LinUCB-d, the learner will directly estimate the expected reward through a linear combination of the rewards obtained when arm $a$ was pulled under all contexts. The criterion of selecting the estimate is to \tcr{minimize the uncertainty (or ``variance") of the estimation.} It avoids the intermediate step of trying to estimate $\thetav_a$ first in LinUCB. 


\textbf{Essential equivalence.} Although linUCB-d and linUCB view the problem from different angles, they actually produce the same estimate on the expected reward and confidence bound at every time $t$ under the same realizations of context arrivals and rewards, as shown below.

Based on (\ref{beta}), (\ref{C}) and (\ref{lambda}), the estimated mean reward $\hat{r}_t(a)$ in Algorithm~\ref{alg:linUCB-d} can be alternatively expressed as
\begin{align*}
\hat{r}_t(a)&  =  \sv_t^\TT(a) {\Xv_t^\TT(a)} \lambdav =\sv_t^\TT(a) {\Xv_t^\TT(a)} \Vv^{-1}_t(a) \xv(a,{c_t})  :=\hat{\thetav}^{\TT}_t(a) \xv(a,{c_t}), 
\end{align*}
where $\hat{\thetav}_t(a):= \Vv^{-1}_t(a) \Xv_t(a)\sv_t(a)$. We can verify that this is exactly the estimate of $\thetav(a)$ obtained by applying the ridge regression with penalty factor $l^2$ to the historical data $\{(\xv(a_\tau, c_\tau), y_\tau)\}_{\tau=1}^{t-1}$. 

Besides, for the $\hat{\sigma}_t(a)$ in Algorithm~\ref{alg:linUCB-d}, we have
\begin{align}
\hat{\sigma}_t(a)&=\sqrt{ \lambdav^\TT \Xv_t(a) \Nv_t(a) {\Xv_t^\TT(a)}\lambdav}= \|\xv(a,{c_t})\|_{\Vv^{-1}_t(a)}, \label{eqn:sigma}
\end{align}
where we follow the convention to denote $\xv\Vv\xv^\TT$ as $\|\xv\|^2_{\Vv}$.

Thus, if we let $l=1$, both $\hat{r}_t(a)$ and $\hat{\sigma}_t(a)$ share the same form as the corresponding quantities in LinUCB. As a {\it reformulation} of LinUCB, LinUCB-d automatically inherits all properties of LinUCB.

\textbf{Computation and analytical issues.} Computationally LinUCB-d is the same as LinUCB if we first compute the Lagrangian multiplier in (\ref{lambda}) through $\Vv_t(a)$, which can be equivalently computed by summing $\xv(a,c_t)\xv^\TT(a,c_t)$ over the time slots when $a$ is pulled. The advantage of LinUCB-d as an alternative form of LinUCB is on the analytical side. The prediction uncertainty minimization nature shown in Proposition~\ref{prop:beta} gives us a unique angle to elucidate the impact of context diversity on the corresponding learning regret, as elaborated in Lemma~\ref{lemma_nu}, Lemma~\ref{lemma_tau}, Lemma~\ref{lemma:cont_freq} and Lemma~\ref{lemma:cont_fraction}.

\section{Proof of Theorem~\ref{thm:main}}\label{appx:finite}
In the following, we will derive regret bounds for those error events individually, and then assemble them together to obtain the regret bound in Theorem~\ref{thm:main}.

\subsection{Bound the Regret over $\Ac_T$}\label{appx:At}
First, based on Hoeffding's inequality, and the {independent and uniform} arrival of contexts assumption, we have
\begin{align}
&\Pb \left[ N_{F_k}(c) \leq \frac{1}{2 n}\cdot 2^{k-1} \right] \leq \exp\left( -\frac{2^{k-1}}{2n^2} \right) .\label{eqn:irregular}
\end{align}
 Denote $ R(\Ac_T)$ as the regret incurred over $\Ac_T$, and $M$ as the maximum per-step regret. Then,
 \allowdisplaybreaks
\begin{align} 
 \Eb[R(\Ac_T)] &  \leq M\sum_{k=2}^{\lceil\log_2 T\rceil} \sum_{t \in F_k} \Eb[\lv\{t\in\Ac_T\}]  \leq M\sum_{k=1}^{\lfloor\log_2 T\rfloor} \sum_{t \in F_{k+1}} \sum_{a,c\in\bar{\Cc}_a} \Pb\left[ N_{F_{k}}(c) \leq \frac{2^{k-1}}{2 n} \right]\nonumber \\
 &	\leq MKd \sum_{k=1}^{\lfloor\log_2 T\rfloor} \sum_{t \in F_{k+1}} \exp \left( -\frac{2^{k-1}}{2n^2} \right) \leq Mn \sum_{t=2}^{\infty} \exp \left( -\frac{t}{8n^2} \right)  \label{eqn:irregular2}\\
 &\leq M Kd \int_0^\infty \exp \left( -\frac{t}{8n^2} \right) \diff t 
  = 8MKd n^2,\label{eqn:irregular3}
\end{align}
where (\ref{eqn:irregular2}) follows from (\ref{eqn:irregular}).

 
\subsection{Bound the Regret over $\Bc_T$}\label{appx:Bt}
First, we define
$\tilde{\sv}_t(a)$ and $\bar{\sv}_t(a)$ as follows:
\begin{align}
\tilde{\sv}_t(a)&= [ S_t(a,1)-N_t(a,1) r(a,1),\ldots, S_t(a,n_t)-N_t(a,n_t) r(a,n_t) ,\ov_d ]^\TT\\
\bar{\sv}_t(a)& = [ \ov_{n_t},-l\ev_1^\TT\thetav(a), \ldots, - l\ev_d^\TT \thetav(a) ]^\TT.
\end{align}
Intuitively, $\tilde{\sv}_t(a)$ corresponds to the accumulated noise in the observations when arm $a$ is pulled under different contexts up to time $t$, and $ \bar{\sv}_t(a)$ corresponds to the bias contributed by the feature vectors associated with the dummy contexts, which were added to ensure the existence of the unique solution in (\ref{eqn:beta}) for every $t$.

Then, the reward estimation error can be expressed as
\begin{align}
\hat{r}_t(a)-r(a,c_t)
&=\sv_t^\TT(a) \Nv_t^{-1}(a) \betav_t(a)- \thetav^\TT(a) \xv(a,c_t) \nonumber\\
&=\sv_t^\TT(a) \Nv_t^{-1}(a) \betav_t(a)-\thetav^\TT(a) \Xv_t(a) \betav_t(a)\label{eqn:error2-1}\\
&= \left( \sv_t(a) -\Nv_t(a) \Xv^\TT (a) \thetav(a)\right)^\TT\Nv_t^{-1}(a) \betav_t(a) \nonumber\\
&:=(\tilde{\sv}_t(a) +\bar{\sv}_t(a) )^\TT \Xv_t^\TT(a) \lambdav,\label{eqn:error2}
\end{align}
where (\ref{eqn:error2-1}) is due to the fact that $ \xv(a,c_t)=\Xv_t(a) \betav_t(a)$ according to Proposition~\ref{prop:beta}, and the $\lambdav$ in (\ref{eqn:error2}) is the Lagrangian multiplier involved in the proof of Proposition~\ref{prop:beta} in Appendix~\ref{appx:prop:beta} and satisfies (\ref{eqn:kkt}).
In the following, we will bound the contribution from $\bar{\sv}_t(a) $ and $\tilde{\sv}_t(a) $ in the estimation error, respectively.

We note that at any time $t$, $N_t(a,c)=1$ for $c=n_t+1,\ldots,n_t+d$. Besides, according to (\ref{eqn:sigma}) in Appendix~\ref{appx:prop:equivalence},
\begin{align*}
\hat{\sigma}_t(a)&= \sqrt{\lambdav^\TT \Xv_t(a) \Nv_t(a) {\Xv_t^\TT(a)}\lambdav} = \|\Nv_t^{1/2}(a) \Xv_t^\TT(a)\lambdav\|_2.
\end{align*}
Thus,
\begin{align}
|\bar{\sv}_t^\TT(a) \Xv_t^\TT(a) \lambdav| &= |\bar{\sv}_t^\TT(a) \Nv_t^{1/2}(a) \Xv_t^\TT(a) \lambdav|\\
&\leq \|\bar{\sv}_t^\TT(a)\|_2 \cdot  \| \Nv_t^{1/2}(a)  \Xv_t^\TT(a) \lambdav\|_2\label{eqn:bar_ss}\\
&=  \|\thetav(a)\|_2 \hat{\sigma}_t(a)\leq l s \hat{\sigma}_t(a)\label{eqn:bar_s},
\end{align}
where (\ref{eqn:bar_ss}) follows from the Cauchy-Schwarz inequality.

Before we proceed to bound $\tilde{\sv}_t^\TT(a) \Xv_t^\TT(a) \lambdav$, we first introduce the following notations. Recall that $\Vv_t(a):=\Xv_t(a) \Nv_t(a) \Xv_t^{\TT}(a)$. Let $\Vv_t^{1/2}(a)$ be its square root, i.e., $\Vv_t^{1/2}(a) \Vv_t^{1/2}(a) = \Vv_t(a) $. Let $\tilde{\Vv}_t(a):=\sum_{c=1}^{n} N_t(a,c) \xv(a,c) \xv^\TT(a,c)$, $\Vv_0:=l^2 \id$. Then, $\Vv_t(a)=\tilde{\Vv}_t(a)+\Vv_0$. We have
\begin{align}
|\tilde{\sv}_t^\TT(a) \Xv_t^\TT(a) \lambdav| &= |\tilde{\sv}_t^\TT(a) \Xv_t^\TT(a) \Vv_t^{-1/2}(a) \Vv_t^{1/2}(a) \lambdav|\nonumber\\
&\leq \|\tilde{\sv}_t^\TT(a) \Xv_t^\TT(a) \|_{\Vv_t^{-1}(a) }\|\Vv_t^{1/2}(a) \lambdav\|_2\\
& =\|\tilde{\sv}_t^\TT(a) \Xv_t^\TT(a) \|_{\Vv_t^{-1}(a) }\hat{\sigma}_t(a).\label{eqn:tilde_s}
\end{align}

We then adopt the Laplace method~\citep{LS19bandit-book} to bound $\|\tilde{\sv}_t^\TT(a) \Xv_t^\TT(a)  \|_{\Vv_t^{-1}(a) }$ as follows.

\begin{Lemma}\label{lemma:martingale}
Denote $M_t(\uv) : =\exp \left(\tilde{\sv}_t^\TT(a) \Xv_t^\TT(a)\uv - \frac{1}{2}\uv^\TT\tilde{\Vv}_t(a) \uv \right)$ for any $\uv$. Let $h(\uv)$ be a probability measure over $\Rb^d$. Then,  $\bar{M}_t:=\Eb_h[M_t(\uv)] $ is a super martingale with $\bar{M}_0=1$.
\end{Lemma}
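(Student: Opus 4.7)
The plan is to recognise $M_t(\uv)$ as the standard exponential super-martingale used in the method of mixtures (Laplace method), and then transfer the super-martingale property from each fixed $\uv$ to the $h$-average $\bar M_t$ by Tonelli's theorem.

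First I would unpack the quantities in the exponent. By the definitions of $\tilde\sv_t(a)$ and $\Xv_t(a)$, the $c$-th entry of $\tilde\sv_t(a)$ equals $\sum_{\tau<t}\eta_\tau \lv\{a_\tau=a,c_\tau=c\}$ for $c=1,\dots,n_t$ and is zero on the dummy block, so
\[
\Xv_t(a)\tilde\sv_t(a)=\sum_{\tau=1}^{t-1}\eta_\tau\,\xv(a,c_\tau)\lv\{a_\tau=a\},\qquad \tilde\Vv_t(a)=\sum_{\tau=1}^{t-1}\xv(a,c_\tau)\xv(a,c_\tau)^\TT\lv\{a_\tau=a\}.
\]
Setting $\Delta_\tau(\uv):=\xv(a,c_\tau)^\TT\uv\,\lv\{a_\tau=a\}$, which is $\Fc_\tau$-measurable since $c_\tau,a_\tau\in\Fc_\tau$, the definition of $M_t(\uv)$ rewrites as
\[
M_t(\uv)=\exp\!\left(\sum_{\tau=1}^{t-1}\!\left[\eta_\tau\Delta_\tau(\uv)-\tfrac12\Delta_\tau(\uv)^2\right]\right).
\]
This is precisely the telescoping exponential form one expects, so $M_{t+1}(\uv)=M_t(\uv)\exp\!\bigl(\eta_t\Delta_t(\uv)-\tfrac12\Delta_t(\uv)^2\bigr)$.

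Second, I would prove the pointwise super-martingale property. Because $\Delta_t(\uv)$ is $\Fc_t$-measurable and $\eta_t$ is conditionally $1$-subgaussian given $\Fc_t$ (Assumption~\ref{assump:bounded}.3, extended to all real scalars through centering), the tower/MGF bound gives
\[
\Eb\!\left[\exp(\eta_t\Delta_t(\uv))\,\middle|\,\Fc_t\right]\le \exp\!\left(\tfrac12\Delta_t(\uv)^2\right),
\]
and therefore $\Eb[M_{t+1}(\uv)\mid\Fc_t]\le M_t(\uv)$ for every fixed $\uv$. At $t=0$ the sum in the exponent is empty, so $M_0(\uv)=1$ identically.

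Finally I would exchange expectation and integration to get the claim for $\bar M_t$. Since $M_{t+1}(\uv)\ge 0$, Tonelli's theorem applies and yields
\[
\Eb[\bar M_{t+1}\mid\Fc_t]=\Eb\!\left[\Eb_h[M_{t+1}(\uv)]\,\middle|\,\Fc_t\right]=\Eb_h\!\left[\Eb[M_{t+1}(\uv)\mid\Fc_t]\right]\le \Eb_h[M_t(\uv)]=\bar M_t,
\]
while $\bar M_0=\Eb_h[1]=1$. The only technical point is the Fubini/Tonelli interchange, which is justified by non-negativity of $M_t(\uv)$; the rest is a direct consequence of the conditional subgaussian MGF bound applied $\uv$ by $\uv$. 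Nothing else in the argument is substantive, and this is the step that will later enable a uniform-over-$\uv$ concentration bound on $\|\tilde\sv_t^\TT(a)\Xv_t^\TT(a)\|_{\Vv_t^{-1}(a)}$ via a Gaussian choice of $h$.
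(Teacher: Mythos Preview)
Your proof is correct and follows essentially the same route as the paper: first establish that $M_t(\uv)$ is a super-martingale for each fixed $\uv$ via the conditional $1$-subgaussian MGF bound, then pass to $\bar M_t$ by integrating over $\uv$. Your write-up is in fact slightly more careful than the paper's in two places: you make the telescoping structure $M_{t+1}(\uv)=M_t(\uv)\exp(\eta_t\Delta_t(\uv)-\tfrac12\Delta_t(\uv)^2)$ explicit, and you justify the interchange $\Eb[\Eb_h[\cdot]\mid\Fc_t]=\Eb_h[\Eb[\cdot\mid\Fc_t]]$ via Tonelli (non-negativity of $M_t(\uv)$), whereas the paper simply asserts this step.
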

\begin{proof}
First, we note that $M_t(\uv)-M_{t-1}(\uv)$ equals zero if arm $a$ is not pulled at time $t$. Then, for any fixed $\uv$, we have
\begin{align}
&\Eb[M_t(\uv)\mid \Fc_{t-1}] \nonumber\\
& = \Eb\left[\exp \left(\tilde{\sv}_t^\TT(a) \Xv_t^\TT(a)\uv - \frac{1}{2}\uv^\TT\tilde{\Vv}_t(a) \uv \right) \middle|\Fc_{t-1}\right]\\
&= \Eb\left[\exp \left(\left(\tilde{\sv}_t^\TT(a)-\tilde{\sv}_{t-1}^\TT(a)\right) \Xv_t^\TT(a)\uv - \frac{1}{2}\uv^\TT\left(\tilde{\Vv}_t(a)-\tilde{\Vv}_{t-1}(a)\right) \uv \right)\right]M_{t-1}(\uv) \label{eqn:martingale}
\end{align}

Based on the definition of $\tilde{\sv}_t$ in (\ref{eqn:tilde_s}), $\tilde{\sv}_t^\TT(a)-\tilde{\sv}_{t-1}^\TT(a)$ equals $\eta_t \ev^\TT_c$ if $a_t=a, c_t=c$, and zero otherwise; Similarly, for  $\tilde{\Vv}_t(a)-\tilde{\Vv}_{t-1}(a)$, it equals $\xv(a,c)\xv^{\TT}(a,c)$ if $a_t=t, c_t=c$, and zero otherwise. Therefore, if $a_t\neq a$, 
\begin{align}
\Eb\left[\exp \left(\left(\tilde{\sv}_t^\TT(a)-\tilde{\sv}_{t-1}^\TT(a)\right) \Xv^\TT(a)\uv - \frac{1}{2}\uv^\TT\left(\tilde{\Vv}_t(a)-\tilde{\Vv}_{t-1}(a)\right) \uv \right) \middle| a_t\neq a\right]=1.\label{eqn:equal0}
\end{align}
If $a_t= a$, $c_t=c$,
\begin{align}
& \Eb\left[\exp \left(\left(\tilde{\sv}_t^\TT(a)-\tilde{\sv}_{t-1}^\TT(a)\right) \Xv^\TT(a)\uv - \frac{1}{2}\uv^\TT\left(\tilde{\Vv}_t(a)-\tilde{\Vv}_{t-1}(a)\right) \uv \right) \middle| a_t=a,c_t=c\right]\\
&= \Eb\left[\exp \left(\eta_t\ev_c \Xv^\TT(a)\uv - \frac{1}{2}\uv^\TT\xv(a,c)\xv^{\TT}(a,c) \uv \right)\right]\\
&= \Eb\left[\exp \left(\eta_t\xv^{\TT}(a,c)\uv \right)\right]\cdot\exp\left(- \frac{1}{2}\uv^\TT\xv(a,c)\xv^{\TT}(a,c) \uv \right)\\
&\leq \exp \left(\frac{1}{2}\left(\xv^{\TT}(a,c)\uv\right)^2 \right)\cdot\exp\left(- \frac{1}{2}\uv^\TT\xv(a,c)\xv^{\TT}(a,c) \uv \right)=1\label{eqn:notequal0}
\end{align}
where the last inequality follows from Assumption~1.3 that $\eta_t$ is conditionally 1-subgaussian.

Combining (\ref{eqn:equal0})(\ref{eqn:notequal0}) with (\ref{eqn:martingale}), for every fixed $\uv$, we have $\Eb[M_t(\uv) \mid \Fc_{t-1}]\leq M_{t-1}(\uv)$. Thus, $\{M_{t}(\uv)\}_t$ is a super-martingale, and
\begin{align}
\Eb[{M}_t(\uv)]&\leq {M}_0(\uv)]= 1.
\end{align}
Since this holds for every $\uv$, after taking expectation with respect to $\uv$, $\bar{M}_t$ is a super-martingale as well. Thus, $\Eb[\bar{M}_t]\leq \bar{M}_0=1$.
\end{proof}

\begin{Lemma}Under Algorithm~\ref{alg:linUCB-d},
\begin{align*}
&\Pb\left[  \|\tilde{\sv}_t^\TT(a) \Xv_t^\TT(a)\|_{ \Vv^{-1}_t(a)} \geq  \sqrt{2 u+\log \frac{ \det \Vv_t(a)}{\det \Vv_0}} \right]\leq e^{-u}.
\end{align*}\label{lemma:Bt}
\end{Lemma}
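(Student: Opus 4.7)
The plan is to apply the Laplace method (the ``method of mixtures'') using the super-martingale $\bar{M}_t$ established in Lemma~\ref{lemma:martingale}, with a Gaussian mixing density, and then convert the resulting tail bound into the stated self-normalized inequality via Markov's inequality. This is the standard route taken by Abbasi-Yadkori-Pál-Szepesvári for linear bandits, adapted here to the LinUCB-d formulation.

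First, I would choose $h(\uv)$ to be the density of a centered multivariate Gaussian with precision matrix $\Vv_0 = l^2 \id$, i.e.\ $h(\uv) = (2\pi)^{-d/2}\det(\Vv_0)^{1/2}\exp(-\tfrac{1}{2}\uv^\TT \Vv_0 \uv)$. Plugging this into $M_t(\uv)=\exp\bigl(\tilde{\sv}_t^\TT(a)\Xv_t^\TT(a)\uv - \tfrac{1}{2}\uv^\TT \tilde{\Vv}_t(a)\uv\bigr)$ and integrating, the exponent becomes a negative-definite quadratic in $\uv$ plus a linear term. Completing the square with respect to the matrix $\tilde{\Vv}_t(a) + \Vv_0 = \Vv_t(a)$ and carrying out the Gaussian integral gives the closed form
\begin{align*}
\bar{M}_t = \left(\frac{\det \Vv_0}{\det \Vv_t(a)}\right)^{1/2} \exp\left(\tfrac{1}{2}\,\|\tilde{\sv}_t^\TT(a)\Xv_t^\TT(a)\|_{\Vv_t^{-1}(a)}^{2}\right).
\end{align*}

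Next, Lemma~\ref{lemma:martingale} gives $\Eb[\bar{M}_t]\le 1$, so Markov's inequality applied to the nonnegative random variable $\bar{M}_t$ at level $e^{u}$ yields $\Pb[\bar{M}_t \ge e^{u}] \le e^{-u}$. Taking logarithms inside the event and rearranging, the event $\{\bar{M}_t \ge e^u\}$ is exactly
\begin{align*}
\tfrac{1}{2}\,\|\tilde{\sv}_t^\TT(a)\Xv_t^\TT(a)\|_{\Vv_t^{-1}(a)}^{2} \;\ge\; u + \tfrac{1}{2}\log\frac{\det \Vv_t(a)}{\det \Vv_0},
\end{align*}
which after multiplying by $2$ and taking square roots gives precisely the inequality in the statement of the lemma.

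The main obstacle is the Gaussian integration in the first step: one must verify that $\tilde{\Vv}_t(a)+\Vv_0$ is positive definite (which follows from $\Vv_0 = l^2\id\succ 0$) so that the integral converges, and then carry out the completing-the-square carefully to match the normalizing constants and obtain the determinant ratio $\det\Vv_0/\det\Vv_t(a)$. A secondary subtlety, though not strictly needed for a fixed $t$ since a pointwise Markov bound already suffices, is that the argument extends to a uniform-in-$t$ bound via the optional stopping theorem applied to $\bar{M}_t$; I would note this extension if it is invoked elsewhere in the proof of Theorem~\ref{thm:main}, but for the lemma as stated only the single-time Markov bound is required.
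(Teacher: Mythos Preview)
Your proposal is correct and follows essentially the same route as the paper's own proof: choose the Gaussian mixing density with precision $\Vv_0$, compute $\bar{M}_t$ in closed form by completing the square to obtain $\bar{M}_t=(\det\Vv_0/\det\Vv_t(a))^{1/2}\exp(\tfrac12\|\tilde{\sv}_t^\TT(a)\Xv_t^\TT(a)\|_{\Vv_t^{-1}(a)}^{2})$, then apply Markov's inequality together with $\Eb[\bar{M}_t]\le 1$ from Lemma~\ref{lemma:martingale}. Your remark that only the single-time bound is needed here is also consistent with how the paper uses the lemma.
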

\begin{proof}
Assume $h$ is the probability density function of a Gaussian distribution $\Nc(\ov, \Vv_0)$, i.e., $$h(\uv)=\frac{1}{\sqrt{(2\pi)^d \det \Vv_0}} \exp \left( -\frac{1}{2} \uv^\TT\Vv_0\uv \right).$$
Then,
\begin{align*}
\bar{M}_t& =\int_{\Rb^d} M_t(\uv ) h(\uv) \diff\uv\\
&=\frac{1}{\sqrt{(2\pi)^d \det \Vv_0^{-1}}} \int_{\Rb^d} \exp \left( \tilde{\sv}_t^\TT(a) \Xv^\TT (a)\uv - \frac{1}{2}\uv^\TT \tilde{\Vv}_t(a)\uv -\frac{1}{2} \uv^\TT \Vv_0 \uv \right) \diff \uv\\
&= \frac{1}{\sqrt{(2\pi)^d \det \Vv_0^{-1}}} \int_{\Rb^d} \exp \left( \tilde{\sv}_t^\TT(a) \Xv^\TT(a) \uv - \frac{1}{2}\uv^\TT \Vv_t(a) \uv  \right) \diff \uv\\
&=\frac{1}{\sqrt{(2\pi)^d \det\Vv_0^{-1}}} \int _{\Rb^d}\exp \left( \tilde{\sv}_t^\TT (a)\Xv^\TT(a) \Vv^{-1/2}_t(a) \Vv^{1/2}_t(a) \uv - \frac{1}{2}\|\uv^\TT \Vv_t(a)\|_{\Vv^{-1}_t(a)}  \right) \diff \uv\\
& =  \frac{1}{\sqrt{(2\pi)^d \det\Vv_0^{-1}}} \nonumber\\
&\qquad \times \int_{\Rb^d} \exp \left( \frac{1}{2}\|\tilde{\sv}_t^\TT(a) \Xv^\TT(a) \|^2_{\Vv^{-1}_t(a)} - \frac{1}{2}\|\uv^\TT \Vv_t(a)- \tilde{\sv}_t^\TT (a)\Xv^\TT(a) \|_{\Vv^{-1}_t(a)}^2 \right) \diff \uv\\
&= \sqrt{\frac{\det \Vv^{-1}_t(a)}{ \det {\Vv}^{-1}_0}}  \exp \left( \frac{1}{2}\|\tilde{\sv}_t^\TT(a) \Xv^\TT(a)\|^2_{ \Vv^{-1}_t(a)} \right) \\
&= \exp \left( \frac{1}{2}\|\tilde{\sv}_t^\TT(a) \Xv^\TT(a)\|^2_{ \Vv^{-1}_t(a)} +\frac{1}{2}\log \frac{\det\Vv_0}{ \det \Vv_t(a)}  \right).
\end{align*}

Therefore, according to Lemma~\ref{lemma:martingale}, we have
 \begin{align}
&\Pb\left[ \|\tilde{\sv}_t^\TT(a) \Xv_t^\TT(a) \|_{\Vv^{-1}_t(a)} \geq \sqrt{2u +\log \frac{ \det \Vv_t(a)}{\det{\Vv_0}} } \right]\nonumber\\
&= \Pb\left[ \frac{1}{2}\|\tilde{\sv}_t^\TT(a) \Xv_t^\TT(a) \|^2_{\Vv^{-1}_t(a)} + \frac{1}{2} \log \frac{\det {\Vv_0}}{ \det \Vv_t(a)} \geq u \right]\nonumber \\
& \leq e^{-u}\Eb[\bar{M}_t]\leq  e^{-u}.\label{eqn:tilde_s2}
\end{align}
\end{proof}

Next, we will provide a bound on $\frac{\det \Vv_0}{ \det \Vv_t(a)}$. The definition of $\Vv_0$ indicates that $\det {\Vv_0}=l^{2d}$. For $\Vv_t(a)$, we note that for any $\yv\in\Rb^{d}$, we have
\begin{align}
\yv^{\TT}\Vv_t (a)\yv & = \sum_{c=1}^n N_t(a,c)\yv^{\TT}\xv (a,c) \xv^{\TT}(a,c)\yv + l^2 \yv^{\TT}\yv\nonumber\\
&\leq t l^2 \|\yv\|_2^2+ l^2 \|\yv\|_2^2=(t+1)l^2 \|\yv\|_2^2,\label{eqn:eigen}
\end{align}
where the inequality follows from Assumption~\ref{assump:bounded}.1.

Eqn. (\ref{eqn:eigen}) indicates that the maximum eigenvalue of $\Vv_t (a)$ is upper bound by $(t+1)l^2$. Therefore, we have $\det \Vv_t(a)\leq (l^2 + t l^2)^d$, which implies that
\begin{align}\label{eqn:det}
\log \frac{ \det \Vv_t(a)}{\det \Vv_0} \leq  d\log (1+t).
\end{align}

Combining (\ref{eqn:det}) with (\ref{eqn:error2})(\ref{eqn:bar_s})(\ref{eqn:tilde_s}) and Lemma~\ref{lemma:Bt}, we have
\begin{align}
& \Pb\left[ |\hat{r}_t(a)-r(a,c_t)| \geq  \left(l s +\sqrt{2 u+d\log (1+t)} \right) \hat{\sigma}_t(a)\right]\nonumber\\
&\leq \Pb\left[  \|\tilde{\sv}_t^\TT(a) \Xv^\TT(a) \|_{\Vv^{-1}_t(a)} \hat{\sigma}_t(a) \geq  \sqrt{2 u+d\log (1+t)}  \hat{\sigma}_t(a)\right] \nonumber\\
&\leq \Pb\left[  \|\tilde{\sv}_t^\TT(a) \Xv^\TT(a) \|_{\Vv^{-1}_t(a)} \geq  \sqrt{2 u+\log \frac{ \det \Vv_t(a)}{\det \Vv_0}} \right]\leq e^{-u}.\label{eqn:ucb_bound}
\end{align}
Set $u=\log f(t)$ and $\alpha_t=l s +\sqrt{(2 +d)\log f(t)} $. When $t>2$, $f(t)>1+t$, therefore, (\ref{eqn:ucb_bound}) implies that
\begin{align}
\Pb\left[ |\hat{r}_t(a)-r(a,c_t)| \geq  \alpha_t \hat{\sigma}_t(a)\right]\leq \frac{1}{f(t)}.
\end{align}
Thus,
\begin{align}
\Eb[|\Bc_T|]&\leq 2+K\sum_{t=3}^\infty  \frac{1}{f(t)} \leq 2+2.5K,\quad \Eb[R(\Bc_T)]\leq M \Eb[|\Bc_T|]\leq M(2+2.5K).\label{eqn:Bt}
\end{align}

\if{0}
\begin{theorem}\label{thm:Bt}
Under Algorithm~\ref{alg:linUCB-d}, we have
\begin{align}
\Eb[\Bc_T]&\leq 2+K\sum_{t=3}^\infty  \frac{1}{f(t)} \leq 2+2.5K,\quad \Eb[R(\Bc_T)]\leq M \Eb[\Bc_T]\leq M(2+2.5K). \label{eqn:Bt}
\end{align}
\end{theorem}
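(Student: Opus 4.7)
The plan is to assemble the per-time, per-arm deviation bound derived above into a bound on the expected size of $\Bc_T$, and then convert that into a regret bound by multiplying by the maximum per-step regret $M$. Essentially all the analytic machinery (the error decomposition in~(\ref{eqn:error2}), the bias bound in~(\ref{eqn:bar_s}), the Laplace-method martingale tail inequality of Lemma~\ref{lemma:Bt}, and the log-determinant estimate~(\ref{eqn:det})) is already in place, so what remains is essentially bookkeeping plus one numerical estimate.

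First, I would invoke the per-arm bound~(\ref{eqn:ucb_bound}) with the choice $u = \log f(t)$, which exactly matches the exploration radius $\alpha_t = ls + \sqrt{(2+d)\log f(t)}$ used by LinUCB-d. This yields, for every $a \in [K]$ and every $t$ for which $f(t) > 1+t$, the clean inequality $\Pb[|\hat{r}_t(a)-r(a,c_t)| \geq \alpha_t \hat{\sigma}_t(a)] \leq 1/f(t)$. A direct check of $f(t) = 1 + t \log^2 t$ shows that the condition $f(t) > 1 + t$ fails at $t = 1, 2$ (since $\log^2 1 = 0$ and $2\log^2 2 < 2$) but holds for every $t \geq 3$. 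The first two rounds therefore need to be absorbed by the trivial bound $\Pb[t \in \Bc_T] \leq 1$, and this is the origin of the stray constant ``$2$'' in the final expression.

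Next, I would take a union bound over the $K$ arms to get $\Pb[t \in \Bc_T] \leq K/f(t)$ for $t \geq 3$, and apply linearity of expectation in the form $\Eb[|\Bc_T|] = \sum_{t=1}^T \Pb[t \in \Bc_T]$, yielding
\[
\Eb[|\Bc_T|] \;\leq\; 2 + K\sum_{t=3}^{\infty} \frac{1}{1 + t\log^2 t}.
\]
The only genuinely new step is then the numerical claim $\sum_{t=3}^\infty (1+t\log^2 t)^{-1} \leq 2.5$, which I would establish by peeling off a few initial terms and bounding the tail via the integral comparison, using that $-1/\log t$ is an antiderivative of $1/(t\log^2 t)$ so that $\int_3^\infty dt/(t\log^2 t) = 1/\log 3$. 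This is the main (and really the only) obstacle in the argument, and it is a routine calculus estimate rather than any deep issue.

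Finally, the second inequality in the statement follows from the observation that every time slot $t \in \Bc_T$ contributes at most $M$ to the cumulative regret by Assumption~\ref{assump:bounded}.1, so by linearity $\Eb[R(\Bc_T)] \leq M \cdot \Eb[|\Bc_T|] \leq M(2 + 2.5 K)$. Combining the two bounds gives exactly the theorem statement.
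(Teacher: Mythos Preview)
Your proposal is correct and follows essentially the same approach as the paper: set $u=\log f(t)$ in~(\ref{eqn:ucb_bound}), observe that $f(t)>1+t$ for $t\geq 3$ so the first two rounds are handled trivially, union bound over the $K$ arms, and sum to obtain~(\ref{eqn:Bt}). Your treatment is in fact slightly more explicit than the paper's, which simply asserts the final numerical bound $\sum_{t=3}^\infty 1/f(t)\leq 2.5$ without spelling out the integral comparison.
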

The detailed proof of Theorem~\ref{thm:Bt} can be found in Appendix~\ref{appx:thm:Bt}.
\fi

\if{0}
We have the following observation.
\begin{Proposition}
$\|\tilde{\sv}_t^\TT(i) \Xv^\TT(i)\Vv_t^{-1/2}(i)\|_2 =\|\hat{\thetav}_t(i)-\thetav_i\|_{\Vv_t^{-1}(i)}$, where $\|\xv\|_\Vv:=\sqrt{\xv^T\Vv\xv}$.
\end{Proposition}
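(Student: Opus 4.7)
The plan is to prove this as a direct algebraic identity: both sides turn out to be the same quadratic form $\tilde{\sv}_t^\TT(i)\Xv_t^\TT(i)\Vv_t^{-1}(i)\Xv_t(i)\tilde{\sv}_t(i)$, and the argument is entirely linear-algebraic, with no probabilistic input. The enabling facts are $\Vv_t^{-1/2}(i)\Vv_t^{-1/2}(i) = \Vv_t^{-1}(i)$ (valid since $\Vv_t(i)$ is symmetric positive definite, being a Gram matrix plus a positive multiple of $\id$) and the definition $\Vv_t(i) = \Xv_t(i)\Nv_t(i)\Xv_t^\TT(i)$.

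First, for the LHS I would apply the row-vector identity $\|\uv^\TT\Mv\|_2^2 = \uv^\TT\Mv\Mv^\TT\uv$ with $\uv^\TT = \tilde{\sv}_t^\TT(i)\Xv_t^\TT(i)$ and $\Mv = \Vv_t^{-1/2}(i)$, immediately giving $\|\tilde{\sv}_t^\TT(i)\Xv_t^\TT(i)\Vv_t^{-1/2}(i)\|_2^2 = \tilde{\sv}_t^\TT(i)\Xv_t^\TT(i)\Vv_t^{-1}(i)\Xv_t(i)\tilde{\sv}_t(i)$.

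Next, for the RHS I would unpack $\hat{\thetav}_t(i)-\thetav(i)$ using the closed form $\hat{\thetav}_t(i) = \Vv_t^{-1}(i)\Xv_t(i)\sv_t(i)$ derived in Appendix~\ref{appx:prop:equivalence}, together with the decomposition $\sv_t(i) = \Nv_t(i)\Xv_t^\TT(i)\thetav(i) + \tilde{\sv}_t(i) + \bar{\sv}_t(i)$ from Appendix~\ref{appx:Bt}. Combined with $\Vv_t(i) = \Xv_t(i)\Nv_t(i)\Xv_t^\TT(i)$, this yields $\hat{\thetav}_t(i) - \thetav(i) = \Vv_t^{-1}(i)\Xv_t(i)[\tilde{\sv}_t(i) + \bar{\sv}_t(i)]$. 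Isolating the martingale part of the error (the component the Proposition targets, and the same component that feeds into the self-normalized bound in Lemma~\ref{lemma:Bt}) gives $\hat{\thetav}_t(i) - \thetav(i) = \Vv_t^{-1}(i)\Xv_t(i)\tilde{\sv}_t(i)$; substituting into the weighted norm on the RHS and collapsing $\Vv_t^{-1}(i)\Vv_t(i)\Vv_t^{-1}(i) = \Vv_t^{-1}(i)$ produces exactly the same quadratic form obtained on the LHS, and taking square roots finishes the identity.

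The one point needing care is the convention on the weight matrix in the RHS norm: the collapse only works when that weight is $\Vv_t(i)$ (the standard LinUCB confidence-ellipsoid weight $\|\hat{\thetav}_t(i)-\thetav_i\|_{\Vv_t(i)}$), so I would read the subscript $\Vv_t^{-1}(i)$ in the statement as a typographical slip for $\Vv_t(i)$; with the literal reading, a spurious factor of $\Vv_t^{-2}(i)$ would survive and the identity would fail. Modulo that bookkeeping, and the implicit restriction to the martingale part of $\hat{\thetav}_t(i)-\thetav(i)$ which makes the equality exact rather than approximate, the proof reduces to the two lines of linear algebra above and I do not expect any genuine obstacle.
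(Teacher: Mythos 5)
Your proof is correct and takes essentially the same route as the paper's own proof of this Proposition, which (sitting in a commented-out draft block) stops after exactly your intermediate step $\hat{\thetav}_t(i)-\thetav_i=\Vv_t^{-1}(i)\bigl[\Xv_t(i)\sv_t(i)-\Xv_t(i)\Nv_t(i)\Xv_t^\TT(i)\thetav_i\bigr]$ and never completes the norm comparison. Your two diagnoses are also both right: the residual $\bar{\sv}_t(i)$ term is a genuine, generally nonzero regularization bias (it equals $-l^2\Vv_t^{-1}(i)\thetav_i$ after multiplying through by $\Xv_t(i)$), so the identity is exact only for the noise component of the error, and the subscript on the right-hand norm must be read as $\Vv_t(i)$ rather than $\Vv_t^{-1}(i)$ for the $\Vv_t^{-1}(i)\Vv_t(i)\Vv_t^{-1}(i)$ collapse to go through.
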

\begin{proof}
According to Proposition~\ref{prop:equivalence}, we have
\begin{align}
\hat{\thetav}_t(i)&= \Vv_t^{-1}(i) \Xv(i)\sv_t(i)
\end{align}
Thus,
\begin{align}
\hat{\thetav}_t(i)-\thetav_i&= \Vv_t^{-1}(i) \left[\Xv(i)\sv_t(i)-\Vv_t(i) \thetav_i\right]\\
&= \Vv_t^{-1}(i) \left[\Xv(i)\sv_t(i)-\Xv(i) \Nv_t(i) \Xv^{\TT}(i) \thetav_i\right]
\end{align}
\end{proof}
\fi


\subsection{Bound the Regret over $\Cc_T$}\label{appx:Ct}
Recall $B_k:=|\Bc_T\cap F_k|$, i.e., the number of bad estimates in frame $k$. Then, according to Markov's inequality, we have 
\begin{align}\label{eqn:BkC}
\Pb \left[ B_k \geq \frac{2^{k-1}}{4n} \right] & \leq \frac{\Eb[B_k] 4 n}{2^{k-1}}.
\end{align}
The definitions of $B_k$ and $\Bc_T$ also imply that $\sum_{k=1}^{\lceil\log_2 T\rceil} \Eb[B_k] = \Eb[\Bc_T].$
Therefore, 
\begin{align}
 \Eb[R(\Cc_T)]& \leq M\E [|\Cc_T| ]=M\sum_{k=1}^{\lfloor\log_2 T\rfloor}  |F_{k+1}|\cdot \Pb \left[ B_k \geq \frac{2^{k-1}}{4n} \right]\label{eqn:CT1}  \\
&	\leq M\sum_{k=1}^{\lfloor\log_2 T\rfloor}  2^{k}  \frac{\Eb[B_k] 4 n}{2^{k-1}} \leq 8  n M \Eb[|\Bc_T|]  \leq  8 n  M (2+2.5K),\label{eqn:CT31}
\end{align}
where (\ref{eqn:CT1}) follows from the definition of $\Cc_T$, and (\ref{eqn:CT31}) is due to (\ref{eqn:BkC}) and (\ref{eqn:Bt}).

\subsection{Bound the Regret over $\Dc_T$}\label{appx:Dt}
Let $\bar{N}_t(a,c)$ be the total number of time slots before $t$ when arm $a$ is pulled under context $c$, and all estimates are good, i.e.,
$\bar{N}_t(a,c)=|\{\tau\mid a_\tau=a, c_\tau=c, \tau\notin \Bc_t, 1\leq \tau<t\}|.$
We have the following observations.
\begin{Lemma}
\label{lemma_nu}
For any $a$, $c \notin \Cc_a$, $\bar{N}_t(a,c)\leq \frac{{4} \alpha_t^2}{\Delta^2}$ for all $t$.
\end{Lemma}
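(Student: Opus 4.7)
The plan is to combine two ingredients: (i) the UCB selection rule together with the ``good estimate'' condition forces $\hat{\sigma}_\tau(a)$ to be bounded below whenever a suboptimal arm $a$ is pulled under context $c \notin \Cc_a$ at a good time $\tau$; and (ii) Proposition~\ref{prop:beta} lets us upper-bound $\hat{\sigma}_\tau^2(a)$ by plugging a specific feasible $\betav$ into the minimization~(\ref{eqn:beta}).

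First I would establish the lower bound on $\hat{\sigma}_\tau(a)$. Fix any $\tau < t$ with $a_\tau = a$, $c_\tau = c \notin \Cc_a$, and $\tau \notin \Bc_t$. Let $a^\star = \arg\max_{b\in[K]} r(b,c)$. Since the estimates are good at time $\tau$, we have $|\hat r_\tau(b) - r(b,c)| \leq \alpha_\tau\hat\sigma_\tau(b)$ for every arm $b$, and since $a_\tau = a$ was chosen by the UCB rule, $\hat r_\tau(a) + \alpha_\tau\hat\sigma_\tau(a) \geq \hat r_\tau(a^\star) + \alpha_\tau\hat\sigma_\tau(a^\star) \geq r(a^\star, c)$. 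Combined with $\hat r_\tau(a) \leq r(a,c) + \alpha_\tau\hat\sigma_\tau(a)$ and the reward gap from Assumption~\ref{assump:bounded}.2, this yields $2\alpha_\tau\hat\sigma_\tau(a) \geq r(a^\star,c) - r(a,c) \geq \Delta$, i.e., $\hat\sigma_\tau^2(a) \geq \Delta^2/(4\alpha_\tau^2)$.

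Next I would derive the matching upper bound on $\hat\sigma_\tau^2(a)$. Because $c_\tau = c$ is one of the contexts indexed by $\Xv_\tau(a)$, the unit vector $\ev_c$ (with a $1$ in the coordinate corresponding to $c$ and zeros elsewhere) satisfies the constraint $\Xv_\tau(a)\ev_c = \xv(a,c) = \xv(a,c_\tau)$ of~(\ref{eqn:beta}), so it is feasible. Its objective value is $\ev_c^\TT \Nv_\tau^{-1}(a)\ev_c = 1/N_\tau(a,c)$ when $N_\tau(a,c) \geq 1$. Proposition~\ref{prop:beta} then gives $\hat\sigma_\tau^2(a) = \betav_\tau^\TT(a)\Nv_\tau^{-1}(a)\betav_\tau(a) \leq 1/N_\tau(a,c)$.

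Finally I would combine the two bounds over the sequence of good pulls. Enumerate the good times at which $a$ is pulled under $c$ as $\tau_1 < \tau_2 < \cdots < \tau_n$, where $n = \bar N_t(a,c)$. At time $\tau_i$ with $i \geq 2$, the previous good pulls show $N_{\tau_i}(a,c) \geq i-1 \geq 1$, so the two inequalities give $\Delta^2/(4\alpha_{\tau_i}^2) \leq \hat\sigma_{\tau_i}^2(a) \leq 1/(i-1)$, i.e., $i-1 \leq 4\alpha_{\tau_i}^2/\Delta^2 \leq 4\alpha_t^2/\Delta^2$ since $\alpha_\tau$ is nondecreasing in $\tau$ and $\tau_i < t$. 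Taking $i = n$ yields the claimed bound on $\bar N_t(a,c)$ (up to absorbing the additive constant from the first pull, consistent with the statement of the lemma).

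The main obstacle is identifying the right feasible vector for the optimization~(\ref{eqn:beta}); the choice $\betav = \ev_c$ is crucial because it converts the abstract ``minimum-variance'' value $\hat\sigma_\tau^2(a)$ into a concrete count $1/N_\tau(a,c)$, which is what ties $\bar N_t(a,c)$ back to $\alpha_t^2/\Delta^2$. Once this geometric interpretation is in place, the rest is essentially the classical UCB argument.
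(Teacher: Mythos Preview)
Your proof is correct and follows essentially the same approach as the paper: the same UCB-plus-good-estimate inequality $\Delta \leq 2\alpha_\tau\hat\sigma_\tau(a)$, the same use of the unit vector $\ev_c$ as a feasible point in Proposition~\ref{prop:beta} to obtain $\hat\sigma_\tau^2(a)\le 1/N_\tau(a,c)$, and the same combination of the two. The only cosmetic difference is that the paper phrases the final extension to all $t$ as ``$\bar N_t(a,c)$ is a step function and $\alpha_t$ is monotone'' rather than enumerating the good pull times $\tau_1<\cdots<\tau_n$ as you do; both versions carry the same harmless $+1$ slack that you explicitly flag.
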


\begin{proof}
We first consider a time slot $t\notin\Bc_T$ at which a sub-optimal action $a_t$ is taken under $c_t$. Then,
according to the LinUCB-d Algorithm, we must have
\begin{align}\label{eqn:ucb1}
 \hat{r}_t(a_t) + \alpha_t\hat{\sigma}_t(a_t)\geq  \hat{r}_t(a^*_t) + \alpha_t\hat{\sigma}_t(a^*_t).
\end{align}

Besides, $t\notin\Bc_T$ implies that
\begin{align}
|\hat{r}_t(a_t)-r(a_t,c_t)|&\leq \alpha_t\hat{\sigma}_t(a_t),\quad
|\hat{r}_t(a^*_t)-r(a^*_t,c_t)|\leq \alpha_t\hat{\sigma}_t(a^*_t).\label{eqn:ucb3}
\end{align}

Putting (\ref{eqn:ucb1})(\ref{eqn:ucb3}) together, we have
\begin{align}
& r(a_t,c_t)+2 \alpha_t\hat{\sigma}_t(a_t)\geq  \hat{r}_t(a_t) + \alpha_t\hat{\sigma}_t(a_t) \geq  \hat{r}_t(a^*_t) + \alpha_t\hat{\sigma}_t(a^*_t)\geq r(a^*_t,c_t).
\end{align}

Therefore,
\begin{align}
&\Delta \leq r(a^*_t,c_t) - r(a_t,c_t) \leq 2 \alpha_t\hat{\sigma}_t(a_t) = 2 \alpha_t \sqrt{\betav_t^\TT(a_t) \Nv_t^{-1}(a_t) \betav_t(a_t)}. \label{eqn:ucb_gap}
\end{align}

Denote $\tilde{\betav}\in\Rb^{n_t+d}$ as a unit vector whose $c_t$-th entry takes value $1$. Then, when $N_t(a_t,c_t)\neq 0$, $\tilde{\betav}$ satisfies the constraints in (\ref{eqn:beta}). According to Proposition~\ref{prop:beta}, we must have
\begin{align}
&\betav_t^\TT(a_t) \Nv_t^{-1}(a_t) \betav_t(a_t)\leq \tilde{\betav}^\TT(a_t) \Nv_t^{-1}(a_t) \tilde{\betav}(a_t)= \frac{1}{N_t(a_t,c_t)} \leq \frac{1}{\bar{N}_t(a_t,c_t)}.\label{eqn:timebound}
\end{align}
Combining (\ref{eqn:ucb_gap}) and (\ref{eqn:timebound}), we have
\begin{align}\label{eqn:barN}
\bar{N}_t(a_t,c_t)&\leq \frac{1}{\betav_t^\TT(a_t) \Nv_t^{-1}(a_t) \betav_t(a_t)}\leq \frac{4 \alpha^2_t }{\Delta ^2}.
\end{align}
When $N_t(a_t,c_t)=0$, we must have $\bar{N}_t(a_t,c_t)=0$, thus (\ref{eqn:barN}) is satisfied as well.

Hence, Lemma \ref{lemma_nu} holds for all time slots $t\notin \Bc_T$. Since $\bar{N}_t(a,c)$ is a step function for any fixed $(a,c)$ pair and $\alpha_t$ monotonically increases in $t$, Lemma \ref{lemma_nu} hold for all $t$ as well.
\end{proof}

Lemma~\ref{lemma_nu} indicates that the total number of times that $a$ is pulled as a sub-optimal arm up to $t$ is bounded by $O(\log f(t))$. Based on this result, we will then show that the total number of times that $a$ is pulled as an optimal arm grows linearly in $t$, as described in Lemma~\ref{lemma_tau}. Next, we utilize Lemma~\ref{lemma_tau} to show the diminishing estimation uncertainty in Lemma~\ref{lemma_beta}, which eventually leads to the finite regret bound over $\Dc_T$ in Theorem~\ref{thm:Dt}. 


\begin{Lemma}
\label{lemma_tau}
For any $a$, $c \in \Cc_a$ and any time slot $t\in \Dc_T$, we must have $ N_t(a,c)\geq \frac{t}{16n} - \frac{8K\alpha_t^2}{\Delta^2}$.
\end{Lemma}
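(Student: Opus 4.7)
My plan is to exploit the fact that $t\in\Dc_T$ means $t$ lies in some frame $F_{k+1}$ and simultaneously satisfies $t\notin\Ac_T$, $t\notin\Cc_T$, and $t\notin\Bc_T$. The first two of these are frame-level conditions that put strong control on the behavior during the previous frame $F_k$, and I will only need to look inside $F_k$ to dig out the claimed linear-in-$t$ number of pulls. The reason this works is that $2^k\leq t<2^{k+1}$ gives $2^{k-1}>t/4$, so any bound of the form ``something happens at least $2^{k-1}/(\text{const}\cdot n)$ times in $F_k$'' automatically produces a linear-in-$t$ count.

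The key steps, in order, would be the following. First, from $t\notin\Ac_T$ I extract, for every $c\in\bar{\Cc}_a$, the arrival lower bound $N_{F_k}(c)>2^{k-1}/(2n)$. Second, from $t\notin\Cc_T$ I get $B_k=|\Bc_T\cap F_k|<2^{k-1}/(4n)$, i.e., few bad-estimate slots fall in $F_k$. Subtracting, the number of time slots $\tau\in F_k$ at which $c_\tau=c$ and $\tau\notin\Bc_T$ is at least $N_{F_k}(c)-B_k>2^{k-1}/(4n)$. Third, at each such ``good'' slot the chosen arm is either $a$ itself or some $a'\neq a$, and in the latter case $a'$ is sub-optimal under $c$ (because $c\in\Cc_a$); Lemma~\ref{lemma_nu} then bounds the total count of such sub-optimal good pulls across all $a'\neq a$ and across the whole horizon by $(K-1)\cdot\tfrac{4\alpha_t^2}{\Delta^2}\leq\tfrac{4K\alpha_t^2}{\Delta^2}$. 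Fourth, combining, the number of good $F_k$-slots with context $c$ at which arm $a$ is in fact chosen is at least
\[
\frac{2^{k-1}}{4n}-\frac{4K\alpha_t^2}{\Delta^2}\;\geq\;\frac{t}{16n}-\frac{4K\alpha_t^2}{\Delta^2},
\]
which is a lower bound on $N_t(a,c)$ and is stronger than the stated bound (the $\tfrac{8K\alpha_t^2}{\Delta^2}$ in the statement absorbs any slack, e.g.\ from using $K$ instead of $K-1$).

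The part I would treat most carefully is the mismatch between $\Cc_a$ (in the lemma statement) and $\bar{\Cc}_a$ (which is the only set controlled by $\Ac_T$); the argument above really delivers the bound for $c\in\bar{\Cc}_a$, and for the downstream use in Lemma~\ref{lemma_beta} this is exactly what is needed, so I would read the quantifier ``$c\in\Cc_a$'' as implicitly restricted to $\bar{\Cc}_a\subseteq\Cc_a$. Aside from that, the only real combinatorial obstacle is making sure one does not double-count the bad slots when subtracting: the sub-optimal-pull bound from Lemma~\ref{lemma_nu} applies only to good slots, but because we already removed the $B_k$ bad slots from $F_k$ before invoking it, the arithmetic is clean. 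Everything else is bookkeeping over the dyadic frame, and the factor-of-two slack in the stated constants leaves enough room to absorb rounding from $2^{k-1}>t/4$ and $K-1<K$.
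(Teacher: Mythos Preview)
Your proposal is correct and follows essentially the same approach as the paper's proof: decompose $N_t(a,c)\geq N_{F_k}(a,c)=N_{F_k}(c)-\sum_{b\neq a}N_{F_k}(b,c)$, use $t\notin\Ac_T$ and $t\notin\Cc_T$ to bound $N_{F_k}(c)$ and $B_k$ respectively, and apply Lemma~\ref{lemma_nu} to control the good sub-optimal pulls, arriving at the same (stronger) constant $\tfrac{4K\alpha_t^2}{\Delta^2}$. Your observation about $\bar{\Cc}_a$ versus $\Cc_a$ is also apt: the paper's $\Ac_T$ indeed only controls $c\in\bar{\Cc}_a$, and this is precisely what the downstream Lemma~\ref{lemma_beta} needs.
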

\begin{proof}
Assume $t$ lies in the $(k+1)$th time frame. Then, based on the definition of $N_t(a,c)$, we must have
\begin{align}
  N_t(a,c) & \geq  N_{F_k}(a,c) = N_{F_k}(c) - \sum_{b:b \neq a} N_{F_k}(b,c)  \geq \frac{2^{k-1}}{2n} - \Big[B_k + \sum_{b:b \neq a} \bar{N}_{2^k}(b,c)\Big] \label{eqn:goodtime1}\\
 & \geq \frac{2^{k-1}}{2n} - \frac{2^{k-1}}{4 n} - K \frac{4 \alpha_t^2}{\Delta^2} \geq \frac{t}{16n} -  \frac{4 K \alpha_t^2}{\Delta^2},\label{eqn:goodtime2}
\end{align}
where (\ref{eqn:goodtime1}) follows from the assumption that $t\notin\Ac_T$, and (\ref{eqn:goodtime2}) follows from the assumption that $t\notin\Cc_T$ and Lemma~\ref{lemma_nu}.
\end{proof}

Before we proceed, we introduce the following lemma.

\begin{Lemma}\label{lemma:span}
Let $\{\zv_1,\zv_2,\ldots,\zv_d\}$ be a basis for $\Rb^d$, and $\Zv:=[\zv_1,\zv_2,\ldots,\zv_d]$. Then, for any $\xv\in\Rb^d$, $\|\xv\|_2\leq l$, we can express it as
$\xv=\Zv \betav$, where $\betav\in \Rb^d$, $\|\betav\|_1\leq\frac{l\sqrt{d}}{\sqrt{\lambda_{\min}( \Zv^\TT\Zv)}}$.
\end{Lemma}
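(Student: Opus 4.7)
The plan is to exploit the fact that $\Zv$ is invertible (since its columns form a basis of $\Rb^d$), which uniquely determines $\betav = \Zv^{-1}\xv$, and then to bound $\|\betav\|_1$ by going through $\|\betav\|_2$ via the Cauchy--Schwarz inequality and the smallest eigenvalue of $\Zv^\TT\Zv$.

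Specifically, I would first note that the constraint $\xv = \Zv\betav$ admits the unique solution $\betav = \Zv^{-1}\xv$. To control its $\ell_2$ norm, I would write
\begin{equation*}
\|\xv\|_2^2 \;=\; \betav^\TT \Zv^\TT \Zv \,\betav \;\geq\; \lambda_{\min}(\Zv^\TT\Zv)\,\|\betav\|_2^2,
\end{equation*}
which, together with the assumption $\|\xv\|_2\le l$, yields
\begin{equation*}
\|\betav\|_2 \;\leq\; \frac{\|\xv\|_2}{\sqrt{\lambda_{\min}(\Zv^\TT\Zv)}} \;\leq\; \frac{l}{\sqrt{\lambda_{\min}(\Zv^\TT\Zv)}}.
\end{equation*}

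To convert the $\ell_2$ bound into an $\ell_1$ bound, I would apply the standard equivalence $\|\betav\|_1 \leq \sqrt{d}\,\|\betav\|_2$, which follows from Cauchy--Schwarz applied to $\langle \mathrm{sign}(\betav), \betav\rangle$. Combining the two inequalities gives the claimed bound
\begin{equation*}
\|\betav\|_1 \;\leq\; \sqrt{d}\,\|\betav\|_2 \;\leq\; \frac{l\sqrt{d}}{\sqrt{\lambda_{\min}(\Zv^\TT\Zv)}}.
\end{equation*}

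There is no real obstacle here: the statement is essentially a bookkeeping lemma connecting the conditioning of the basis (captured by $\lambda_{\min}(\Zv^\TT\Zv)$) to the $\ell_1$ size of the unique coefficient vector representing $\xv$. The only subtle point is to remember to pick up the factor $\sqrt d$ when passing from $\|\cdot\|_2$ to $\|\cdot\|_1$; this is what eventually shows up in the $\delta = l\sqrt{d/\lambda_0}$ appearing in Theorems~\ref{thm:main} and~\ref{thm:continuous}.
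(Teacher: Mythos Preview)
Your argument is correct and matches the paper's proof essentially step for step: the paper compresses exactly the same chain $l^2\ge\|\xv\|_2^2=\betav^\TT\Zv^\TT\Zv\betav\ge\lambda_{\min}(\Zv^\TT\Zv)\|\betav\|_2^2\ge\lambda_{\min}(\Zv^\TT\Zv)\|\betav\|_1^2/d$ into a single displayed line. There is nothing to add.
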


\begin{proof}
Since 
\begin{align}\label{eqn:cont_eigen_0}
l^2\geq \|\xv\|_2^2=\betav^\TT \Zv^\TT\Zv \betav\geq \lambda_{\min}( \Zv^\TT\Zv)\betav^\TT\betav\geq \frac{\lambda_{\min}( \Zv^\TT\Zv) \|\betav\|_1^2 }{d},
\end{align}
we have $\|\betav\|_1 \leq \frac{l\sqrt{d}}{\sqrt{\lambda_{\min}( \Zv^\TT\Zv)}}$.
\end{proof}

\begin{Lemma}
\label{lemma_beta}
For any arm $a\in[K]$, any time slot $t\in\Dc_T$, we must have $\betav_t^\TT(a) \Nv_t^{-1}(a) \betav_t(a) \leq \frac{\delta^2}{\frac{t}{16 n} - \frac{4 K \alpha_t^2}{\Delta^2}}$, where $\delta:=l\sqrt{d/\lambda_0}$.

\end{Lemma}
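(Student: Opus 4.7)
The plan is to exploit the optimality characterization of $\betav_t(a)$ from Proposition~\ref{prop:beta} by exhibiting a single, well-chosen feasible vector $\tilde{\betav}$ and reading off the value of the objective at $\tilde{\betav}$. The natural choice is a vector supported on the $d$ coordinates indexed by $\bar{\Cc}_a$, because those are the contexts for which arm $a$ is optimal and for which Lemma~\ref{lemma_tau} gives a linear-in-$t$ lower bound on $N_t(a,c)$.

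First I would unpack the geometry. Since the columns of $\bar{\Phi}_a$ form a basis of $\Rb^d$ with $\lambda_{\min}(\bar{\Phi}_a^\TT\bar{\Phi}_a)\geq\lambda_0$, and since $\|\xv(a,c_t)\|_2\leq l$ by Assumption~\ref{assump:bounded}.1, Lemma~\ref{lemma:span} applied to $\xv(a,c_t)$ yields a unique $\wv\in\Rb^d$ with $\xv(a,c_t)=\bar{\Phi}_a\wv$ and $\|\wv\|_2^2\leq l^2/\lambda_0=\delta^2/d$. Because $t\in\Dc_T$ forces $t\notin\Ac_T$, irregular context arrivals did not occur in the preceding frame, so every context in $\bar{\Cc}_a\subseteq\Cc_a$ has appeared at least once by time $t$ and hence indexes a column of $\Xv_t(a)$. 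Define $\tilde{\betav}\in\Rb^{n_t+d}$ by placing the components of $\wv$ at the coordinates corresponding to $\bar{\Cc}_a$ and zero elsewhere. Then $\Xv_t(a)\tilde{\betav}=\bar{\Phi}_a\wv=\xv(a,c_t)$, so $\tilde{\betav}$ is feasible for the program in (\ref{eqn:beta}).

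Next, by Proposition~\ref{prop:beta} and the feasibility of $\tilde{\betav}$,
\[
\betav_t^\TT(a)\Nv_t^{-1}(a)\betav_t(a)\;\leq\;\tilde{\betav}^\TT\Nv_t^{-1}(a)\tilde{\betav}\;=\;\sum_{c\in\bar{\Cc}_a}\frac{w_c^2}{N_t(a,c)}.
\]
Since $\bar{\Cc}_a\subseteq\Cc_a$ and $t\in\Dc_T$, Lemma~\ref{lemma_tau} gives $N_t(a,c)\geq\frac{t}{16n}-\frac{4K\alpha_t^2}{\Delta^2}$ for every $c\in\bar{\Cc}_a$ simultaneously. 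Pulling this common lower bound out of the sum and using $\sum_c w_c^2=\|\wv\|_2^2\leq\delta^2/d\leq\delta^2$ yields exactly the claimed inequality.

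The only subtle point—and the one I would take care over—is the validity of the construction of $\tilde{\betav}$: I need the columns corresponding to $\bar{\Cc}_a$ to already be present in $\Xv_t(a)$ so that Proposition~\ref{prop:beta} can be applied in ambient dimension $n_t+d$. As noted, this is guaranteed by $t\notin\Ac_T$. Everything else is essentially bookkeeping, combining the basis bound on $\|\wv\|_2$ from Lemma~\ref{lemma:span}, the optimality of $\betav_t(a)$ from Proposition~\ref{prop:beta}, and the occupation bound from Lemma~\ref{lemma_tau}.
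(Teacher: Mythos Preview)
Your proposal is correct and follows essentially the same route as the paper: construct a feasible vector for the program in Proposition~\ref{prop:beta} supported on the $d$ coordinates indexed by $\bar{\Cc}_a$, invoke optimality of $\betav_t(a)$, then combine the norm bound on the coefficient vector with the occupation bound from Lemma~\ref{lemma_tau}. The paper writes the coefficient bound as $\|\bar{\betav}\|_1\leq\delta$ (exactly the statement of Lemma~\ref{lemma:span}) and then uses $\sum_c\bar{\betav}[c]^2\leq\|\bar{\betav}\|_1^2$, whereas you use the sharper $\ell_2$ bound $\|\wv\|_2^2\leq l^2/\lambda_0=\delta^2/d$; note that this $\ell_2$ bound is the intermediate inequality in the \emph{proof} of Lemma~\ref{lemma:span} (equation~(\ref{eqn:cont_eigen_0})) rather than its statement, so your citation is slightly imprecise, but the inequality is immediate from $l^2\geq\|\bar{\Phi}_a\wv\|_2^2\geq\lambda_0\|\wv\|_2^2$ and the argument goes through.
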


\begin{proof}
For any $a\in[K], c\in\Cc$, let $\bar{\betav}(a,c)$ be the solution to the following equation 
\begin{eqnarray}
\xv(a,c)=\Xv_t(a)\bar{\betav}, \quad\bar{\betav}[c] = 0, \mbox{ for }c \notin \bar{\Cc}_a. \label{opt:Ca} 
\end{eqnarray}
Note that we use $\bar{\betav}[c]$ to denote the entry associated with context $c$ in $\bar{\betav}$. 

Consider a time slot $t\in\Dc_T$. Based on the definitions of the error events in Section~\ref{sec:uniform}, we note that all contexts in $\bar{\Cc}_a$ must have appeared before time slot $t$. Thus, $\Xv_t(a)$ contains all columns in $\bar{\Zv}_a$.   
Therefore, $\bar{\betav}(a,c)$ is simply the coefficient vector if we express $\xv(a,c)$ as a linear combination of the feature vectors in $\bar{\Zv}_a$. The diversity assumption in Assumption~\ref{assump:bounded}.5 guarantees that there exists a unique solution $\bar{\betav}(a,c)$ for each $(a,c)$ pair. Besides, Lemma~\ref{lemma:span} implies that $\|\bar{\betav}(a,c)\|_{1}\leq \frac{l\sqrt{d}}{\sqrt{\lambda_{\min}(\bar{\Phi}_a^\TT \bar{\Phi}_a)}}$.

Then, according to Proposition~\ref{prop:beta}, Lemma~\ref{lemma_tau} and Lemma~\ref{lemma:span}, we must have
\begin{align}
&\betav_t^\TT(a) \Nv_t^{-1}(a) \betav_t(a) 	\leq \bar{\betav}^\TT(a,c_t) \Nv_t^{-1}(a) \bar{\betav}(a,c_t) \leq \frac{ \|\bar{\betav}(a,c_t)\|_1^2 }{\frac{t}{16 n} -  \frac{4 K \alpha_t^2}{\Delta^2}} 
  \leq \frac{\delta^2}{\frac{t}{16 n} -  \frac{4 K \alpha_t^2}{\Delta^2}},\label{eqn:variance}
\end{align}
where the first inequality in (\ref{eqn:variance}) follows from Proposition~\ref{prop:beta}, the second inequality follows from Lemma~\ref{lemma_tau}, and the last inequality follows from Lemma~\ref{lemma:span}.
\end{proof}

We then have the following bound on $\Eb[R(\Dc_T)]$.
\begin{theorem}\label{thm:Dt} Let  
$$t_1 = \max \left\{ \frac{384(2+d)n(\delta^2+K)}{\Delta^2}, 10 \right\} ,\quad t_2  = \max\left\{t_1 \log t_1 ,  \exp \left(\frac{12l^2s^2}{2+d} \right)\right\}.$$
Then, under Algorithm~\ref{alg:linUCB-d}, 
\begin{align*}
\Eb[R(\Dc_T)]\leq t_2 M = O\left(\frac{dn(\delta^2+K)}{\Delta^2}\log \frac{dn(\delta^2+K)}{\Delta^2}\right). 
\end{align*}
\end{theorem}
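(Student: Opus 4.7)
\textbf{Proof proposal for Theorem~\ref{thm:Dt}.} The plan is to show that every time slot $t\in\Dc_T$ satisfies $t\le t_2$, so that $|\Dc_T|\le t_2$ deterministically, and consequently $\Eb[R(\Dc_T)]\le M\,t_2$. The argument combines the standard UCB ``gap implies confidence width is large'' estimate (already derived in the proof of Lemma~\ref{lemma_nu}) with the shrinkage of confidence widths guaranteed by Lemma~\ref{lemma_beta}.

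Concretely, for any $t\in\Dc_T$ we have $a_t\ne a_t^*$ and $t\notin\Bc_T$, so the same chain of inequalities used to establish Eq.~(\ref{eqn:ucb_gap}) yields
\begin{align*}
\Delta \;\le\; r(a_t^*,c_t)-r(a_t,c_t)\;\le\; 2\alpha_t\hat{\sigma}_t(a_t)\;=\;2\alpha_t\sqrt{\betav_t^\TT(a_t)\Nv_t^{-1}(a_t)\betav_t(a_t)}.
\end{align*}
On the other hand, since $t\in\Dc_T$, Lemma~\ref{lemma_beta} applied to $a=a_t$ bounds $\betav_t^\TT(a_t)\Nv_t^{-1}(a_t)\betav_t(a_t)\le \delta^2/\bigl(t/(16n)-4K\alpha_t^2/\Delta^2\bigr)$. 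Squaring the previous display and rearranging then gives the key inequality
\begin{align*}
t \;\le\; \frac{64\,n\,\alpha_t^2(\delta^2+K)}{\Delta^2}.
\end{align*}

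The final step is to resolve this inequality, which is implicit in $t$ because $\alpha_t^2=\bigl(ls+\sqrt{(2+d)\log f(t)}\bigr)^2\le 2l^2s^2+2(2+d)\log f(t)$ grows like $\log t$. I would use the two thresholds hidden in the definition of $t_2$ as follows. First, the condition $t_2\ge\exp\!\bigl(12l^2s^2/(2+d)\bigr)$ forces $2l^2s^2$ to be dominated by $(2+d)\log t$ (specifically $2l^2s^2\le (2+d)\log t/6$) for every $t\ge t_2$; combined with the elementary bound $\log f(t)\le 3\log t$ valid once $t\ge 10$, this yields $\alpha_t^2\le C(2+d)\log t$ for an explicit small constant $C$. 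Plugging this in, the inequality above becomes $t\le C'\,n(2+d)(\delta^2+K)\log t/\Delta^2$ for some $C'$ slightly larger than $64\cdot C$, which one checks is violated once $t>t_1\log t_1$ with the choice $t_1=\max\{384(2+d)n(\delta^2+K)/\Delta^2,10\}$; this is the standard ``$x\le a\log x\Rightarrow x\le 2a\log a$'' trick. Hence $\Dc_T\subseteq\{1,\dots,t_2\}$ and the claimed $O$-expression follows on substituting the definitions of $t_1,t_2$ back in.

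The main obstacle is the implicit nature of the bound on $t$: because $\alpha_t$ itself grows with $t$, one cannot simply solve the inequality in closed form, and two separate regimes must be controlled, namely the regime where the additive $l^2s^2$ term in $\alpha_t^2$ would otherwise dominate (handled by $t_2\ge\exp(12l^2s^2/(2+d))$) and the asymptotic $\log t$ regime (handled by $t_2\ge t_1\log t_1$). Everything else reduces to routine algebra given Lemmas~\ref{lemma_nu}, \ref{lemma_tau} and \ref{lemma_beta} already in hand.
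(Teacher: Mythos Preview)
Your proposal is correct and follows essentially the same route as the paper: the paper argues the contrapositive (for $t\ge t_2$ one has $\Delta>2\alpha_t\hat\sigma_t(a)$ for every $a$, so no sub-optimal pull can occur outside $\Bc_T$), but the underlying inequality $t\le 64n\alpha_t^2(\delta^2+K)/\Delta^2$ and the two-regime handling of $\alpha_t$ via the two components of $t_2$ are identical in substance.

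One small caveat on constants: the specific inequalities you quote, $(a+b)^2\le 2a^2+2b^2$ and $\log f(t)\le 3\log t$, are a bit too loose to land exactly on $t_1=384(2+d)n(\delta^2+K)/\Delta^2$. The paper instead bounds $\alpha_t$ directly, using $\log f(t)\le 2\log t$ together with $ls\le(\sqrt3-\sqrt2)\sqrt{(2+d)\log t}$ (which is what the threshold $\exp(12l^2s^2/(2+d))$ delivers), to obtain the sharper $\alpha_t^2\le 3(2+d)\log t$; this is precisely what makes $64\cdot3=192=t_1/2$ match. Your argument gives the stated $O(\cdot)$ bound without change, but to recover the explicit $t_1,t_2$ in the theorem you would need to tighten those two estimates accordingly.
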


\begin{proof}
For any $t\geq t_2$, we have
\begin{align*}
t\geq \exp \left(\frac{12l^2s^2}{2+d} \right)\geq \exp \left(\frac{(\sqrt{2}+\sqrt{3})^2l^2s^2}{2+d} \right),
\end{align*}
which implies that
\begin{align}\label{eqn:ls}
ls\leq \sqrt{2+d}(\sqrt{3}-\sqrt{2})\sqrt{\log t}.
\end{align}
Meanwhile, since $\log f(t)\leq 2\log t$, combining with (\ref{eqn:ls}), we have
\begin{align}
\alpha_t:=ls+\sqrt{(2+d)\log f(t)}\leq \sqrt{3(2+d)\log t} .\label{eqn:alpha_t}
\end{align}

\tcr{Since $\frac{t_2}{\log t_2}  \geq \frac{t_1 \log t_1}{\log (t_1 \log t_1)} \geq \frac{t_1}{2}$,} for any $t\geq t_2$, we have
\begin{align}
 \frac{t}{\log t}   \geq \frac{t_1}{2 }=\frac{192(2+d)n(\delta^2+K) }{\Delta^2},\label{eqn:T0}
 \end{align}
i.e.,
\begin{align*}
t&>\frac{192(2+d)n(\delta^2+K) }{\Delta^2}\log t\geq  \frac{64n(\delta^2+K) }{\Delta^2}\alpha_t^2,
\end{align*}
where the last inequality is due to (\ref{eqn:alpha_t}).

Thus,
\begin{align}
\frac{t}{16n}&\geq \frac{4\alpha_t^2\delta^2}{\Delta^2}+\frac{4K\alpha_t^2}{\Delta^2}.
\end{align}
Rearranging the terms, we have
\begin{align}
\Delta^2 &>\frac{4\alpha_t^2\delta^2}{\frac{t}{16n}-\frac{4K\alpha_t^2}{\Delta^2}}\geq (2\alpha_t\hat{\sigma}_t(a))^2,\quad\forall a\in[K],\label{eqn:alpha}
\end{align}
where the last inequality follows from Lemma~\ref{lemma_beta}.

Since $\Delta\geq 2\alpha_t\hat{\sigma}_t(a)$ for any $t\geq t_2$, arm $a$ will not be pulled as a suboptimal arm at any time $t\notin\Bc_T$, according to Eqn.~(\ref{eqn:ucb_gap}). Therefore, $\Dc_T$ can only include time indices before $t_2$. The expected regret over $\Dc_T$ can thus be bounded by $Mt_2$.
\end{proof}

\subsection{Put Everything Together}
After obtaining bounds on the expected regret over $\Ac_T$, $\Bc_T$, $\Cc_T$ and $\Dc_T$, we are ready to prove our main result in Theorem~\ref{thm:main}. We have
\begin{align}
\Eb[R_T]&\leq \Eb[R(\Ac_T)]+\Eb[R(\Bc_T)]+\Eb[R(\Cc_T)]+\Eb[R(\Dc_T)]\nonumber\\
&\leq 8MKdn^2 +(8 n+1)  M (2+2.5K)+t_2 M \\
&=O\left(Kdn^2+\frac{dn(K+\delta^2)}{\Delta^2}\log \frac{dn(K+\delta^2)}{\Delta^2}\right). \label{eqn:thm2}
\end{align}

We point out that the $O\left(\exp \left(\frac{12l^2s^2}{2+d} \right)\right)$ term from $t_2$ is dropped in (\ref{eqn:thm2}), since it mainly depends on the bounds on $\|\thetav(a)\|_2$ and $\|\xv(a,c)\|_2$, and does not scale with the system dimensions $d$ or $K$.

\if{0}
Let $t^*(a,c)$ be the last time slot in $\Dc_T$ when arm $a$ is pulled under context $c$. Then,
The expected total regret over $\Dc_T$ can be bounded by
\begin{align}
&\Eb[R(\Dc_T)]\nonumber\\
&=\sum_{a\in[K]} \sum_{c\notin \Cc_a} \Eb[( \lv\{t^*(a,c)\leq T_0\} + \lv\{t^*(a,c)>T_0\} )\bar{N}_{t^*}(a,c) [ r(a^*, c) - r(a,c)]]\\
 &\leq T_0 M + \sum_{a\in[K]} \sum_{c\notin \Cc_a} \Eb\left[ \lv\{t^*(a,c)>T_0\} \bar{N}_{t^*(a,c)}(a,c) ( r(a^*, c) - r(a,c) ) \right]  \\
 & \leq  T_0 M + \sum_{a\in[K]} \sum_{c\notin \Cc_a}\Eb\left[ \lv\{t^*(a,c)>T_0\} \frac{8\alpha_{t^*(a,c)}^3}{\Delta^2} \sqrt{\frac{\delta^2 }{\frac{t^*(a,c)}{16n}-\frac{4K\alpha_{t^*(a,c)}^2}{\Delta^2}}} \right]  \label{eqn:regD1} \\
 &\leq T_0 M + \sum_{a\in[K]} \sum_{c\notin \Cc_a} \Eb\left[ \lv\{t^*(a,c)>T_0\} \frac{32\delta\sqrt{n}}{\Delta^2} \sqrt{ \frac{(2(3+d))^3\log^3 t^*(a,c) }{t^*(a,c)-\frac{128(3+d)nK \log t^*(a,c)}{\Delta^2}}}\right]\label{eqn:regD2}\\
 &\leq T_0 M + \frac{32\delta n^{3/2} K (2(3+d))^{3/2}} {\Delta^2} \sqrt{ \frac{\log^3 T_0 }{T_0-\frac{128(3+d)nK \log T_0}{\Delta^2}}}\label{eqn:regD3}\\
 &\leq T_0 M + \frac{32\delta n^{3/2} K (2(3+d))^{3/2}}{\Delta^2}\frac{ \log T_0 }{ \sqrt{nK/\Delta^2}}\label{eqn:regD4}\\
 &\leq T_0 M +\frac{64(3+d)n\delta}{\Delta} \sqrt{2(3+d)K} \log T_0
\end{align}
where $a^*$ represents the optimal arm for a given context $c$, (\ref{eqn:regD1}) is based on Lemma~\ref{lemma_nu}, eqn. (\ref{eqn:ucb_gap}) and Lemma~\ref{lemma_beta}; (\ref{eqn:regD2}) follows from the upper bound in (\ref{eqn:alpha}); (\ref{eqn:regD3}) is due to the fact that the radicand in (\ref{eqn:regD2}) is monotonically decreasing in $t^*(i,c)$ when it is greater then $T_0$; (\ref{eqn:regD4}) is based on the upper bound of $\frac{T_0}{\log T_0} $ in (\ref{eqn:T0}).
\fi

\section{Proof of Theorem~\ref{thm:continuous}}\label{appx:thm:continuous}


Before we proceed, we will first introduce the following lemma, which will play a critical role in the analysis afterwards.

\begin{Lemma}\label{lemma:span3}
Let $\{\zv_1,\zv_2,\ldots,\zv_d\}$ be a basis for $\Rb^d$, and $\Zv:=[\zv_1,\zv_2,\ldots,\zv_d]$. 
Let $B(\zv_i,r)$ be an $\ell_2$ ball centered at $\zv_i$ with radius $r< \sqrt{\lambda_{\min}(\Zv^\TT\Zv)/d} $, i.e., 
$B(\zv_i,r):=\{\xv\in \Rb^d \mid \|\xv-\zv_i\|_2\leq r\}$.
Let $\hat{\zv}_i$ be any vector lying in $B(\zv_i,r)$ and $\hat{\Zv}:=[\hat{\zv}_1,\hat{\zv}_2,\ldots,\hat{\zv}_d]$. Then, $\lambda_{\min}(\hat{\Zv}^\TT\hat{\Zv})\geq (\sqrt{\lambda_{\min}(\Zv^\TT\Zv)}-\sqrt{d}r)^2$.
\end{Lemma}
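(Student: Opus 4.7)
\textbf{Proof proposal for Lemma~\ref{lemma:span3}.} The plan is to view $\hat{\Zv}$ as a small perturbation of $\Zv$ and apply a Weyl-type inequality for the smallest singular value. Concretely, I would write
\begin{align*}
\hat{\Zv} = \Zv + E, \qquad E := [\hat{\zv}_1-\zv_1,\ \ldots,\ \hat{\zv}_d-\zv_d],
\end{align*}
and note that $\|\hat{\zv}_i-\zv_i\|_2\le r$ for every $i$ by the assumption $\hat{\zv}_i\in B(\zv_i,r)$.

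First, I would bound the spectral norm of the perturbation. Using the Frobenius bound on the spectral norm,
\begin{align*}
\|E\|_2 \;\le\; \|E\|_F \;=\; \sqrt{\sum_{i=1}^d \|\hat{\zv}_i-\zv_i\|_2^2}\;\le\;\sqrt{d}\,r.
\end{align*}
Since $r<\sqrt{\lambda_{\min}(\Zv^\TT\Zv)/d}$, this perturbation is smaller than $\sqrt{\lambda_{\min}(\Zv^\TT\Zv)}$, which is exactly the smallest singular value of $\Zv$. This positivity check is important because the claimed lower bound is obtained after squaring and would be vacuous otherwise.

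Next I would translate this into an eigenvalue bound via the variational characterization. For any unit vector $\uv\in\Rb^d$,
\begin{align*}
\|\hat{\Zv}\uv\|_2 \;\ge\; \|\Zv\uv\|_2 - \|E\uv\|_2 \;\ge\; \sqrt{\lambda_{\min}(\Zv^\TT\Zv)} - \|E\|_2 \;\ge\; \sqrt{\lambda_{\min}(\Zv^\TT\Zv)} - \sqrt{d}\,r,
\end{align*}
where the first step uses the reverse triangle inequality and the second uses $\|\Zv\uv\|_2^2=\uv^\TT\Zv^\TT\Zv\uv\ge\lambda_{\min}(\Zv^\TT\Zv)$. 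Taking the infimum over unit $\uv$ and squaring (both sides are nonnegative by the positivity check) yields
\begin{align*}
\lambda_{\min}(\hat{\Zv}^\TT\hat{\Zv}) \;=\; \inf_{\|\uv\|_2=1}\|\hat{\Zv}\uv\|_2^2 \;\ge\; \bigl(\sqrt{\lambda_{\min}(\Zv^\TT\Zv)}-\sqrt{d}\,r\bigr)^2,
\end{align*}
which is the desired conclusion. The argument is short and there is no genuine obstacle; the only subtlety is making sure the radius condition on $r$ is used so that squaring preserves the direction of the inequality.
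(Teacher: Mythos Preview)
Your proposal is correct and is essentially the same argument as the paper's proof: both write $\hat{\Zv}=\Zv+E$, apply the reverse triangle inequality to $\|\hat{\Zv}\uv\|_2$, and bound $\|E\uv\|_2$ by $\|E\|_F\|\uv\|_2\le\sqrt{d}\,r\,\|\uv\|_2$ before squaring. The only cosmetic difference is that the paper spells out the Frobenius bound via a row-wise Cauchy--Schwarz step, whereas you invoke $\|E\|_2\le\|E\|_F$ directly.
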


\begin{proof}
Denote $\rv_i:=\hat{\zv}_i-\zv_i$. Then, based on the definition of $\hat{\zv}_i$, we have $\|
\rv_i\|_2\leq r$. Let $\Rv=[\rv_1,\rv_2,\ldots,\rv_d]$, and $\Rv(j)$ be its $j$th row. Then, for any $\betav\in\Rb^d$,
\begin{align}
\|\hat{\Zv}\betav\|_2&= \left\| \Zv \betav+\Rv\betav \right\|_2\geq \left\|\Zv \betav\right\|_2- \left\|\Rv\betav \right\|_2\geq \sqrt{\lambda_{\min}(\Zv^\TT\Zv)} \|\betav\|_2 -  \sqrt{\sum_{j} \left|\Rv(j)\betav \right|^2} \label{eqn:cont_eigen}\\
&\geq \sqrt{\lambda_{\min}(\Zv^\TT\Zv)} \|\betav\|_2 -  \sqrt{\sum_{j} \left\|\Rv(j)\|_2^2 \|\betav \right\|_2^2} \label{eqn:cont_cauchy}\\
& = \sqrt{\lambda_{\min}(\Zv^\TT\Zv)} \|\betav\|_2 - \|\betav\|_2  \sqrt{\sum_{i=1}^d \rv_i^\TT\rv_i }\label{eqn:cont_rearrange}\\
&\geq \left(\sqrt{\lambda_{\min}(\Zv^\TT\Zv)}-\sqrt{d}r\right) \|\betav\|_2,\label{eqn:cont_bound}
\end{align}
where (\ref{eqn:cont_eigen}) follows from (\ref{eqn:cont_eigen_0}), and (\ref{eqn:cont_cauchy}) follows from the Cauchy-Schwartz inequality; Rearranging the terms involved in the summation, we obtain (\ref{eqn:cont_rearrange}), which can be further bounded by (\ref{eqn:cont_bound}) due to the definition of $B(\zv_i,r)$.

Thus, the eigenvalues of $\hat{\Zv}^\TT\hat{\Zv}$ are lower bounded by $(\sqrt{\lambda_{\min}(\Zv^\TT\Zv)}-\sqrt{d}r)^2>0$.
\end{proof}

\textbf{Remark:} Lemma~\ref{lemma:span3} implies that $\{\hat{\zv}_i\}$ are linearly independent, thus forming a valid basis for $\Rb^d$. 

\if{0}
Next, we will leverage Lemma~\ref{lemma:span} to obtain a valid basis for $\Xc_a$, $a\in[K]$. 

For each arm $a$, we first fix a basis $\Zc_a:=\{\zv_{a,1},\zv_{a,2},\ldots,\zv_{a,d}\}$ for $\Xc_a$. Under the diversity condition in Assumption~\ref{assump:diversity}, such a basis always exists. We then divide $\Xc_a$ into $d$ group $\Xc_{a,1}$, $\Xc_{a,2}$, $\ldots$, $\Xc_{a,d}$ based their closeness to the basis vectors, and break the tie by assigning $\xv$ to the group with larger index, i.e.,
\begin{align}
\Xc_{a,i}&=\left\{ \xv\in \Xc_a \middle|  \frac{\xv^\TT \zv_{a,i}}{\|\zv_{a,i}\|_2} <  \frac{\xv^\TT \zv_{a,j}}{\|\zv_{a,j}\|_2} \mbox{ for } j<i,  \frac{\xv^\TT \zv_{a,i}}{\|\zv_{a,i}\|_2} \leq  \frac{\xv^\TT \zv_{a,j}}{\|\zv_{a,j}\|_2} \mbox{ for } j>i \right\}.
\end{align}
Let $\Zv_a$ be the corresponding matrix consisting of $\{\zv_{a,i}\}$, and $\lambda_0$ be the smallest number among the eigenvalues of matrices $\Zv_a$s. 

Let $r$ be a number lying in $(0,\sqrt{\lambda_0/d})$ and $\hat{\Xc}_{a,i}:=\Xc_{a,i}\cap B(\zv_{a,i},r)$. Then, based on the results in Lemma~\ref{lemma:span}, if we randomly pick a vector $\hat{\zv}_{a,i}\in \hat{\Xc}_{a,i} $ for $i=1,2,\ldots,d$, $\left\{\hat{\zv}_{i,a}\right\}_i$ still form a valid basis for $\Xc_a$.

 Lift $\Xc_{a}^{(i)}$ and $\bar{\Xc}_{a}^{(i)}$ to $\Cc_a$ as follows
\begin{align}
{\Cc}_{a}^{(i)}&:= \{c\in\Cc_a\mid \xv(a,c)\in\Xc_{a}^{(i)}\}\\
\hat{\Cc}_{a}^{(i)}&:= \{c\in\Cc_a\mid \xv(a,c)\in\hat{\Xc}_{a}^{(i)} \}
\end{align}

Let $p:=\min_{a}^{(i)} \Pb[\hat{\Cc}_{a}^{(i)}]$, and $\rho:= \frac{\Pb[{\Cc}_{a}^{(i)}]}{\Pb[\hat{\Cc}_{a}^{(i)}]}$. Modify the definition of $\Ac_T$ and $\Cc_T$ as follows
\begin{align}
\Ac_T&:=\cup_k \left\{F_{k+1}\mid \exists i,a, \mbox{ s.t. } N_{F_k}(\hat{\Cc}_{a}^{(i)})\leq \left(\frac{p}{2}\right)2^{k-1} \right\},\\
\Cc_T&:=\cup_k  \left\{ |\Bc_T\cap F_k|\geq \left(\frac{p}{4}\right)2^{k-1} \right\},
\end{align}
and keep the definitions of $\Bc_T$ and $\Dc_T$ the same.

Analogous to the finite case, $\hat{\Cc}_{a}^{(1)}$, $\Cc_{a}^{(1)}-\hat{\Cc}_{a}^{(1)}$, $\ldots$, $\hat{\Cc}_{a}^{(d)}$, ${\Cc}_{a}^{(d)}-\hat{\Cc}_{a}^{(d)}$ serve as $n=2Kd$ meta-contexts.

The idea is to show that the total number of times that arm $a$ is pulled as the optimal arm under contexts in $\hat{\Cc}_{a}^{(i)}$ scales linearly in $t$ (Lemma~\ref{lemma:cont_fraction}). In order to achieve this, we will show that over $\Dc_T$, the number of times that a suboptimal arm $b\neq a$ is selected under contexts in $\bar{\Cc}_{a}^{(i)}$ grows sublinearly in $t$ (Lemma~\ref{lemma:cont_freq}). The extra factor $d\log \frac{d+t}{d}$ is due to the randomness in $\hat{\Cc}_{a}^{(i)}$, which is no longer a single context. 

The average of previously observed vectors in $\hat{\Cc}_{a,i}$ serves the role of the basis in the finite case. Recall that $r$ is chosen such that $\{\hat{\zv}_{a,i}\}$ is still a valid basis, so we still have Lemma~\ref{lemma:cont_sigma} hold.
\fi

\subsection{Bound the Regret over $\Ac_T$}
First, based on Hoeffding's inequality, we have
\begin{align}
&\Pb \left[ N_{F_k}(\bar{\Cc}_{a}^{(i)}) \leq \frac{p}{2 }\cdot 2^{k-1} \right] \leq \exp\left( -p^2 2^{k-2} \right) .
\end{align}
Recall that $M$ is the maximum per-step regret. Thus, by extending the proof in Appendix~\ref{appx:At}, we have 
\begin{align}
& \Eb[R(\Ac_T)]  \leq M (Kd) \sum_{t=2}^{\infty} \exp \left(-\frac{p^2}{8}t\right)\leq \frac{8MKd}{p^2}.
\end{align}

\subsection{Bound the Regret over $\Cc_T$}
According to Markov's inequality, we have
\begin{align}\label{eqn:Bk}
\Pb \left[ B_k \geq \frac{p\cdot 2^{k-1}}{4} \right] & \leq \frac{\Eb[B_k] \cdot 4 }{p \cdot 2^{k-1}}.
\end{align}
Therefore, by following similar steps in Appendix~\ref{appx:Ct}, we have 
\begin{align}
 \Eb[R(\Cc_T)] 	& \leq \frac{8  M}{p} \Eb[\Bc_T]  \leq  \frac{8 M (2+2.5K)}{p}.\label{eqn:CT3}
\end{align}

\subsection{Bound the Regret over $\Dc_T$}
Before we proceed, we first state an adapted version of the celebrated elliptical potential lemma below, which will play a key role to analysis afterwards.

\begin{Lemma}[Elliptical Potential~\citep{LS19bandit-book}]\label{lemma:elliptical}
 Let $\Vv_0$ be positive definite and $\Vv_t=\Vv_{t-1}+\xv_t\xv_t^\TT$, where $\xv_1, \ldots , \xv_n\in\Rb^d$ is a sequence of vectors with $\|\xv_t\|_2 \leq l<\infty$ for all $t$. Then,
\begin{align*}
\sum_{t=1}^n \left(1\wedge \|\xv_t\|^2_{\Vv_{t-1}^{-1}}  \right)\leq 2\log \left(\frac{\det \Vv_n}{\det \Vv_0}\right)\leq 2d \log \left(\frac{\mbox{trace}\Vv_0+nl^2}{d\det^{1/d} \Vv_0}\right),
\end{align*}
where $x\wedge y =\min\{x,y\}$.
\end{Lemma}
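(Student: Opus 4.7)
The plan is to use two classical tools: the matrix determinant lemma (to turn the telescoping product of determinants into a sum whose terms match $\|\xv_t\|^2_{\Vv_{t-1}^{-1}}$) and the AM--GM inequality on eigenvalues (to bound a determinant by a power of its trace).

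First, I will establish the key identity
\begin{align*}
\det \Vv_t \;=\; \det(\Vv_{t-1} + \xv_t\xv_t^\TT) \;=\; \det \Vv_{t-1}\,\bigl(1 + \|\xv_t\|^2_{\Vv_{t-1}^{-1}}\bigr),
\end{align*}
which follows from the matrix determinant lemma applied to the rank-one update (noting that $\Vv_{t-1}$ is positive definite, hence invertible). Telescoping over $t=1,\dots,n$ yields
\begin{align*}
\log\frac{\det \Vv_n}{\det \Vv_0} \;=\; \sum_{t=1}^n \log\bigl(1 + \|\xv_t\|^2_{\Vv_{t-1}^{-1}}\bigr).
\end{align*}

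Next, to prove the first inequality I will use the elementary scalar bound $1\wedge y \le 2\log(1+y)$ for all $y\ge 0$. This is verified in two cases: for $y\in[0,1]$, concavity of $\log(1+\cdot)$ and the anchor $\log 2 \ge 1/2$ give $\log(1+y) \ge y\log 2 \ge y/2$; for $y>1$, $1\wedge y = 1$ while $2\log(1+y) > 2\log 2 > 1$. Applying this termwise to the telescoped identity above produces the first inequality of the lemma.

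For the second inequality, I will apply AM--GM to the (positive) eigenvalues $\mu_1,\dots,\mu_d$ of $\Vv_n$:
\begin{align*}
\det \Vv_n \;=\; \prod_{i=1}^d \mu_i \;\le\; \Bigl(\tfrac{1}{d}\sum_{i=1}^d \mu_i\Bigr)^{d} \;=\; \Bigl(\tfrac{\mathrm{trace}\,\Vv_n}{d}\Bigr)^{d}.
\end{align*}
Then I will bound the trace: $\mathrm{trace}\,\Vv_n = \mathrm{trace}\,\Vv_0 + \sum_{t=1}^n \|\xv_t\|_2^2 \le \mathrm{trace}\,\Vv_0 + n l^2$. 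Substituting and taking logs,
\begin{align*}
\log\frac{\det \Vv_n}{\det \Vv_0} \;\le\; d\log\!\Bigl(\tfrac{\mathrm{trace}\,\Vv_0 + nl^2}{d}\Bigr) - \log\det\Vv_0 \;=\; d\log\!\Bigl(\tfrac{\mathrm{trace}\,\Vv_0 + nl^2}{d\det^{1/d}\Vv_0}\Bigr).
\end{align*}
Multiplying by $2$ and chaining with the first inequality yields the full statement.

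The only subtle point is the scalar inequality $1\wedge y \le 2\log(1+y)$, which is why the $2$ (rather than $1$) appears in the bound; everything else is a direct application of the matrix determinant lemma and AM--GM. I do not anticipate a substantive obstacle, as all three components (rank-one determinant update, the $\log$-vs-$\min$ comparison, and eigenvalue AM--GM) are short and self-contained.
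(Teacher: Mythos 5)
Your proof is correct and is exactly the standard argument for the elliptical potential lemma (rank-one determinant update, the scalar bound $1\wedge y\le 2\log(1+y)$, and AM--GM on the eigenvalues), which is the same proof found in the source the paper cites; the paper itself gives no proof and simply imports the result from \citet{LS19bandit-book}. All three steps check out, including the concavity argument for $\log(1+y)\ge y\log 2\ge y/2$ on $[0,1]$ and the identity $d\log\bigl(\tfrac{A}{d}\bigr)-\log\det\Vv_0=d\log\bigl(\tfrac{A}{d\det^{1/d}\Vv_0}\bigr)$.
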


Let $\Tc_t(a,\Cc_{b}^{(i)})$ be the time slots before $t$ when arm $a$ is pulled under a context lying in $\Cc_{b}^{(i)}$, and at the same time, all estimates are good, i.e., 
\begin{align}
\Tc_t(a,\Cc_{b}^{(i)})&:=\{\tau \mid a_\tau=a, c_\tau\in \Cc_{b}^{(i)}, \tau\notin \Bc_t, 1\leq \tau< t \},
\end{align}
and denote $\bar{N}_t(a,\Cc_{b}^{(i)}):=|\Tc(a,\Cc_{b}^{(i)})|.$

We have the following lemma analogue to Lemma~\ref{lemma_nu}.
\begin{Lemma}\label{lemma:cont_freq}
For any $a$, $b\neq a$, 
$\bar{N}_t(a,\Cc_{b}^{(i)})\leq \frac{8\alpha_t^2}{\Delta^2} d\log  \left(\frac{d+t}{d}\right)$ for all $t$.
\end{Lemma}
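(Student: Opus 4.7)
The plan is to follow the same optimism-in-the-face-of-uncertainty (OFU) argument as in Lemma~\ref{lemma_nu}, but replace the single-context counting step with the Elliptical Potential Lemma (Lemma~\ref{lemma:elliptical}), since the contexts in $\Cc_{b}^{(i)}$ now carry potentially distinct feature vectors rather than a fixed one.

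First, for any $\tau \in \Tc_t(a,\Cc_{b}^{(i)})$, arm $a$ is pulled as a sub-optimal arm and $\tau\notin\Bc_T$. Repeating the derivation leading to (\ref{eqn:ucb_gap}) verbatim, the OFU rule combined with the good-estimate property at $\tau$ yields $\Delta \leq 2\alpha_\tau\hat{\sigma}_\tau(a)$, and since $\alpha_t$ is non-decreasing in $t$, this gives $\hat{\sigma}_\tau(a)^2 \geq \Delta^2/(4\alpha_t^2)$.

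Second, I would invoke the identity $\hat{\sigma}_\tau(a)^2 = \|\xv(a,c_\tau)\|_{\Vv_\tau^{-1}(a)}^2$ from (\ref{eqn:sigma}) and sum over $\tau \in \Tc_t(a,\Cc_{b}^{(i)}) \subseteq \{\tau<t : a_\tau = a\}$:
\begin{align*}
\bar{N}_t(a,\Cc_{b}^{(i)})\cdot\frac{\Delta^2}{4\alpha_t^2} \;\leq\; \sum_{\tau \in \Tc_t(a,\Cc_{b}^{(i)})} \|\xv(a,c_\tau)\|_{\Vv_\tau^{-1}(a)}^2 \;\leq\; \sum_{\tau<t,\, a_\tau=a} \|\xv(a,c_\tau)\|_{\Vv_\tau^{-1}(a)}^2 .
\end{align*}
The final sum is exactly the quantity controlled by Lemma~\ref{lemma:elliptical}: whenever $a_\tau=a$ we have $\Vv_{\tau+1}(a) = \Vv_\tau(a) + \xv(a,c_\tau)\xv^\TT(a,c_\tau)$, the initialization $\Vv_1(a) = l^2\id$ gives $\trace(\Vv_1(a)) = dl^2$ and $\det(\Vv_1(a))^{1/d} = l^2$, and $\|\xv(a,c)\|_2\leq l$ guarantees $\|\xv(a,c_\tau)\|_{\Vv_\tau^{-1}(a)}^2 \leq 1$, making the $\min$ with $1$ in Lemma~\ref{lemma:elliptical} vacuous. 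The lemma then yields the sum is at most $2d\log((d+t)/d)$, and rearranging produces the claimed bound $\bar{N}_t(a,\Cc_{b}^{(i)}) \leq (8\alpha_t^2 d/\Delta^2)\log((d+t)/d)$.

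The main conceptual point (as opposed to Lemma~\ref{lemma_nu}, where the trivial choice $\tilde{\betav} = \ev_{c_t}$ in Proposition~\ref{prop:beta} already gave a sharp $1/N_t(a,c)$ bound) is that here the feature vectors associated with $\Cc_{b}^{(i)}$ vary across pulls, so we must aggregate information gain in the covariance $\Vv_\tau(a)$ rather than simple pull counts; the elliptical potential lemma is precisely the right tool. No real obstacles are anticipated beyond bookkeeping: verifying the inclusion $\Tc_t(a,\Cc_{b}^{(i)})\subseteq \{\tau<t: a_\tau=a\}$, the monotonicity $\alpha_\tau\leq\alpha_t$ from $f(\tau) = 1+\tau\log^2\tau$, and the initial-condition calculation of $\Vv_1(a)$.
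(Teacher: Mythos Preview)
Your proposal is correct, but it takes a slightly different route than the paper's proof. The paper introduces an auxiliary restricted optimization problem (adding the constraint $\betav[c]=0$ for $c\notin\Cc_b^{(i)}\cup\Cc_0$) to define a quantity $\tilde{\sigma}_\tau(a)\geq\hat{\sigma}_\tau(a)$, and then applies the elliptical potential lemma to a \emph{restricted} covariance $\tilde{\Vv}$ built only from the pulls in $\Tc_t(a,\Cc_b^{(i)})$; this requires a Cauchy--Schwarz step and an explicit $1\wedge\tilde{\sigma}_\tau$ truncation. You instead work directly with $\hat{\sigma}_\tau(a)=\|\xv(a,c_\tau)\|_{\Vv_\tau^{-1}(a)}$, enlarge the sum to all pulls of arm $a$, observe that $\Vv_\tau(a)\succeq l^2\id$ makes the $1\wedge\cdot$ truncation vacuous, and apply Lemma~\ref{lemma:elliptical} to the full covariance $\Vv_\tau(a)$. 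Your path is more elementary and avoids the auxiliary construction; the paper's path is slightly sharper in principle (the elliptical potential bound scales with $\bar{N}_t(a,\Cc_b^{(i)})$ rather than the total number of pulls of $a$), but both collapse to the same $2d\log((d+t)/d)$ bound after the final relaxation, so nothing is lost.
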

\begin{proof}
First, following steps similar to the proof of Lemma~\ref{lemma_nu}, for any $\tau\in \Tc_t(a,\Cc_{b}^{(i)}) $, we have
\begin{align}\label{bound1}
\Delta & \leq  r(a_\tau^*,c_\tau)-r(a,c_\tau) \leq 2\alpha_\tau \hat{\sigma}_\tau (a). 
\end{align}

Next, we consider the solution to the following optimization problem, denoted as $\tilde{\betav}_t(a)$:
\begin{eqnarray}
\min_{\betav \in \mathbb{R}^{n_t+d}} & & \betav^\TT  \Nv^{-1}_t(a) \betav,\quad \mbox{s.t. } \quad \xv(a,{c_t}) =\Xv_t(a) \betav, \quad \betav[c]=0 \mbox{ if } c\notin \Cc_{b}^{(i)}\cup \Cc_0.\label{eqn:beta_cont} 
\end{eqnarray}

Compared with the optimization problem in (\ref{eqn:beta}), we have one additional constraint, i.e., we only restrict to the contexts in $ \Cc_{b}^{(i)}$ and $\Cc_0$. The inclusion of the dummy contexts $\Cc_0$ ensures the existence of at least one feasible solution to (\ref{eqn:beta_cont}). Due to the additional constraint, the corresponding minimum value of the objective function must increase, i.e.,
\begin{align}\label{bound2}
\tilde{\sigma}_t(a)&:=\sqrt{\tilde{\betav}_t^\TT(a)  \Nv^{-1}_t(a)\tilde{\betav}_t(a)}\geq \hat{\sigma}_t(a), \quad \forall t.
\end{align} 

Note that
\begin{align}\label{bound3}
r(a_\tau^*,c_\tau)-r(a,c_\tau)\leq 2ls\leq 2\alpha_\tau,
\end{align}
where the first inequality in (\ref{bound3}) follows from Assumption 1.1 and the second inequality follows from the definition of $\alpha_t$.

Combining (\ref{bound2})(\ref{bound3}) with (\ref{bound1}), we have
\begin{align}
\Delta \leq 2 \alpha_\tau \left( 1\wedge \tilde{\sigma}_\tau(a )\right), \quad \tau\in \Tc_t(a,\Cc_{b,i}) .
\end{align}

Summing over all $\tau\in \Tc_t(a,\Cc_{b}^{(i)}) $, we have
\begin{align}
 \bar{N}_t(a,\Cc_{b}^{(i)})\Delta &\leq \sum_{\tau\in \Tc_t(a,\Cc_{b}^{(i)}) } 2\alpha_\tau\left( 1\wedge \tilde{\sigma}_\tau (a)\right) \nonumber\\
 &\leq 2\alpha_t \sqrt{\bar{N}_t(a,\Cc_{b}^{(i)}) \Bigg(\sum_{\tau\in \Tc_t(a,\Cc_{b}^{(i)}) } \left(1\wedge \tilde{\sigma}_\tau (a)\right)^2\Bigg) }, \label{eqn:cont_barN}
\end{align}
where (\ref{eqn:cont_barN}) follows from the monotonicity of $\alpha_t$ and the Cauchy-Schewartz inequality.

Consider the sequence of feature vectors $\{\xv(a,c_\tau)\}_{\tau\in \Tc_t(a,\Cc_{b}^{(i)})}$. Label the times indices in $\Tc_t(a,\Cc_{b}^{(i)})$ as $\tau_1$, $\tau_2$, $\ldots$. Let $\tilde{\Vv}_0=l^2 \mathbf{I}$,  $\tilde{\Vv}_{\tau_i}=\tilde{\Vv}_{\tau_{i-1}} +\xv(a,c_{\tau_i}) \xv(a,c_{\tau_i})^\TT$. Then, similar to (\ref{eqn:sigma}), we have $\tilde{\sigma}_{\tau_i} (a)=\|\xv(a,c_{\tau_i})\|_{\tilde{\Vv}^{-1}_{\tau_{i-1}}}$. Following Lemma~\ref{lemma:elliptical}, we have
\begin{align}\label{eqn:bound4}
\sum_{\tau\in \Tc_t(a,\Cc_{b}^{(i)}) } \left(1\wedge \left(\tilde{\sigma}_\tau (a)\right)^2\right) \leq 2d\log \left(\frac{dl^2+\bar{N}_t(a,\Cc_{b}^{(i)})l^2}{dl^2}\right)\leq 2d\log \left(\frac{d+t}{d}\right).
\end{align}
Plugging (\ref{eqn:bound4}) into (\ref{eqn:cont_barN}) and rearranging the terms, we have
$\bar{N}_t(a,\Cc_{b}^{(i)})\leq \frac{8\alpha_t^2}{\Delta^2} d\log \left(\frac{d+t}{d}\right)$ for all $t$.
\end{proof}

\begin{Lemma}\label{lemma:cont_fraction}
For any $a\in [K]$, any time slot $t\in\Dc_T$,
$N_t(a,\bar{\Cc}_{a}^{(i)})\geq \frac{tp}{16}-\frac{8K\alpha_t^2}{\Delta^2}d\log  \left(\frac{d+t}{d}\right)$.
\end{Lemma}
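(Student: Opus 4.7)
\textbf{Proof proposal for Lemma~\ref{lemma:cont_fraction}.}
The plan is to mimic the frame-based argument used in Lemma~\ref{lemma_tau} for the finite contexts case, substituting the ``regular arrival'' bound on $N_{F_k}(\bar{\Cc}_a^{(i)})$ implied by $t\notin\Ac_T$, the ``few bad estimates'' bound on $B_k$ implied by $t\notin\Cc_T$, and the per-arm sub-optimal pull bound from Lemma~\ref{lemma:cont_freq}. Concretely, suppose $t\in\Dc_T$ lies in frame $F_{k+1}$, so $2^k\leq t<2^{k+1}$ and hence $2^{k-1}\geq t/4$. Since $t\notin\Ac_T$, the arrival count of the meta-context $\bar{\Cc}_a^{(i)}$ in the previous frame satisfies $N_{F_k}(\bar{\Cc}_a^{(i)})\geq (p/2)\,2^{k-1}$, and since $t\notin\Cc_T$ we have $B_k\leq (p/4)\,2^{k-1}$.

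The next step is to decompose $N_{F_k}(\bar{\Cc}_a^{(i)})$ by which arm was pulled:
\begin{align*}
N_t(a,\bar{\Cc}_a^{(i)})\;\geq\;N_{F_k}(a,\bar{\Cc}_a^{(i)})\;=\;N_{F_k}(\bar{\Cc}_a^{(i)})-\sum_{b\neq a}N_{F_k}(b,\bar{\Cc}_a^{(i)}).
\end{align*}
For $b\neq a$, every time slot in $F_k$ on which $b$ is pulled under a context in $\bar{\Cc}_a^{(i)}\subset\Cc_a$ is a sub-optimal pull, and such a slot either belongs to $\Bc_t$ (contributing at most $B_k$ in total across all $b$) or is a ``good'' sub-optimal pull covered by Lemma~\ref{lemma:cont_freq}. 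Using the inclusion $\bar{\Cc}_a^{(i)}\subset\Cc_a^{(i)}$, Lemma~\ref{lemma:cont_freq} (applied with the roles of $a$ and $b$ swapped, and with the meta-group $\Cc_a^{(i)}$) gives $\bar N_{2^k}(b,\bar{\Cc}_a^{(i)})\leq \bar N_{2^k}(b,\Cc_a^{(i)})\leq \tfrac{8\alpha_t^2}{\Delta^2}d\log\tfrac{d+t}{d}$, where I also use monotonicity of $\alpha_s$ in $s$. Summing over the $K-1$ sub-optimal arms yields
\begin{align*}
\sum_{b\neq a}N_{F_k}(b,\bar{\Cc}_a^{(i)})\;\leq\;B_k+(K-1)\cdot\frac{8\alpha_t^2}{\Delta^2}d\log\frac{d+t}{d}.
\end{align*}

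Combining the two bounds,
\begin{align*}
N_t(a,\bar{\Cc}_a^{(i)})\;\geq\;\frac{p\cdot 2^{k-1}}{2}-\frac{p\cdot 2^{k-1}}{4}-\frac{8K\alpha_t^2}{\Delta^2}d\log\frac{d+t}{d}\;\geq\;\frac{tp}{16}-\frac{8K\alpha_t^2}{\Delta^2}d\log\frac{d+t}{d},
\end{align*}
where the last inequality uses $2^{k-1}\geq t/4$. No step looks delicate individually; the only subtlety that required care was to check that Lemma~\ref{lemma:cont_freq}, which is phrased in terms of the full group $\Cc_b^{(i)}$, transfers to the smaller ball-restricted meta-context $\bar{\Cc}_a^{(i)}$, and this is immediate from the inclusion $\bar{\Cc}_a^{(i)}\subset\Cc_a^{(i)}$ together with the monotonicity of $\bar N_t(\cdot,\cdot)$ in its context argument.
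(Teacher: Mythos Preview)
Your proof is correct and follows essentially the same approach as the paper: both argue that for $t\in F_{k+1}$, one has $N_t(a,\bar{\Cc}_a^{(i)})\geq N_{F_k}(\bar{\Cc}_a^{(i)})-\sum_{b\neq a}N_{F_k}(b,\bar{\Cc}_a^{(i)})$, then invoke $t\notin\Ac_T$, $t\notin\Cc_T$, the inclusion $\bar{\Cc}_a^{(i)}\subset\Cc_a^{(i)}$, and Lemma~\ref{lemma:cont_freq} (with the roles of $a$ and $b$ swapped) to bound each piece. Your write-up is in fact slightly more explicit than the paper's about the role-swap in Lemma~\ref{lemma:cont_freq} and the step $2^{k-1}\geq t/4$.
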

\begin{proof}
Assume $t$ lies in the $(k+1)$th time frame. Then, based on the definition of $N_t(a,\bar{\Cc}_{a}^{(i)}) $, we have
\begin{align}
N_t(a,\bar{\Cc}_{a}^{(i)}) &\geq N_{F_k}(a,\bar{\Cc}_{a}^{(i)})\geq N_{F_k}(\bar{\Cc}_{a}^{(i)})-\sum_{b: a\neq b} N_{F_k}(b,\bar{\Cc}_{a}^{(i)})\label{eqn:cont_D1}\\
&\geq \frac{2^{k-1}}{2}p-\left[B_k+\sum_{b: a\neq b} \bar{N}_{2^k}(b,{\Cc}_{a}^{(i)})\right]\label{eqn:cont_D2}\\
& \geq \frac{2^{k-1}}{2}p-\frac{2^{k-1}}{4}p-K\frac{8\alpha_t^2}{\Delta^2} d\log \left(\frac{d+t}{d}\right)\label{eqn:cont_D23}\\
&= \frac{tp}{16}-\frac{8K\alpha_t^2}{\Delta^2}d\log  \left(\frac{d+t}{d}\right),
\end{align}
where (\ref{eqn:cont_D1}) follows from the assumption that $t\notin\Ac_T$, (\ref{eqn:cont_D2}) follows from the fact that $\bar{\Cc}_{a}^{(i)}\subseteq{\Cc}_{a}^{(i)}$ thus $N_{F_k}(b,\bar{\Cc}_{a}^{(i)})\leq N_{F_k}(b,{\Cc}_{a}^{(i)})$, and (\ref{eqn:cont_D23}) follows from Lemma~\ref{lemma:cont_freq}.
\end{proof}

\begin{Lemma}\label{lemma:cont_sigma}
For any arm $a\in[K]$, and any time slot $t\in\Dc_T$, we have
$\hat{\sigma}_t(a)^2\leq \frac{4\delta^2}{\frac{tp}{16}-\frac{8K\alpha_t^2}{\Delta^2}d\log  \left(\frac{d+t}{d}\right)}$, 
where $\delta=l\sqrt{d/\lambda_0(\{\Phi_a\})}$.
\end{Lemma}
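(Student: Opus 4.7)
\textbf{Proof Proposal for Lemma~\ref{lemma:cont_sigma}.} The plan is to mimic the strategy of Lemma~\ref{lemma_beta}, but with a crucial twist: since no single context recurs in the general setting, I cannot use a fixed basis like $\bar{\Phi}_a$. Instead, I will build an \emph{adaptive} basis from the observations, namely the empirical averages $\hat{\zv}_a^{(i)} := \frac{1}{N_t(a,\bar{\Cc}_a^{(i)})} \sum_{c} N_t(a,c)\,\xv(a,c)\,\lv\{\xv(a,c)\in \bar{\Xc}_a^{(i)}\}$ for $i=1,\ldots,d$. Since each $\hat{\zv}_a^{(i)}$ is a convex combination of vectors in the ball $B(\zv_a^{(i)},r)$ with $r=\tfrac{1}{2}\sqrt{\lambda_0(\{\Phi_a\})/d}$, convexity of the ball places $\hat{\zv}_a^{(i)}$ inside $B(\zv_a^{(i)},r)$, so by Lemma~\ref{lemma:span3} the matrix $\hat{\Phi}_a:=[\hat{\zv}_a^{(1)},\ldots,\hat{\zv}_a^{(d)}]$ satisfies $\lambda_{\min}(\hat{\Phi}_a^\TT\hat{\Phi}_a)\geq \lambda_0(\{\Phi_a\})/4$. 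Consequently $\{\hat{\zv}_a^{(i)}\}$ is a genuine basis for $\Rb^d$, so I can write $\xv(a,c_t)=\sum_{i=1}^d \alpha_i\,\hat{\zv}_a^{(i)}$, and by Lemma~\ref{lemma:span}, $\|\alpha\|_1\leq l\sqrt{d}/\sqrt{\lambda_0(\{\Phi_a\})/4}=2\delta$.

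Given these coefficients, I will construct an explicit feasible point $\bar{\betav}$ of the optimization in Proposition~\ref{prop:beta} by setting $\bar{\betav}[c]=\alpha_i\,N_t(a,c)/N_t(a,\bar{\Cc}_a^{(i)})$ for every observed context $c$ with $\xv(a,c)\in\bar{\Xc}_a^{(i)}$, and $\bar{\betav}[c]=0$ on all other (including dummy) entries. A direct computation using the definition of $\hat{\zv}_a^{(i)}$ shows $\Xv_t(a)\bar{\betav}=\sum_i \alpha_i\,\hat{\zv}_a^{(i)}=\xv(a,c_t)$, so $\bar{\betav}$ is feasible. Plugging $\bar{\betav}$ into the objective,
\begin{align*}
\bar{\betav}^\TT\Nv_t^{-1}(a)\bar{\betav}
=\sum_{i=1}^{d}\frac{\alpha_i^2}{N_t(a,\bar{\Cc}_a^{(i)})^2}\sum_{c:\xv(a,c)\in\bar{\Xc}_a^{(i)}} N_t(a,c)
=\sum_{i=1}^{d}\frac{\alpha_i^2}{N_t(a,\bar{\Cc}_a^{(i)})},
\end{align*}
after the telescoping sum over $c$. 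Bounding the denominator by $\min_i N_t(a,\bar{\Cc}_a^{(i)})$ and using $\sum_i \alpha_i^2\leq \|\alpha\|_1^2\leq 4\delta^2$ yields $\bar{\betav}^\TT\Nv_t^{-1}(a)\bar{\betav}\leq 4\delta^2/\min_i N_t(a,\bar{\Cc}_a^{(i)})$.

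Finally, by Proposition~\ref{prop:beta} the value of $\hat{\sigma}_t(a)^2=\betav_t^\TT(a)\Nv_t^{-1}(a)\betav_t(a)$ is the minimum objective, so it is dominated by the value achieved at $\bar{\betav}$. Applying Lemma~\ref{lemma:cont_fraction} to lower bound $\min_i N_t(a,\bar{\Cc}_a^{(i)})$ by $\tfrac{tp}{16}-\tfrac{8K\alpha_t^2}{\Delta^2}d\log\!\big(\tfrac{d+t}{d}\big)$ delivers the claimed inequality. The principal obstacle I anticipate is not algebraic but conceptual: verifying that the adaptive averages $\hat{\zv}_a^{(i)}$ genuinely form a valid basis. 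That is exactly where the choice $r=\tfrac{1}{2}\sqrt{\lambda_0(\{\Phi_a\})/d}$ and Lemma~\ref{lemma:span3} are used, and the convexity argument for $\hat{\zv}_a^{(i)}\in B(\zv_a^{(i)},r)$ is the linchpin. Everything else reduces to the optimality granted by Proposition~\ref{prop:beta} together with the bookkeeping identity $\sum_{c}N_t(a,c)\lv\{\xv(a,c)\in\bar{\Xc}_a^{(i)}\}=N_t(a,\bar{\Cc}_a^{(i)})$.
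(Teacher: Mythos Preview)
Your proposal is correct and follows essentially the same route as the paper: define the adaptive basis $\hat{\zv}_a^{(i)}$ as the empirical average of observed feature vectors in $\bar{\Xc}_a^{(i)}$, use convexity of $B(\zv_a^{(i)},r)$ together with Lemma~\ref{lemma:span3} to certify it is a valid basis with $\lambda_{\min}\geq \lambda_0/4$, expand $\xv(a,c_t)$ in that basis with $\|\alpha\|_1\leq 2\delta$ via Lemma~\ref{lemma:span}, and then plug the induced feasible $\bar{\betav}$ into Proposition~\ref{prop:beta} before invoking Lemma~\ref{lemma:cont_fraction}. Your explicit context-indexed construction of $\bar{\betav}$ and the telescoping computation $\sum_c N_t(a,c)=N_t(a,\bar{\Cc}_a^{(i)})$ are exactly the bookkeeping the paper uses (it phrases the same average over time indices rather than contexts, but the two coincide since $\xv(a,c_\tau)$ depends only on $c_\tau$).
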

\begin{proof}
Let 
\begin{align}\label{eqn:hat_z}
\hat{\zv}_t^{(i)}({a})&:=\frac{\sum_{\tau \in \Tc_t(a,\bar{\Cc}_{a}^{(i)})} \xv(a, c_\tau)}{N_t(a,\bar{\Cc}_{a}^{(i)})}
\end{align}
be the empirical average of the feature vectors over the time slots before $t$ when arm $a$ is pulled under a context in $\bar{\Cc}_{a}^{(i)}$. Since $B(\zv_{a}^{(i)},r)$ is convex, $\hat{\zv}_t^{(i)}(a)\in B(\zv_{a}^{(i)},r)$. Thus, according to Lemma~\ref{lemma:span3}, $\{\hat{\zv}_t^{(i)}(a)\}_i$ form a valid basis for $\Xc_a$. Let $\hat{\Zv}_t(a)$ be the matrix whose columns are $\hat{\zv}_{t}^{(i)}(a)$. Then, we can always obtain a vector $\bar{\betav}$, such that
$\xv(a,c_t)=\hat{\Zv}_t(a)\bar{\betav}$. Besides, 
\begin{align}
\|\bar{\betav}\|_1\leq \frac{l\sqrt{d}}{\sqrt{\lambda_0(\{\Phi_a\})}-\sqrt{d}r}=2\delta.\label{eqn:cont_bar_beta}
\end{align}

Expanding $\hat{\zv}_{t}^{(i)}(a)$, we have
\begin{align}
\xv(a,c_t)=\sum_{i=1}^d \frac{\sum_{\tau \in \Tc_t(a,\bar{\Cc}_{a}^{(i)})} \xv(a, c_\tau)}{N_t(a,\bar{\Cc}_{a}^{(i)})} \bar{\betav}[i],
\end{align} 
i.e., $\xv(a,c_t)$ can be expressed as a linear combination of the feature vectors $\{\xv(a, c_\tau)\}$ for $\tau \in \cup_i\Tc_t(a,\bar{\Cc}_{a}^{(i)})$, where the corresponding coefficients are $\bar{\betav}[i]/N_t(a,\bar{\Cc}_{a}^{(i)})$.

Thus, according to Proposition~\ref{prop:beta}, we have
\begin{align}
\hat{\sigma}_t(a)^2 & \leq \sum_{i=1}^d \frac{\bar{\betav}[i]^2}{N(a, \bar{\Cc}_{a}^{(i)})}\leq \frac{\sum_{i=1}^d\bar{\betav}[i]^2}{\frac{tp}{16}-\frac{8K\alpha_t^2}{\Delta^2}d\log  \left(\frac{d+t}{d}\right)}\leq \frac{\|\bar{\betav}\|_1^2}{\frac{tp}{16}-\frac{8K\alpha_t^2}{\Delta^2}d\log  \left(\frac{d+t}{d}\right)}\label{eqn:cont_D3}\\
&\leq \frac{4\delta^2}{\frac{tp}{16}-\frac{8K\alpha_t^2}{\Delta^2}d\log  \left(\frac{d+t}{d}\right)},\label{eqn:cont_D4}
\end{align}
where (\ref{eqn:cont_D3}) follows from Lemma~\ref{lemma:cont_fraction}, and (\ref{eqn:cont_D4}) follows from (\ref{eqn:cont_bar_beta}).
\end{proof}

\begin{theorem}\label{thm:Dt}
Let  
\begin{align*}
t_3 & = \max \left\{ \frac{1728(2+d)(\delta^2+2Kd)}{\Delta^2p}, 10 \right\} ,\quad t_4  = \max\left\{t_3 \log^2  t_3 ,  \exp \left(\frac{12l^2s^2}{2+d}\right)\right\}.
\end{align*}
Then, under Algorithm~\ref{alg:linUCB-d}, 
\begin{align*}
\Eb[R(\Dc_T)]&\leq t_4 M = O\left(\frac{d(\delta^2+2Kd)}{\Delta^2p}\log^2 \left(\frac{d(\delta^2+2Kd)}{\Delta^2p}\right)\right). 
\end{align*}
\end{theorem}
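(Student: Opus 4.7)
The plan is to mirror the argument used for the finite-case analog of this theorem in Appendix~\ref{appx:Dt}, substituting Lemma~\ref{lemma:cont_sigma} for Lemma~\ref{lemma_beta}. The objective is to prove that for every $t\geq t_4$ the LinUCB-d selection rule satisfies $\Delta>2\alpha_t\hat{\sigma}_t(a)$ for every arm $a$, so that Eqn.~(\ref{eqn:ucb_gap}) in the proof of Lemma~\ref{lemma_nu} forbids pulling any sub-optimal arm at a good time slot. This will imply $\Dc_T\subseteq[1,t_4)$, and the trivial per-step bound $M$ then yields $\Eb[R(\Dc_T)]\leq M\,t_4$.

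First, the second branch in the definition of $t_4$ handles the linear term in $\alpha_t$: exactly as in Eqns.~(\ref{eqn:ls})--(\ref{eqn:alpha_t}), $t\geq\exp(12l^2s^2/(2+d))$ gives $\alpha_t\leq\sqrt{3(2+d)\log t}$. Next, for $t\geq t_3\log^2 t_3$, I will verify that the denominator $\tfrac{tp}{16}-\tfrac{8K\alpha_t^2}{\Delta^2}d\log((d+t)/d)$ appearing in Lemma~\ref{lemma:cont_sigma} is bounded below by a constant multiple of $tp$, and that the resulting bound on $4\alpha_t^2\hat{\sigma}_t(a)^2$ is strictly less than $\Delta^2$. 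After plugging in $\alpha_t^2\leq 3(2+d)\log t$ and using $\log((d+t)/d)\leq\log(1+t)\leq 2\log t$ for $t\geq 2$, both inequalities reduce to a common condition of the form $t/\log^2 t\geq c(2+d)(\delta^2+Kd)/(\Delta^2 p)$ for some absolute constant $c$; the constant $1728$ in the definition of $t_3$ is chosen large enough that $t\geq t_3\log^2 t_3$ makes this condition hold with adequate slack.

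Combining these two pieces gives the strict inequality $(2\alpha_t\hat{\sigma}_t(a))^2<\Delta^2$ for every $t\geq t_4$ and every $a$. Together with Eqn.~(\ref{eqn:ucb_gap})---which states that pulling a sub-optimal arm at a good slot forces $\Delta\leq 2\alpha_t\hat{\sigma}_t(a_t)$---this is a contradiction, so no index in $[t_4,T]$ can lie in $\Dc_T$. Bounding each remaining slot by $M$ then delivers the stated bound.

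The main obstacle will be the extra logarithmic factor $\log((d+t)/d)$ present in Lemma~\ref{lemma:cont_sigma} but absent from its finite-case counterpart Lemma~\ref{lemma_beta}. Since $\alpha_t^2$ already scales like $(2+d)\log t$, this factor lifts the subtracted term in the denominator to order $(2+d)\log^2 t$, and it is precisely this extra $\log t$ that pushes the threshold from $t_3\log t_3$ in the finite case up to $t_3\log^2 t_3$ here. Tracking the numerical constants carefully---in particular verifying that $1728$ absorbs all the slack introduced by bounds such as $\log((d+t)/d)\leq 2\log t$ for $t\geq 2$---is the delicate part of the calculation.
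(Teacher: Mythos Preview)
Your proposal is correct and matches the paper's argument essentially line for line: first bound $\alpha_t^2\leq 3(2+d)\log t$ via the $\exp(12l^2s^2/(2+d))$ branch of $t_4$, then use $t\geq t_3\log^2 t_3$ together with Lemma~\ref{lemma:cont_sigma} to obtain $\Delta>2\alpha_t\hat{\sigma}_t(a)$ for all $a$ and all $t\geq t_4$, so that Eqn.~(\ref{eqn:ucb_gap}) forces $\Dc_T\subseteq[1,t_4)$. The only difference is that the paper uses the sharper comparison $\log((d+t)/d)\leq\log t$ (valid for $d\geq 2$, $t\geq 2$) rather than your $\log((d+t)/d)\leq 2\log t$; this tighter inequality is exactly what makes the specific constant $1728=9\cdot 192$ close, so if you want the stated $t_3$ to work verbatim you should adopt it.
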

\begin{proof}
First, we note that
\begin{align}
\frac{t_4}{(\log t_4)^2}&\tcr{\geq}\frac{t_3\log^2t_3}{(\log t_3+2\log\log t_3)^2}>\frac{t_3\log ^2t_3}{9\log^2t_3}=\frac{t_3}{9}.
\end{align}
Thus, for any $t\geq t_4$, we have
\begin{align}
\frac{t}{(\log t)^2}&\geq\frac{t_3}{9}= \frac{192(2+d)(\delta^2+2Kd)}{\Delta^2p},
\end{align}
which is equivalent to
\begin{align}\label{eqn:t4}
t&\geq \frac{192(2+d)(\delta^2+2Kd)}{\Delta^2p}\log^2 t.
\end{align}
According to (\ref{eqn:alpha_t}), $3(2+d)\log t\geq \alpha_t^2$ for $t>t_4$. Thus, (\ref{eqn:t4}) can be further bounded as
\begin{align}
t&\geq \frac{64\alpha_t^2}{\Delta^2p}(\delta^2+2Kd) \log t\\
&\geq  \frac{64\alpha_t^2}{\Delta^2p}\left(\delta^2+2Kd \log \frac{d+t}{d} \right),
\end{align}
where the last inequality follows from the fact that when $d>1$, $\log t\geq \log \frac{d+t}{d}$. 
Rearranging the terms, we have
\begin{align}
\Delta^2&> \frac{4\alpha_t^2\delta^2 }{\frac{tp}{16}-\frac{8K\alpha_t^2}{\Delta^2}d\log  \left(\frac{d+t}{d}\right)}\geq 4\alpha_t^2\hat{\sigma}_t(a)^2,\quad \forall a\in[K],
\end{align}
where the last inequality follows from Lemma~\ref{lemma:cont_sigma}.

Thus, $\Dc_T$ can only include time slots $t< t_4$.
The bound on $\Eb[R(\Dc_T)]$ then follows.
\end{proof}

\subsection{Put Everything Together}
After obtaining bounds on the expected regret over $\Ac_T$, $\Bc_T$, $\Cc_T$ and $\Dc_T$, we are ready to obtain the result in Theorem~\ref{thm:continuous}. We have
\begin{align}
\Eb[R_T]&\leq \Eb[R(\Ac_T)]+\Eb[R(\Bc_T)]+\Eb[R(\Cc_T)]+\Eb[R(\Dc_T)]\nonumber\\
&\leq \frac{8MKd}{p^2} +\big(\frac{8}{p}+1\big)  M (2+2.5K)+t_4 M \\
&=O\left(\frac{Kd}{p^2}+\frac{d(2\delta^2+Kd)}{\Delta^2p}\log^2 \left(\frac{d(2\delta^2+Kd)}{\Delta^2p}\right)\right) .
\end{align}
\end{document}